\newtheorem{theorem}{Theorem}
\newtheorem{lemma}{Lemma}
\newtheorem{definition}{Definition}
\newtheorem{remark}{Remark}
\newtheorem{claim}{Claim}
\title{Approximate Algorithms For $k$-Sparse Wasserstein Barycenter With Outliers}
\author{
Qingyuan Yang \and
Hu Ding  \thanks{Corresponding author.}
\affiliations
School of Computer Science and Technology\\
University of Science and Technology of China
\emails
yangqingyuan@mail.ustc.edu.cn, 
huding@ustc.edu.cn
}
\begin{document}

\maketitle

\begin{abstract}
Wasserstein Barycenter (WB) is one of the most fundamental optimization problems in  optimal transportation. Given a set of distributions, the goal of WB is to find a new distribution that minimizes the average Wasserstein distance to them. The problem becomes even harder if we restrict the solution to be ``$k$-sparse''. In this paper, we study the $k$-sparse WB problem in the presence of outliers, which is a more practical setting since real-world data  often contains noise. Existing WB algorithms  cannot be directly extended to handle the case with outliers, and thus it is urgently needed to develop some novel ideas. First, we investigate the relation between $k$-sparse WB with outliers and the clustering (with outliers) problems. In particular, we propose a clustering based LP method that yields constant approximation factor for the $k$-sparse WB with outliers  problem. Further, we utilize the coreset technique to achieve the $(1+\epsilon)$-approximation factor for any $\epsilon>0$, if the dimensionality is not high. Finally, we conduct the experiments for our proposed algorithms and illustrate their efficiencies in practice. 
\end{abstract}

\section{Introduction}
\label{sec-intro}
Let $P=\{p_1, p_2, \cdots, p_{n_1}\}$ and $Q=\{q_1, q_2, \cdots, q_{n_2}\}$ be two sets of weighted points in the Euclidean space $\mathbb{R}^d$, where we use $w_P(\cdot)$ ({\em resp.,} $w_Q(\cdot)$) to denote the non-negative weight function for each point of $P$ ({\em resp.,} $Q$); we also assume that $P$ and $Q$ have the same total weight, {\em i.e.,} $\sum^{n_1}_{i=1}w_P(p_i)=\sum^{n_2}_{j=1}w_Q(q_j)=n>0$. For any $l\geq 1$, the seminal  \textbf{Wasserstein Distance}~\cite{villani2021topics} is to measure their minimum transportation cost: 
\vspace{-5pt}
\begin{eqnarray}
\mathcal{W}(P, Q)=\min_{F} \Big(\sum^{n_1}_{i=1}\sum^{n_2}_{j=1}f_{ij}||p_i-q_j||^l\Big)^{\frac{1}{l}}, \label{for-wd}
\end{eqnarray}
where $||\cdot||$ is the Euclidean distance, and the flow set $F=\{f_{ij}\mid 1\leq i\leq n_1, 1\leq j\leq n_2\}$ from $P$ to $Q$ should satisfy: $\sum^{n_1}_{i=1} f_{ij}=w_Q(q_j)$ for any $j$, and $\sum^{n_2}_{j=1} f_{ij}=w_P(p_i)$ for any $i$. We usually set $l=1$ or $2$ for most practical applications. Also note that ``$||p_i-q_j||^l$'' in (\ref{for-wd}) can be  replaced by other distance function if the input $P$ and $Q$ are in some other metric space rather than the Euclidean space. The Wasserstein distance is one of the most fundamental topics in mathematics in past decades, and recently finds a broad range of applications in machine learning, such as image retrieval~\cite{rubner2000earth}, generative model~\cite{DBLP:conf/icml/ArjovskyCB17}, and robust optimization~\cite{kuhn2019wasserstein}. 

In this paper, we focus on an important optimization problem from Wasserstein distance, \textbf{$k$-sparse Wasserstein Barycenter (WB)}~\cite{DBLP:journals/jco/BorgwardtP21}. Suppose the input contains $m\geq 1$ weighted point sets $P_1, P_2, \cdots, P_m$, the problem of $k$-sparse WB for a given $k\in \mathbb{Z}^+$ is to construct a new weighted point set $S$ with the support size $|\mathtt{supp}(S)|=k$, such that 
\vspace{-5pt}
\begin{eqnarray}
\frac{1}{m}\sum^m_{j=1}\mathcal{W}^l(S, P_j) \label{for-kwb}
\end{eqnarray}
is minimized. The Wasserstein barycenter~\cite{agueh2011barycenters} is a natural representation for the average of a set of given distributions. Recently, it has been applied to a number of real-world problems, such as medical imaging~\cite{DBLP:conf/ipmi/GramfortPC15}, Bayesian learning~\cite{srivastava2018scalable}, clustering~\cite{ho2017multilevel},  and natural language processing~\cite{singh2020context}. In practice, we often prefer a simple representation of the barycenter with low complexity, and thus we have the sparsity requirement  ``$|\mathtt{supp}(S)|=k$''.  

Note that it is quite challenging to achieve a high-quality solution with theoretical guarantee  for the objective (\ref{for-kwb}); this is  mainly due to two aspects, where one is from the inherent hardness for computing the Wasserstein distance, and the other reason is due to the restriction of ``$|\mathtt{supp}(S)|=k$'' which is a troublesome combinatorial constraint for the optimization. For example, \cite{DBLP:journals/jco/BorgwardtP21} proved that the $k$-sparse WB problem is NP-hard, even if $m=3$ and the dimensionality $d=2$; on the other hand, if the $k$-sparse restriction is removed, one can compute Wasserstein barycenter in polynomial time in low dimensional space~\cite{DBLP:journals/jmlr/AltschulerB21}. 

We consider an even harder but also more practical variant of the $k$-sparse WB problem called ``\textbf{$k$-sparse WB with outliers}''. Roughly speaking, we allow a certain fraction of outliers for the mapping from each $P_j$ to $S$ in (\ref{for-kwb}) (the formal definition is shown in Section~\ref{sec-pre}). The motivation of allowing outliers is very natural in practical scenarios. Suppose we want to compute the WB for a set of images~\cite{cuturi2014fast}; it is common that the input images may contain noises and/or some irrelevant objects. To achieve a more robust solution, it is better to compute a barycenter that matches each input image partially (the un-matched part can be viewed as outliers).

Actually, the study on Wasserstein distance with outliers  has already attracted attentions  recently in machine learning~\cite{chapel2020partial,mukherjee2021outlier,le2021robust,nietert2022outlier}. However, the existing algorithms for computing WB with outliers are still quite limited, to the best of our knowledge. Though several robust models and algorithms for WB  have been studied~\cite{DBLP:conf/nips/CazellesTF21,le2021robust}, their methods do not explicitly handle outliers.  In the current article, we consider developing approximate algorithms for $k$-sparse WB with outliers. For the sake of simplicity, we fix $l=2$ for the Wasserstein distance in (\ref{for-wd}); actually our results can be easily extended for any $l\geq 1$. 
Our main ideas rely on the vanilla clustering algorithms.  Suppose we have an $\alpha$-approximation algorithm $\mathcal{A}$ for $k$-means clustering ({\em e.g.,} the algorithms from~\cite{kanungo2002local,Arthur2007kmeansTA}) and a $\beta$-approximation algorithm $\mathcal{B}$ for $k$-means clustering with outliers ({\em e.g.,} the algorithms from~\cite{friggstad2019approximation,gupta2017local}), where $\alpha, \beta\geq 1$. \textbf{Our contributions are as follows.}
\begin{itemize}
\item Our first contribution is to illustrate the relation between $k$-sparse WB with outliers and the problems of $k$-means clustering and $k$-means clustering with outliers. We are aware that the relation between the vanilla WB (without outliers) problem and $k$-means clustering has been discussed before~\cite{cuturi2014fast}. However, it is much more challenging to analyze the case with outliers; for example, even one single outlier can seriously destroy the clustering performance. In particular, we need to develop some significantly new insight to investigate the influence of outliers in the context of the complicated Wasserstein flows. We show that one can achieve an $O(\alpha)$-approximate solution by utilizing the algorithm $\mathcal{A}$, but the returned barycenter has the support size larger than $k$. 
If we want to keep the support size being equal to $k$, we can take advantage of $\mathcal{B}$ instead and achieve an $O(\beta)$-approximate solution. 

\item We further study the problem in low-dimensional space ({\em e.g.,} computing the WB for a set of 2D images). Our idea follows the aforementioned clustering method, but in a more  sophisticated  manner. We utilize the low-dimensional coreset technique~\cite{DBLP:conf/stoc/Har-PeledM04} to generate a set of ``anchor'' points, and then build a set of non-uniform grids surrounding them. For any $\epsilon>0$, if we relax the ``$k$-sparse'' requirement, we can compute a WB that achieves a $(1+\epsilon)$ approximation factor based on those grids. Moreover, our result can be generalized to any metric space with constant doubling dimension ({\em e.g.,} the input distributions could have small intrinsic dimension even in high-dimensional space~\cite{roweis2000nonlinear}).  
\end{itemize}

\begin{remark}
\label{rem-result1}
The  algorithms $\mathcal{A}$ and $\mathcal{B}$ can be bi-criteria approximation algorithms, that is, they can return more than $k$ cluster centers (if we relax the $k$-sparse requirement). The benefit of using bi-criteria approximation algorithms  is that they often achieve lower $\alpha$ and $\beta$. For example, the popular $k$-means++ algorithm yields an $O(\log k)$ approximation factor~\cite{Arthur2007kmeansTA}; but if we run the $k$-means++ seeding procedure more than $k$ steps (say $\lambda k$ with some constant integer $\lambda>1$), the approximation factor can be reduced to be $O(1)$~\cite{aggarwal2009adaptive}. 
\end{remark}

\subsection{Related Works}
\label{sec-relate}

\textbf{Wasserstein distance.} The research on computing the Wasserstein distance (\ref{for-wd}) has gained a great amount of attentions in theory and various practical applications. It is easy to see that (\ref{for-wd}) can be viewed as a min-cost flow problem that one can apply the network simplex algorithm to solve it~\cite{ahuja1988network}.
In machine learning community, \cite{DBLP:conf/nips/Cuturi13} proposed a new variant  called ``Sinkhorn distance'' that can be computed much faster than the original Wasserstein distance. Following Cuturi's work,  \cite{DBLP:conf/nips/AltschulerWR17} proposed a nearly-linear time Wasserstein distance algorithm.  \cite{DBLP:conf/nips/GenevayCPB16} proposed a stochastic algorithm for solving large-scale optimal transportation. Some recent improvements also include~\cite{dvurechensky2018computational,lin2019efficient}. 
As mentioned before, several works on the Wasserstein distance with outliers problem and its applications ({\em e.g.,} outlier detection and shape matching) were also proposed recently~\cite{chapel2020partial,mukherjee2021outlier,le2021robust,nietert2022outlier}.

\textbf{Wasserstein barycenter.} The study of Wasserstein barycenter mainly focuses on  two different types.   One is called ``\textbf{fixed-support WB}'', where the barycenter $S$ in  (\ref{for-kwb}) has a fixed support (the support size can be very large) and the task is to determine the weight  distribution over the support such that the average Wasserstein distance to the given $m$ input distributions is minimized. The other one is called ``\textbf{free-support WB}'' where the  barycenter $S$ can have the support that locates anywhere in the space.  The former problem is relatively easier to solve, since the weight distribution can be obtained by computing a linear programming (LP)~\cite{auricchio2019computing}. A number of efficient algorithms for fixed-support WB have been proposed.
For example, \cite{claici2018stochastic} presented a stochastic algorithm for WB; \cite{ge2019interior} developed a novel interior-point method by removing redundant constraints for the LP; \cite{lin2020fixed} provided a fast iterative Bregman projection algorithm.

On the other hand, the  free-support WB problem is more challenging. Recently, \cite{altschuler2022wasserstein} showed that it is NP-hard to compute even a  WB with $\epsilon$ additive error in Euclidean space (if $d$ is not constant); only for low-dimensional space, one can obtain the optimal WB in polynomial time~\cite{DBLP:journals/jmlr/AltschulerB21}.  \cite{borgwardt2022lp} provided a $2$-approximate WB in Euclidean space. But those algorithms cannot guarantee small support for the obtained barycenter ({\em e.g.,} the support can be as large as $O(\sum^m_{j=1}|\mathtt{supp}(P_j)|)$). 

If we further require $|\mathtt{supp}(S)|=k$, the problem can be much more challenging; as mentioned before the $k$-sparse WB problem is NP hard even for $m=3$ and $d=2$~\cite{DBLP:journals/jco/BorgwardtP21}. Most existing algorithms for $k$-sparse WB rely on the idea of alternating minimization, that is, they iteratively update the location of the $k$ sparse support of $S$ and the weight distribution, until the solution converges to some local optimum~\cite{cuturi2014fast,ye2017fast,claici2018stochastic,ge2019interior}. 

\section{Preliminaries}
\label{sec-pre}

To formally define the $k$-sparse WB with outliers problem, we should provide the definition for  Wasserstein distance with outliers first (the vanilla Wasserstein distance (\ref{for-wd}) is the special case with zero outlier). For any weighted point set $P\subseteq \mathbb{R}^d$, we use $w_P(p)$ and $w_P(S)$ to denote the non-negative weight of a point $p\in P$ and the total weight of a subset  $S\subset P$, respectively. We also define the relation ``$\preceq$'' between two point sets $P$ and $P'$: $P'\preceq P$ if $\mathtt{supp}(P')=\mathtt{supp}(P)$ and $w_{P'}(p)\leq w_{P}(p)$  for  any $p\in\mathtt{supp}(P)$.   Further, we define  the  set $\mathbb{P}^P_{-z}=\{P'\subset \mathbb{R}^d\mid P'\preceq P \text{ and }w_{P'}(P')=w_{P}(P)-z\}$ for any given non-negative value $z\leq w_P(P)$. 

\begin{definition}[Wasserstein distance with $z$ outliers]
\label{def-fwd}
Let $n>z\geq 0$. Suppose $P=\{p_1, p_2, \cdots, p_{n_1}\}$ and $Q=\{q_1, q_2, \cdots, q_{n_2}\}$ are two sets of  weighted points in   $\mathbb{R}^d$; also $\sum^{n_1}_{i=1}w_P(p_i)=n $ and $ \sum^{n_2}_{j=1}w_Q(q_j)=n-z$.  
The \textbf{Wasserstein distance with $z$ outliers} from $P$ to $Q$ is 
\begin{eqnarray}
\mathcal{W}_{-z}(P, Q)=\min_{P^{\prime}\in \mathbb{P}^P_{-z}} \mathcal{W}(P^{\prime}, Q).\label{for-fwd}
\end{eqnarray} 
The set $P^Q=\arg\min_{P^{\prime}\in \mathbb{P}^P_{-z}} \mathcal{W}(P^{\prime}, Q)$ is called ``\textbf{the inliers of $P$ induced by $Q$}''.
\end{definition}

\begin{remark}
\label{rem-kwoutlier}
The objective function (\ref{for-fwd}) was  similarly defined as ``unbalanced optimal transport'' before~\cite{chapel2020partial,pham2020unbalanced}. 
Also our definition for the function $\mathcal{W}_{-z}(P,Q)$ is one-side, {\em i.e.,} only $P$ contains outliers; actually, a more general definition can be two-side that both  $P$ and $Q$ may contain outliers. We refer the reader to the recent articles~\cite{mukherjee2021outlier,nietert2022outlier} for more detailed discussion. Due to the space limit, we only present our results of one-side here, and leave the results for two-side (which can be easily extended from the one-side results) to our supplement. 
\end{remark}

\textbf{Computing $\mathcal{W}_{-z}(P, Q)$.} Obviously, the total flow $\sum^{n_1}_{i=1}\sum^{n_2}_{j=1} f_{ij}=n-z$ in Definition~\ref{def-fwd}. So the missing flows with the total weight $z$ can be viewed as the outliers. Actually the problem of Wasserstein distance with $z$ outliers can be easily reduced to the vanilla Wasserstein distance problem (\ref{for-wd}) via a ``dummy point'' idea that was studied in~\cite{chapel2020partial,DBLP:journals/corr/abs-2209-02905} before. We add a dummy point $q_*$ to $Q$ with the weight equal to $z$; also we force the ``distance'' between $q_*$ and each $p_i$ to be $0$. Note that we cannot find such a real point $q_*$ in the space, where in reality we just need to set all the entries corresponding to the line of $q_*$ to be $0$ in the $n_1\times (n_2+1)$ distance matrix. Then we can run any off-the-shelf Wasserstein distance algorithm, {\em e.g.,} the simplex network algorithm~\cite{ahuja1988network} or the sinkhorn distance algorithm~\cite{DBLP:conf/nips/Cuturi13}, to compute the solution. Intuitively, the dummy point $q_*$ absorbs the furthest $z$ outliers from $P$. 

\begin{claim}
\label{cl-fwd}
Computing $\mathcal{W}_{-z}(P, Q)$ is equivalent to computing $\mathcal{W}(P, Q\cup\{q_*\})$. 
\end{claim}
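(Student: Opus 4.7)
The plan is to establish the equivalence by exhibiting a cost-preserving bijection between feasible flows of the two problems, which then matches their optima. The role of the dummy point $q_*$ is exactly to ``soak up'' the outlier mass at zero cost, so its incoming flow should correspond precisely to the amount of mass discarded from each $p_i$ when forming an inlier set $P'\in \mathbb{P}^P_{-z}$.

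First, I would argue the forward direction $\mathcal{W}_{-z}(P,Q)\ge \mathcal{W}(P,Q\cup\{q_*\})$. Given any $P'\in\mathbb{P}^P_{-z}$ and a feasible flow $F'=\{f'_{ij}\}$ from $P'$ to $Q$, I construct a flow $F$ from $P$ to $Q\cup\{q_*\}$ by setting $f_{ij}=f'_{ij}$ for $j\le n_2$ and $f_{i,*}=w_P(p_i)-w_{P'}(p_i)\ge 0$. The row marginals at $p_i$ sum to $w_{P'}(p_i)+(w_P(p_i)-w_{P'}(p_i))=w_P(p_i)$, the column marginals at $q_j$ ($j\le n_2$) are preserved, and the column marginal at $q_*$ is $\sum_i(w_P(p_i)-w_{P'}(p_i))=n-(n-z)=z$, as required. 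Because the distance from every $p_i$ to $q_*$ is defined to be $0$, the added flows contribute nothing to the cost, so the overall transportation cost equals that of $F'$.

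Next, for the reverse direction, I would take any feasible flow $F=\{f_{ij}\}$ from $P$ to $Q\cup\{q_*\}$ and define $w_{P'}(p_i)=w_P(p_i)-f_{i,*}\ge 0$ with $\mathtt{supp}(P')\subseteq \mathtt{supp}(P)$. Since $\sum_i f_{i,*}=z$, we have $w_{P'}(P')=n-z$, so $P'\in\mathbb{P}^P_{-z}$. Restricting the flow $F$ to the columns $j\le n_2$ produces a feasible transportation plan from $P'$ to $Q$ with identical cost (again because the $q_*$ terms cost $0$). Taking the minimum over feasible flows in each problem then gives equality of the two optimal values.

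The only subtle aspect is confirming that the construction is genuinely bijective on flows modulo cost, so that optimality transfers between the two formulations; once the marginal bookkeeping above is verified, this follows immediately. I do not anticipate a real obstacle here — the argument is a straightforward ``lifting/restriction'' correspondence enabled by the zero-distance convention for $q_*$, and the main care is simply to check non-negativity of $w_P(p_i)-w_{P'}(p_i)$ and $w_P(p_i)-f_{i,*}$, both of which hold by the feasibility conditions $P'\preceq P$ and $\sum_j f_{ij}=w_P(p_i)$.
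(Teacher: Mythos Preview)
Your proposal is correct and follows essentially the same approach as the paper: both establish a cost-preserving correspondence between feasible flows of the two problems by routing the discarded mass to the zero-distance dummy $q_*$ (and conversely reading off $P'$ from the $q_*$-column of a full flow), then conclude equality of optima. The paper in fact proves the more general two-side version (with dummies $p_*$ and $q_*$) in the appendix, but specialized to the one-side case the argument is identical to yours; the only cosmetic discrepancy is that the paper's definition of $\mathbb{P}^P_{-z}$ requires $\mathtt{supp}(P')=\mathtt{supp}(P)$ rather than $\subseteq$, which is harmless since zero-weight points can be formally retained in the support.
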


\begin{definition}[$k$-sparse WB with $z$ outliers]
\label{def-kwboutlier}
Let $n>z\geq 0$. Suppose the input $\mathbb{P}$ contains $m\geq 1$ weighted point sets $P_1, P_2, \cdots, P_m$ where each $P_j$ has the total weight $n$. Then the problem of $k$-sparse Wasserstein Barycenter with $z$ outliers for a given $k\in \mathbb{Z}^+$ is to construct a new weighted point set $S$ with total weight $n-z$ and the size $|\mathtt{supp}(S)|=k$, such that 
\begin{eqnarray}
\mathtt{Cost}_{-z}(\mathbb{P}, S)=\frac{1}{m}\sum^m_{j=1}\mathcal{W}_{-z}^2(P_j, S) \label{for-kwboutliers}
\end{eqnarray}
is minimized. Throughout this paper, we always use ``$S_{\mathtt{opt}}$'' to denote the optimal solution of (\ref{for-kwboutliers}).  
\end{definition}

\begin{remark}
\label{rem-fixoutlier}
\textbf{(Fixed-support WB with outliers) } As mentioned in Section~\ref{sec-relate}, if we remove the ``$k$-sparse'' requirement and let the support of $S$ be fixed to a  given set $G$ ($|G|$ can be larger than $k$), the problem of WB can be solved by a linear programming~\cite{ge2019interior,lin2020fixed}. Through the idea of Claim~\ref{cl-fwd}, the fixed-support WB with outliers problem can be also solved by using LP. Namely, we add a dummy point $g_*$ to $G$ and set its weight to be $z$; then the problem is exactly equivalent to the vanilla fixed-support WB problem on $G\cup\{g_*\}$. For completeness, we provide the detailed formulation in our supplement. 
\end{remark}

\textbf{Relation to $k$-means clustering with $z$ outliers.} To better illustrate our algorithms for $k$-sparse WB with  outliers, we need to elaborate on its relation to $k$-means clustering with  outliers first.
Given a set $P$ of weighted points in  $\mathbb{R}^d$ with the total weight $n$, the goal of the vanilla $k$-means is to find $k$ cluster centers $C=\{c_1, c_2, \cdots, c_k\}$, such that each point of $P$ is assigned to its nearest cluster center and the total weighted squared distances 
$\mathcal{S}(P,C)=\sum_{p\in P}w_P(p)\cdot \min_{1\leq s\leq k}||p-c_s||^2$  
is minimized. If we allow to discard $z$ outliers, the goal becomes to find not only the $k$ cluster centers, but also a set $P^{\prime}\in \mathbb{P}^P_{-z}$, such that $\mathcal{S}(P^{\prime},C)$ is minimized. 
We denote this optimal cost as $\mathtt{Mean}^{k}_{-z}(P)$, and let $\mathtt{C}^{k}_{-z}(P)$ denote the set of optimal cluster centers $\{c_1, c_2, \cdots, c_k\}$ with the weight $w_{\mathtt{C}^{k}_{-z}}(c_s)=$ the total weight of the $s$-th cluster, $1\leq s\leq k$. If $z=0$, we use $\mathtt{Mean}^{k}(P)$ and $\mathtt{C}^{k}(P)$ for simplicity.

\textbf{(1)} First, we consider the basic case $m=1$ for $k$-sparse WB with $z$ outliers. It is easy to see that it is equivalent to the $k$-means clustering with $z$ outliers on $P_1$. 
\begin{claim}
\label{cl-m1} Suppose $m=1$. 
The set $\mathtt{C}^{k}_{-z}(P_1)$ forms the optimal solution for $k$-sparse WB with $z$ outliers.  Namely,  for any $|\mathtt{supp}(Q)|=k$, {$\mathtt{Mean}^{k}_{-z}(P_1)=\mathcal{W}_{-z}^2(P_1, \mathtt{C}^{k}_{-z}(P_1))\leq\mathcal{W}_{-z}^2(P_1, Q)$. }
\end{claim}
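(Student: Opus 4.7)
The plan is to prove the equality and the inequality separately, exploiting the correspondence between transportation flows and clustering assignments. The rough strategy is: (a) convert the optimal $k$-means-with-outliers solution on $P_1$ into a feasible transportation plan from $P_1$ to $\mathtt{C}^{k}_{-z}(P_1)$ whose cost is exactly $\mathtt{Mean}^{k}_{-z}(P_1)$, and (b) show that every competitor $Q$ with $|\mathtt{supp}(Q)|=k$ has $\mathcal{W}_{-z}^2(P_1, Q) \geq \mathtt{Mean}^{k}_{-z}(P_1)$ via a nearest-center rerouting argument.

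For (a), I would let $P_1' \in \mathbb{P}^{P_1}_{-z}$ be the optimal inlier set and $\{c_1,\dots,c_k\}$ the optimal $k$-means centers. Since $w_{\mathtt{C}^{k}_{-z}(P_1)}(c_s)$ is defined to be the total mass of cluster $s$, the set $\mathtt{C}^{k}_{-z}(P_1)$ has total weight $n-z$, matching the requirement of Definition~\ref{def-kwboutlier}. Setting $f_{ij} = w_{P_1'}(p_i)$ whenever $p_i$ is assigned to $c_j$ (and $0$ otherwise) yields a flow from $P_1'$ to $\mathtt{C}^{k}_{-z}(P_1)$ satisfying both marginal constraints, whose transportation cost is precisely $\mathtt{Mean}^{k}_{-z}(P_1)$. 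This gives $\mathcal{W}_{-z}^2(P_1, \mathtt{C}^{k}_{-z}(P_1)) \leq \mathtt{Mean}^{k}_{-z}(P_1)$. For (b), I would take an arbitrary $Q$ with $|\mathtt{supp}(Q)|=k$ and $w_Q(Q)=n-z$, together with an optimal inlier set $P_1^Q \in \mathbb{P}^{P_1}_{-z}$ and flow $F^*=\{f_{ij}^*\}$ realizing $\mathcal{W}_{-z}^2(P_1, Q)$. The key inequality is $\|p_i-q_j\|^2 \geq \min_{q\in\mathtt{supp}(Q)}\|p_i-q\|^2$ for every routed unit of mass; combined with $\sum_j f_{ij}^* = w_{P_1^Q}(p_i)$ this gives
\[
\mathcal{W}_{-z}^2(P_1, Q) \; \geq \; \sum_{p \in P_1^Q} w_{P_1^Q}(p)\,\min_{q\in\mathtt{supp}(Q)}\|p-q\|^2,
\]
and the right-hand side is itself a feasible $k$-means-with-outliers cost on $P_1$ using the $k$ centers $\mathtt{supp}(Q)$, hence at least $\mathtt{Mean}^{k}_{-z}(P_1)$. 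Chaining with (a) yields $\mathcal{W}_{-z}^2(P_1, Q) \geq \mathtt{Mean}^{k}_{-z}(P_1) \geq \mathcal{W}_{-z}^2(P_1, \mathtt{C}^{k}_{-z}(P_1))$, which simultaneously forces equality in (a) and establishes optimality of $\mathtt{C}^{k}_{-z}(P_1)$.

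The main subtlety will be the mismatch between the weight profile prescribed by $Q$ and the weights induced by a nearest-center clustering of $P_1^Q$: a Wasserstein flow must match the marginals on both sides, whereas $k$-means freely assigns each point to its closest center. My lower bound argument sidesteps this by using the nearest-center rerouting \emph{purely for accounting}, without demanding that the rerouted flow satisfy $Q$'s marginals. Once that observation is in place, the uniformity of the bound in $Q$ immediately yields the claimed optimality of $\mathtt{C}^{k}_{-z}(P_1)$.
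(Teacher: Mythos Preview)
Your argument is correct. The paper itself does not spell out a proof of this claim --- it simply asserts that the case $m=1$ ``is equivalent to the $k$-means clustering with $z$ outliers on $P_1$'' --- and your two-sided argument (constructing a feasible flow from the clustering assignment for the upper bound, and rerouting each unit of flow to its nearest support point for the lower bound) is exactly the natural way to unpack that assertion. The observation you flag about not needing the rerouted flow to match $Q$'s marginals is the right way to handle the one nontrivial point.
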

 
\textbf{(2)} Then we consider the general case $m\geq 2$. From Claim~\ref{cl-m1}, we know that the barycenter actually induces $k$ clusters and $z$ outliers on $P_1$ for the case $m=1$. When $m\geq 2$, the $k$ points of the barycenter also induce $k$ clusters on $\cup^m_{j=1}P_j$ but with additional constraint. 
\begin{claim}
\label{cl-m2}
The optimal solution of $k$-sparse WB with $z$ outliers is equivalent to solving the  $k$-means clustering with $mz$ outliers on $\cup^m_{j=1}P_j$ with the following constraint: for each obtained cluster $U_s$, $1\leq s\leq k$, $w_{U_s}(U_s\cap P_1)=w_{U_s}(U_s\cap P_2)=\cdots=w_{U_s}(U_s\cap P_m)$.
\end{claim}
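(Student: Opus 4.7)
The plan is to construct a weight-preserving bijection between feasible solutions of the $k$-sparse WB with $z$ outliers problem and feasible solutions of the constrained $k$-means with $mz$ outliers problem on $\cup_{j=1}^{m}P_j$, and to verify that under this bijection the two objective values differ by exactly the factor $m$. Because both problems then optimize over the same feasibility set with proportional objectives, their optima will coincide through the bijection.

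Forward direction (WB $\to$ constrained clustering). Given a feasible barycenter $S=\{s_1,\ldots,s_k\}$ and optimal flows $f^{(j)}$ realizing $\mathcal{W}^2_{-z}(P_j,S)$ for each $j$, I would build a clustering on $\cup^m_{j=1}P_j$ by declaring that mass $f^{(j)}_{p,s}$ of each $p\in P_j$ is assigned to cluster $U_s$ (centered at $s$) while the $z$ mass left unassigned from each $P_j$ becomes an outlier. The flow-conservation constraints in Definition~\ref{def-fwd} then yield $w_{U_s}(U_s\cap P_j)=w_S(s)$ for every $j$, so the balance constraint in the claim holds automatically; the total outlier mass is $mz$; and summing the resulting squared distances gives clustering cost $=\sum_j \mathcal{W}^2_{-z}(P_j,S) = m\cdot\mathtt{Cost}_{-z}(\mathbb{P},S)$.

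Reverse direction (constrained clustering $\to$ WB). Given feasible (fractional) clusters $U_1,\ldots,U_k$ centered at $C=\{c_1,\ldots,c_k\}$ obeying the balance constraint, let $v_s := w_{U_s}(U_s\cap P_j)$, which is well-defined and independent of $j$ by the constraint, and define the barycenter $S$ with support $C$ and weights $w_S(c_s)=v_s$; then $\sum_s v_s = n-z$ because each $P_j$ contributes mass exactly $n-z$ to the clusters in total. The cluster memberships, split by source $P_j$, give valid flows from each $P_j$ to $S$, so $\mathcal{W}^2_{-z}(P_j,S)$ is at most the contribution of $P_j$ to the clustering cost. Summing over $j$ and dividing by $m$ gives $\mathtt{Cost}_{-z}(\mathbb{P},S)\le \frac{1}{m}\cdot(\text{clustering cost})$, so any optimal clustering maps to an at-least-as-good WB.

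The main obstacle is confirming that the bijection really sends optima to optima (rather than merely giving bounds in both directions). This is forced by combining the two reductions: starting from an optimal constrained clustering of cost $\mathtt{OPT}_{\mathrm{clust}}$, the reverse direction produces a barycenter of cost at most $\mathtt{OPT}_{\mathrm{clust}}/m$; applying the forward direction to this barycenter produces a clustering of cost exactly $m$ times the WB cost, which cannot undercut $\mathtt{OPT}_{\mathrm{clust}}$. Hence the two inequalities are tight and the optima are in bijection, up to the scaling factor $m$. A subtle point worth flagging in the write-up is that the constrained clustering must be interpreted with fractional cluster memberships (allowing the mass of an individual $p\in P_j$ to be split among several clusters), which matches the fractional nature of Wasserstein flows and is in general necessary for the balance constraint to be attainable.
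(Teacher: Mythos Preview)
Your proposal is correct and follows essentially the same approach as the paper's own proof: both establish the equivalence by constructing, in each direction, a feasible solution of one problem from any feasible solution of the other, with the two objectives differing by exactly the factor $m$; your forward and reverse directions correspond precisely to the paper's arguments (2) and (1), respectively. Your explicit remark that the constrained clustering must allow fractional memberships is a worthwhile clarification that the paper leaves implicit, and your closing tightness argument, while slightly more elaborate than necessary (the two one-sided inequalities $\mathtt{OPT}_{\mathrm{WB}}\le \mathtt{OPT}_{\mathrm{clust}}/m$ and $\mathtt{OPT}_{\mathrm{clust}}\le m\cdot\mathtt{OPT}_{\mathrm{WB}}$ already suffice), is correct.
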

Actually the above constrained $k$-means clustering with outliers problem is a special {\em fairness clustering with outliers} problem~\cite{bera2019fair}: suppose each $P_j$ has a unique color, and we require that each color only takes $\frac{1}{m}$ of the total weight in each cluster $C_s$.
Though a number of algorithms have been proposed for fairness clustering, the study on the case with outliers is still quite limited, to the best of our knowledge.

\section{Our Clustering based LP  Algorithm}
\label{sec-alg}

In this section we propose a clustering based LP algorithm for solving the $k$-sparse WB with outliers problem, where our main idea is inspired by the observations of Claim~\ref{cl-m1} and  Claim~\ref{cl-m2}. Though the algorithm is simple, the analysis on the quality is the major challenge since the inliers and outliers are mixed without any prior knowledge. 

\paragraph{The clustering based LP algorithm.} 
Let $\mathbb{P}=\{P_1, P_2,$ $ \cdots, P_m\}$ be an instance of the $k$-sparse WB with outliers problem as Definition~\ref{def-kwboutlier}.   Assume $w_{\mathtt{min}}=\min_{1\leq j\leq m, p\in P_j}w_{P_j}(p)$, and denote by $\hat{z}=\lceil z/w_{\mathtt{min}}\rceil$ for convenience.  Our algorithm has the following three steps. 

\begin{enumerate}[label=\textbf{(\arabic*)}]
    \item We first run an $\alpha$-approximate $(k+\hat{z})$-means clustering algorithm $\mathcal{A}$ on each $P_j$ and obtain the  set $T_j$ of its $\lambda (k+\hat{z})$ cluster centers with some integer $\lambda\geq 1$ (as discussed in Remark~\ref{rem-result1}, we can run a bi-criteria approximation algorithm).
    \item Then, for each $T_j$ we consider the following fixed-support WB with outliers problem (as described in Remark~\ref{rem-fixoutlier}): the support of the barycenter is fixed to be the $O(k+\hat{z})$ points of $T_j$, and compute the optimal weight distribution over $T_j$ via LP.
    \item Let $\tilde{T}_1, \cdots, \tilde{T}_m$ be the obtained $m$ candidate WBs from Step (2). We return the best one, say $\tilde{T}_{j_0}$, which has the smallest cost over the $m$ candidates with respect to the cost (\ref{for-kwboutliers}). 
\end{enumerate}

The  theoretical quality guarantee of $\tilde{T}_{j_0}$ is given in Theorem~\ref{the-result1}. 
Since $|\mathtt{supp}(\tilde{T}_{j_0})|=O(k+\hat{z})$ that violates the $k$-sparse requirement, we can replace the algorithm $\mathcal{A}$ by a  $\beta$-approximate $k$-means clustering with $z$ outliers algorithm $\mathcal{B}$ in the above method. Then each $T_j$ should have the support size exactly equal to $k$ for $j=1,2, \cdots, m$. We still select the best candidate WB  $\tilde{T}_{j_0}$ by the same manner, and  return it as the solution for $k$-sparse WB with outliers. The improved result is shown in Theorem~\ref{the-result1-2}. 

\begin{theorem}
\label{the-result1}
Our clustering based LP Algorithm returns a solution $\tilde{T}_{j_0}$ for $k$-sparse WB with outliers and achieves the following quality guarantee:
\begin{eqnarray}
\mathtt{Cost}_{-z}(\mathbb{P}, \tilde{T}_{j_0})\leq (2+\sqrt{\alpha})^2\cdot\mathtt{Cost}_{-z}(\mathbb{P}, S_{\mathtt{opt}}).
\end{eqnarray}
\end{theorem}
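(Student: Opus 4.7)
The plan is to exhibit, for a carefully chosen index $j^\star$, a specific weighted point set supported on $T_{j^\star}$ whose cost is at most $(2+\sqrt{\alpha})^2\mathtt{OPT}$, where $\mathtt{OPT}:=\mathtt{Cost}_{-z}(\mathbb{P},S_{\mathtt{opt}})$. Since the LP step returns the weight-optimal barycenter $\tilde T_{j^\star}$ on the fixed support $T_{j^\star}$, this forces $\mathtt{Cost}_{-z}(\mathbb{P},\tilde T_{j^\star})$ to lie below the same bound; and since $\tilde T_{j_0}$ is the minimizer over $j=1,\dots,m$, the guarantee then passes to $\tilde T_{j_0}$. Throughout, write $\mathtt{OPT}_j:=\mathcal{W}^2_{-z}(P_j,S_{\mathtt{opt}})$, so that $\mathtt{OPT}=\tfrac{1}{m}\sum_j\mathtt{OPT}_j$.

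The first step I would carry out is the clustering-level inequality $\mathtt{Mean}^{k+\hat z}(P_j)\le\mathtt{OPT}_j$. A feasible $(k+\hat z)$-means solution can be obtained by placing $k$ centers on $\mathtt{supp}(S_{\mathtt{opt}})$ together with one extra center at every point of $P_j$ that carries positive outlier mass in the optimal flow to $S_{\mathtt{opt}}$. Splitting each point of $P_j$ into atomic units of weight $w_{\mathtt{min}}$ makes the outlier LP integral, so the outlier support has at most $z/w_{\mathtt{min}}\le\hat z$ points. The Wasserstein flow from $P_j^{S_{\mathtt{opt}}}$ to $S_{\mathtt{opt}}$ witnesses a (not-necessarily-nearest-center) assignment of cost $\mathtt{OPT}_j$, so the nearest-center assignment is at most as expensive. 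Applying the $\alpha$-approximation guarantee, the returned centers $T_j$ satisfy $\sum_{p\in P_j}w_{P_j}(p)\|p-\phi(p)\|^2\le\alpha\cdot\mathtt{OPT}_j$, where $\phi:P_j\to T_j$ denotes the nearest-center map.

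Next I would construct an auxiliary candidate barycenter $T_j^{\mathtt{in}}$ supported on $T_j$, with weights $w(c):=\sum_{p\in P_j^{S_{\mathtt{opt}}}:\phi(p)=c}w_{P_j^{S_{\mathtt{opt}}}}(p)$ so that its total mass is $n-z$. Since $w_{P_j^{S_{\mathtt{opt}}}}(p)\le w_{P_j}(p)$, restricting $\phi$ to the inliers certifies $\mathcal{W}(P_j^{S_{\mathtt{opt}}},T_j^{\mathtt{in}})\le\sqrt{\alpha\,\mathtt{OPT}_j}$. Combining with $\mathcal{W}(P_j^{S_{\mathtt{opt}}},S_{\mathtt{opt}})=\sqrt{\mathtt{OPT}_j}$ and the standard triangle inequality for $\mathcal{W}$ on equal-mass distributions yields $\mathcal{W}(T_j^{\mathtt{in}},S_{\mathtt{opt}})\le(1+\sqrt{\alpha})\sqrt{\mathtt{OPT}_j}$. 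For any other input $P_i$, using $P_i^{S_{\mathtt{opt}}}$ as a feasible inlier selection and a second triangle step through $S_{\mathtt{opt}}$ gives $\mathcal{W}_{-z}(P_i,T_j^{\mathtt{in}})\le\sqrt{\mathtt{OPT}_i}+(1+\sqrt{\alpha})\sqrt{\mathtt{OPT}_j}$. Squaring, averaging over $i$, and applying the $\ell^2$-triangle inequality $\sqrt{\tfrac1m\sum_i(a_i+b)^2}\le\sqrt{\tfrac1m\sum_i a_i^2}+b$ yields $\sqrt{\mathtt{Cost}_{-z}(\mathbb{P},T_j^{\mathtt{in}})}\le\sqrt{\mathtt{OPT}}+(1+\sqrt{\alpha})\sqrt{\mathtt{OPT}_j}$. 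Choosing $j^\star\in\arg\min_j\mathtt{OPT}_j$ (so $\mathtt{OPT}_{j^\star}\le\mathtt{OPT}$ by averaging) and squaring produces the claimed $(2+\sqrt{\alpha})^2$ factor.

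The main obstacle I anticipate is the first step: bounding the number of cluster centers required to simulate the fractional outlier distribution of $\mathcal{W}_{-z}(P_j,S_{\mathtt{opt}})$. Atomic-unit splitting resolves this without altering either the Wasserstein or the $k$-means cost, after which the rest of the argument reduces to organizing two Wasserstein triangle inequalities together with one $\ell^2$-triangle inequality across the $m$ inputs, followed by the averaging that selects a favorable $j^\star$.
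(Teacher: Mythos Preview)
Your overall architecture matches the paper's: bound $\mathcal{W}(P_j,T_j)^2\le\alpha\,\mathtt{OPT}_j$, route through $S_{\mathtt{opt}}$ by two Wasserstein triangle inequalities, then aggregate over $j$ and pick the best index. Your use of the Minkowski inequality $\sqrt{\tfrac1m\sum_i(a_i+b)^2}\le\sqrt{\tfrac1m\sum_i a_i^2}+b$ in place of the paper's pointwise bound $(a+\delta b)^2\le(1+\delta)a^2+(\delta^2+\delta)b^2$ is a legitimate and slightly cleaner way to reach the same $(2+\sqrt{\alpha})^2$ factor, and your explicit pushforward $T_j^{\mathtt{in}}$ plays the same role as the paper's $T_j^{S_{\mathtt{opt}}}$.

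However, the step you flagged as the main obstacle really is a gap. Atomic splitting does \emph{not} force the outlier support of $\mathcal{W}_{-z}(P_j,S_{\mathtt{opt}})$ to have at most $\hat z$ points, because the weights on $S_{\mathtt{opt}}$ act as capacity constraints. Concretely: take $P_j=\{p_1,p_2,p_3\}$ each of weight $1$ (so $w_{\mathtt{min}}=1$), $S_{\mathtt{opt}}=\{s_1,s_2\}$ each of weight $0.75$, and $z=1.5$, hence $\hat z=2$. With $p_1$ strictly closest to $s_1$ and $p_2$ strictly closest to $s_2$, the unique optimal flow sends $0.75$ from $p_1$ to $s_1$, $0.75$ from $p_2$ to $s_2$, and the remaining $1.5$ to the dummy, so all three points carry positive outlier mass. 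Your candidate $(k+\hat z)$-means solution would then need $2+3=5>k+\hat z=4$ centers. Splitting $P_j$ into unit atoms does nothing here because the non-integrality comes from the $0.75$ capacities on $S_{\mathtt{opt}}$, which you do not control.

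The paper sidesteps this by inserting the \emph{uncapacitated} problem $\mathtt{Mean}^k_{-z}(P_j)$ in between: first $\mathtt{Mean}^k_{-z}(P_j)\le\mathtt{OPT}_j$ (Claim~\ref{cl-m1}, since $\mathtt{supp}(S_{\mathtt{opt}})$ is a feasible set of $k$ centers and the nearest-center inlier choice can only beat any capacitated flow), and then $\mathtt{Mean}^{k+\hat z}(P_j)\le\mathtt{Mean}^k_{-z}(P_j)$ (Claim~\ref{cl-for-kzmeans}). The second inequality is where the ``at most one fractional outlier'' argument actually works, precisely because in $k$-means with outliers there are no capacities on the centers: if two points were fractionally discarded, they would have equal nearest-center distance and one could shift mass to eliminate one of them. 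Replacing your first paragraph with this two-step detour repairs the proof; everything after that in your proposal goes through unchanged.
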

We have multiple choices for the algorithm $\mathcal{A}$. For example, we can run the $(9+\epsilon)$-approximate local search algorithm~\cite{kanungo2002local} (but its running time is super linear since there are too many swap combinations that should be tested in the local search procedure). We also can run the $k$-means++ based algorithms~\cite{aggarwal2009adaptive,lattanzi2019better} to achieve an $O(1)$-approximation as discussed in Remark~\ref{rem-result1}. So the approximation factor in Theorem~\ref{the-result1} can be $O(1)$ as well. 

To prove Theorem~\ref{the-result1}, we need several key lemmas. Lemma~\ref{lem-1} shows that each obtained $T_j$ can approximately represent the corresponding $P_j$, even in the presence of \vspace{-0.25em} outliers. Lemma~\ref{lem-2} further shows that the set $T^{S_{\mathtt{opt}}}_j$, {\em i.e.,} the inliers of $T_j$ induced by $S_{\mathtt{opt}}$ (see Definition~\ref{def-fwd}), should yield an upper bound for the total cost where the bound is determined by the \vspace{-0.25em} distance between $P_j$ and $S_{\mathtt{opt}}$.  Also note that we cannot obtain \vspace{-0.25em} $T^{S_{\mathtt{opt}}}_j$ in reality since the optimal solution $S_{\mathtt{opt}}$ is always unknown to us; in fact we only use $T^{S_{\mathtt{opt}}}_j$ in our analysis \vspace{-0.25em} for bridging the gap between $\tilde{T}_{j_0}$ and $S_{\mathtt{opt}}$. Through Lemma~\ref{lem-2} we can prove that the selected best candidate $\tilde{T}_{j_0}$ yields the desired quality guarantee.

\begin{lemma}
\label{lem-1}
For each $1\leq j\leq m$, suppose the obtained cluster centers from $\mathcal{A}$ is $ T_j=\{t_1, t_2, \cdots, t_{\lambda(k+\hat{z})}\}$; also each weight $w_{T_j}(t_s)=$ the total weight of the $s$-th cluster. Then  $\mathcal{W}_{-z}(T_j, S_{\mathtt{opt}})\leq (1+\sqrt{\alpha}) \mathcal{W}_{-z}(P_j, S_{\mathtt{opt}})$.
\end{lemma}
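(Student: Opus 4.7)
The plan is to apply the triangle inequality for the Wasserstein distance through the intermediate measure $P_j^{S_{\mathtt{opt}}}$, the inliers of $P_j$ induced by $S_{\mathtt{opt}}$, which by Definition~\ref{def-fwd} satisfies $\mathcal{W}(P_j^{S_{\mathtt{opt}}}, S_{\mathtt{opt}}) = \mathcal{W}_{-z}(P_j, S_{\mathtt{opt}})$. First I would construct an explicit candidate $T_j' \in \mathbb{P}^{T_j}_{-z}$: send each piece of weight in $P_j^{S_{\mathtt{opt}}}$ to its cluster center in $T_j$ (the one produced by $\mathcal{A}$), and define $w_{T_j'}(t_s)$ as the total inherited mass at $t_s$. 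Since $w_{P_j^{S_{\mathtt{opt}}}}(p)\leq w_{P_j}(p)$ point-wise and the total mass is $n-z$, this $T_j'$ lies in $\mathbb{P}^{T_j}_{-z}$; the same assignment is a valid flow from $P_j^{S_{\mathtt{opt}}}$ to $T_j'$ whose squared cost is at most $\mathcal{S}(P_j,T_j)$, which by the $\alpha$-approximation guarantee of $\mathcal{A}$ is at most $\alpha\cdot \mathtt{Mean}^{k+\hat{z}}(P_j)$.

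Applying the standard Wasserstein triangle inequality then gives $\mathcal{W}_{-z}(T_j, S_{\mathtt{opt}}) \leq \mathcal{W}(T_j', S_{\mathtt{opt}}) \leq \mathcal{W}(T_j', P_j^{S_{\mathtt{opt}}}) + \mathcal{W}_{-z}(P_j, S_{\mathtt{opt}})$, and we have just shown $\mathcal{W}(T_j', P_j^{S_{\mathtt{opt}}})\leq \sqrt{\alpha\cdot\mathtt{Mean}^{k+\hat{z}}(P_j)}$. Therefore it suffices to prove the auxiliary inequality $\mathtt{Mean}^{k+\hat{z}}(P_j) \leq \mathcal{W}_{-z}^2(P_j, S_{\mathtt{opt}})$.

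I expect this auxiliary inequality to be the main obstacle, since the optimal outliers in $P_j$ may be \emph{fractional} pieces of weight rather than whole points, so one cannot simply ``place extra centers at the outlier locations.'' My plan is to exhibit a specific $(k+\hat z)$-center solution: use $S_{\mathtt{opt}}$ together with $\hat z$ extra centers placed at the set $H\subseteq P_j$ of the $\hat z$ points with the largest values of $d_p^2 := \min_{c\in S_{\mathtt{opt}}}||p-c||^2$. Using the dummy-point reformulation of Claim~\ref{cl-fwd}, let $\alpha_p^*\in[0,w_{P_j}(p)]$ be the optimal outlier mass at point $p$, so $\sum_p\alpha_p^*=z$; expanding the transport cost source-wise yields the lower bound $\mathcal{W}_{-z}^2(P_j,S_{\mathtt{opt}}) \geq \sum_p(w_{P_j}(p)-\alpha_p^*)\,d_p^2$. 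A rearrangement argument — the maximum of $\sum_p\alpha_p^*\,d_p^2$ over feasible $(\alpha_p^*)$ is attained by greedily filling the largest $d_p^2$'s, and $\hat{z}\cdot w_{\min}\geq z$ ensures the $\hat z$ points of $H$ have total weight at least $z$ — then gives $\sum_p\alpha_p^*\,d_p^2 \leq \sum_{p\in H}w_{P_j}(p)\,d_p^2$. The constructed $(k+\hat z)$-center solution has cost at most $\sum_{p\notin H}w_{P_j}(p)\,d_p^2$, and the two preceding inequalities combine to bound this by $\mathcal{W}_{-z}^2(P_j, S_{\mathtt{opt}})$, establishing the auxiliary inequality.

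Plugging it back yields $\mathcal{W}(T_j',P_j^{S_{\mathtt{opt}}}) \leq \sqrt{\alpha}\cdot\mathcal{W}_{-z}(P_j,S_{\mathtt{opt}})$, and substituting into the triangle inequality delivers the claimed $(1+\sqrt{\alpha})\mathcal{W}_{-z}(P_j,S_{\mathtt{opt}})$ bound.
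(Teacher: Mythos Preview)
Your proof is correct and follows the same overall skeleton as the paper's: route through the intermediate measure $P_j^{S_{\mathtt{opt}}}$ via the Wasserstein triangle inequality, and bound the $T_j$-to-$P_j^{S_{\mathtt{opt}}}$ leg by $\sqrt{\alpha}\,\mathcal{W}_{-z}(P_j,S_{\mathtt{opt}})$ using the auxiliary inequality $\mathtt{Mean}^{k+\hat z}(P_j)\le \mathcal{W}_{-z}^2(P_j,S_{\mathtt{opt}})$. Your explicit pushforward $T_j'$ is exactly the candidate the paper implicitly uses when it writes $\mathcal{W}_{-z}(T_j,P_j^{S_{\mathtt{opt}}})\le \mathcal{W}(T_j,P_j)$.

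The one genuine difference is in how the auxiliary inequality is established. The paper first proves the stronger, $S_{\mathtt{opt}}$-independent statement $\mathtt{Mean}^{k+\hat z}(P_j)\le \mathtt{Mean}^k_{-z}(P_j)$ (Claim~\ref{cl-for-kzmeans}) via a structural lemma (Lemma~\ref{lm-sparsity}) showing that some optimal outlier set for $k$-means-with-$z$-outliers is supported on at most $\hat z$ points; this is obtained by a pairwise consolidation argument on fractional outliers. You instead go directly to $\mathcal{W}_{-z}^2(P_j,S_{\mathtt{opt}})$: place the extra $\hat z$ centers at the $\hat z$ points of $P_j$ with largest $d_p^2$ relative to $S_{\mathtt{opt}}$, and bound the resulting cost by a rearrangement inequality. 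Your route is shorter and more elementary for the purpose of this lemma; the paper's route yields the reusable intermediate Claim~\ref{cl-for-kzmeans}, which is a cleaner structural fact (and in fact your rearrangement argument, applied with the optimal $k$-means-with-outliers centers in place of $S_{\mathtt{opt}}$, gives an alternative proof of that claim as well).
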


\begin{proof} 
First, we consider the relationship between $k$-means clustering with $z$ outliers and $(k+\hat{z})$-means clustering. Intuitively, we can regard the result of $k$-means clustering with $z$ outliers as a special solution for the $(k+\hat{z})$-means clustering, where 
each outlier actually is a cluster of single point.  We then have the following claim (due to the space limit, the proof is shown in the supplement).

\begin{claim}
\label{cl-for-kzmeans}
$\mathtt{Mean}^{k+\hat{z}}(P_j) \leq \mathtt{Mean}^{k}_{-z}(P_j)$. 
\end{claim}

From Claim \ref{cl-m1} we know that $\mathtt{Mean}^{k}_{-z}(P_j)\leq \mathcal{W}_{-z}^2(P_j, S_{\mathtt{opt}})$. Also, because $T_j$ is obtained from the $\alpha$-approximate algorithm $\mathcal{A}$, we have 
\begin{eqnarray}
\mathcal{W}(T_j, P_j)^2 &\leq& \alpha\mathtt{Mean}^{k+\hat{z}}(P_j) \nonumber\\
&\leq& \alpha\mathtt{Mean}^{k}_{-z}(P_j) \leq \alpha\mathcal{W}_{-z}^2(P_j, S_{\mathtt{opt}}). \label{for-lem-1-3}
\end{eqnarray}
According to Definition 1, we know that $\mathcal{W}_{-z}(T_j, S_{\mathtt{opt}}) = \mathcal{W}(T_j^{S_{\mathtt{opt}}}, S_{\mathtt{opt}})$, where $T_j^{S_{\mathtt{opt}}}$ is the inliers of $T_j$ induced by $S_{\mathtt{opt}}$. 
Note ${P_j^{S_{\mathtt{opt}}}}$ is a set with total weight $=n-z$, so we have $\mathcal{W}(T_j^{S_{\mathtt{opt}}}, S_{\mathtt{opt}})
\leq \mathcal{W}(T_j^{P_j^{S_{\mathtt{opt}}}}, S_{\mathtt{opt}})$. Thus, 
\vspace{-3pt}
\begin{eqnarray}
&& \mathcal{W}_{-z}(T_j, S_{\mathtt{opt}}) 
\leq \mathcal{W}(T_j^{P_j^{S_{\mathtt{opt}}}}, S_{\mathtt{opt}}).  \label{for-lem-1-4}
\end{eqnarray}
We also have the following bound 
\begin{eqnarray}
&& \mathcal{W}(T_j^{P_j^{S_{\mathtt{opt}}}}, S_{\mathtt{opt}}) \nonumber\\ 
&\leq& \mathcal{W}(T_j^{P_j^{S_{\mathtt{opt}}}}, P_j^{S_{\mathtt{opt}}})+\mathcal{W}(P_j^{S_{\mathtt{opt}}}, S_{\mathtt{opt}}) \nonumber\\ 
& = & \mathcal{W}_{-z}(T_j, P_j^{S_{\mathtt{opt}}})+\mathcal{W}_{-z}(P_j, S_{\mathtt{opt}}) \nonumber\\
&\leq& \mathcal{W}(T_j, P_j)+\mathcal{W}_{-z}(P_j, S_{\mathtt{opt}}) \nonumber\\ 
&\leq& (1+\sqrt{\alpha}) \mathcal{W}_{-z}(P_j, S_{\mathtt{opt}}),  \label{for-lem-1-2}
\end{eqnarray}
where the first inequality follows from the triangle inequality of Wasserstein Distance, 
and the last inequality follows from  (\ref{for-lem-1-3}). Finally, we complete the proof by combining (\ref{for-lem-1-4}) and (\ref{for-lem-1-2}). 
\end{proof}

\begin{lemma}
\label{lem-2}
For any $1\leq j \leq m$, 
$\mathtt{Cost}_{-z}(\mathbb{P}, T^{S_{\mathtt{opt}}}_j)\leq(2+\sqrt{\alpha})\mathtt{Cost}_{-z}(\mathbb{P}, S_{\mathtt{opt}}) + (2+3\sqrt{\alpha}+\alpha)\mathcal{W}_{-z}^2(P_j, S_{\mathtt{opt}})$.
\end{lemma}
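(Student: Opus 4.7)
The plan is to bound $\mathcal{W}_{-z}^2(P_i,T_j^{S_{\mathtt{opt}}})$ for each $1\le i\le m$ separately, and then average over $i$. The key bridge will be $S_{\mathtt{opt}}$ itself: we relate $P_i$ to $T_j^{S_{\mathtt{opt}}}$ through $S_{\mathtt{opt}}$ via a triangle inequality, and then invoke Lemma~\ref{lem-1} to dispose of the $T_j\leftrightarrow S_{\mathtt{opt}}$ leg.

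First I would handle the weight bookkeeping. Note that $T_j^{S_{\mathtt{opt}}}$ has total weight $n-z$ (the same as $S_{\mathtt{opt}}$), while $P_i$ has total weight $n$. Let $P_i^{S_{\mathtt{opt}}}\in\mathbb{P}^{P_i}_{-z}$ be the inliers of $P_i$ induced by $S_{\mathtt{opt}}$, so that $\mathcal{W}_{-z}(P_i,S_{\mathtt{opt}})=\mathcal{W}(P_i^{S_{\mathtt{opt}}},S_{\mathtt{opt}})$. Since $P_i^{S_{\mathtt{opt}}}$ is also a feasible candidate in $\mathbb{P}^{P_i}_{-z}$ against $T_j^{S_{\mathtt{opt}}}$, we have $\mathcal{W}_{-z}(P_i,T_j^{S_{\mathtt{opt}}})\le \mathcal{W}(P_i^{S_{\mathtt{opt}}},T_j^{S_{\mathtt{opt}}})$. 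All three measures $P_i^{S_{\mathtt{opt}}}$, $S_{\mathtt{opt}}$, $T_j^{S_{\mathtt{opt}}}$ have total weight $n-z$, so the ordinary triangle inequality for Wasserstein distance gives
\begin{equation*}
\mathcal{W}(P_i^{S_{\mathtt{opt}}},T_j^{S_{\mathtt{opt}}})\le \mathcal{W}(P_i^{S_{\mathtt{opt}}},S_{\mathtt{opt}})+\mathcal{W}(S_{\mathtt{opt}},T_j^{S_{\mathtt{opt}}})=\mathcal{W}_{-z}(P_i,S_{\mathtt{opt}})+\mathcal{W}_{-z}(T_j,S_{\mathtt{opt}}).
\end{equation*}
Applying Lemma~\ref{lem-1} to the last term yields
\begin{equation*}
\mathcal{W}_{-z}(P_i,T_j^{S_{\mathtt{opt}}})\le \mathcal{W}_{-z}(P_i,S_{\mathtt{opt}})+(1+\sqrt{\alpha})\mathcal{W}_{-z}(P_j,S_{\mathtt{opt}}).
\end{equation*}

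Next I would square this bound and split cross-terms via the elementary inequality $(a+b)^2\le (1+\epsilon)a^2+(1+1/\epsilon)b^2$ with the parameter $\epsilon=1+\sqrt{\alpha}$. Writing $a=\mathcal{W}_{-z}(P_i,S_{\mathtt{opt}})$ and $b=(1+\sqrt{\alpha})\mathcal{W}_{-z}(P_j,S_{\mathtt{opt}})$, this choice produces coefficient $1+\epsilon=2+\sqrt{\alpha}$ on $a^2$, and coefficient $(1+1/\epsilon)(1+\sqrt{\alpha})^2=\frac{2+\sqrt{\alpha}}{1+\sqrt{\alpha}}(1+\sqrt{\alpha})^2=2+3\sqrt{\alpha}+\alpha$ on $\mathcal{W}_{-z}^2(P_j,S_{\mathtt{opt}})$, which exactly matches the two constants stated in the lemma.

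Finally I would average over $i=1,\dots,m$. The term $(2+\sqrt{\alpha})\mathcal{W}_{-z}^2(P_i,S_{\mathtt{opt}})$ averages to $(2+\sqrt{\alpha})\,\mathtt{Cost}_{-z}(\mathbb{P},S_{\mathtt{opt}})$, while the second term is independent of $i$ and therefore survives the averaging unchanged, giving the claimed inequality. The only genuinely delicate step is the choice of $\epsilon$ to reproduce the stated constants exactly; everything else is bookkeeping, and the main conceptual point is the legitimacy of the triangle inequality after passing to $P_i^{S_{\mathtt{opt}}}$ so that all three measures share the same total mass $n-z$.
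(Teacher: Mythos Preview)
Your proposal is correct and follows essentially the same route as the paper's proof: pass to $P_i^{S_{\mathtt{opt}}}$ so all three measures have mass $n-z$, apply the Wasserstein triangle inequality through $S_{\mathtt{opt}}$, invoke Lemma~\ref{lem-1}, then square and average. The only cosmetic difference is that the paper writes the splitting inequality as $(a+\delta b)^2\le(1+\delta)a^2+(\delta^2+\delta)b^2$ with $\delta=1+\sqrt{\alpha}$, which is exactly your $(a+b)^2\le(1+\epsilon)a^2+(1+1/\epsilon)b^2$ after the substitution $b\mapsto \delta b$, $\epsilon=\delta$.
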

\begin{proof} 
For any $1\leq j_1\leq m$, we have
\begin{eqnarray}
&& \mathcal{W}_{-z}(P_{j_1}, T^{S_{\mathtt{opt}}}_j)
\leq\mathcal{W}(P^{S_{\mathtt{opt}}}_{j_1}, T^{S_{\mathtt{opt}}}_j)\nonumber\\
&\leq& \mathcal{W}(P^{S_{\mathtt{opt}}}_{j_1}, S_{\mathtt{opt}})+\mathcal{W}(S_{\mathtt{opt}}, T^{S_{\mathtt{opt}}}_j) \nonumber\\
&=& \mathcal{W}_{-z}(P_{j_1}, S_{\mathtt{opt}})+\mathcal{W}_{-z}(T_j, S_{\mathtt{opt}})\nonumber\\
&\leq& \mathcal{W}_{-z}(P_{j_1}, S_{\mathtt{opt}})+(1+\sqrt{\alpha})\mathcal{W}_{-z}(P_j, S_{\mathtt{opt}}), \label{for-lem-2-1}
\end{eqnarray}
where the second inequality follows from the triangle inequality of Wasserstein distance and the third inequality follows from Lemma~\ref{lem-1}. Then we obtain the following bound by using (\ref{for-lem-2-1}): 
\begin{eqnarray}
&& \mathtt{Cost}_{-z}(\mathbb{P}, T^{S_{\mathtt{opt}}}_j) \nonumber\\
&\leq& \frac{1}{m}\sum\nolimits^m_{j_1=1}\Big(\mathcal{W}_{-z}(P_{j_1}, S_{\mathtt{opt}})\nonumber\\
&& +\;(1+\sqrt{\alpha})\mathcal{W}_{-z}(P_j, S_{\mathtt{opt}})\Big)^2 \nonumber\\
&\leq& \frac{1}{m}\sum\nolimits^m_{j_1=1}(2+\sqrt{\alpha})\mathcal{W}_{-z}^2(P_{j_1}, S_{\mathtt{opt}})\nonumber\\
&& +\;(2+3\sqrt{\alpha}+\alpha)\mathcal{W}_{-z}^2(P_j, S_{\mathtt{opt}}) \nonumber\\
&=& (2+\sqrt{\alpha})\mathtt{Cost}_{-z}(\mathbb{P}, S_{\mathtt{opt}}) \nonumber\\
&& +\; (2+3\sqrt{\alpha}+\alpha)\mathcal{W}_{-z}^2(P_j, S_{\mathtt{opt}}), \label{for-lem-2-2}
\end{eqnarray}
where the second inequality follows from the fact that $ (a+\delta b)^2\leq (1+\delta)a^2+(\delta^2+\delta)b^2$ for any numbers $a, b$, and $\delta$. 
\end{proof}

\begin{proof}

[\textbf{of Theorem~\ref{the-result1}}]
Because   $\tilde{T}_j$ is the optimal weight distribution over $T_j$,  we have $\mathtt{Cost}_{-z}(\mathbb{P}, \tilde{T}_j)\leq \mathtt{Cost}_{-z}(\mathbb{P}, T^{S_{\mathtt{opt}}}_j)$. For the best candidate $\tilde{T}_{j_0}$, we have
\begin{eqnarray}
&& \mathtt{Cost}_{-z}(\mathbb{P}, \tilde{T}_{j_0}) \leq \min\nolimits_{1\leq j\leq m}\mathtt{Cost}_{-z}(\mathbb{P}, T^{S_{\mathtt{opt}}}_j)\nonumber\\
&\leq& (2+\sqrt{\alpha})\;\mathtt{Cost}_{-z}(\mathbb{P}, S_{\mathtt{opt}}) \nonumber\\
&& +\;(2+3\sqrt{\alpha}+\alpha)\min\nolimits_{1\leq j\leq m}\mathcal{W}_{-z}^2(P_j, S_{\mathtt{opt}}) \label{for-the-result1-1}
\end{eqnarray}
based on Lemma~\ref{lem-2}. Also it is easy to know $\min_{1\leq j\leq m}\mathcal{W}_{-z}^2(P_j, S_{\mathtt{opt}})\leq \frac{1}{m}\sum^m_{j=1}\mathcal{W}_{-z}^2(P_j, S_{\mathtt{opt}}) = \mathtt{Cost}_{-z}(\mathbb{P}, S_{\mathtt{opt}})$, so (\ref{for-the-result1-1}) implies  $\mathtt{Cost}_{-z}(\mathbb{P}, \tilde{T}_{j_0})\leq  (2+\sqrt{\alpha})^2\mathtt{Cost}_{-z}(\mathbb{P}, S_{\mathtt{opt}}) $ which  completes the proof. 
\end{proof}

\begin{theorem}
\label{the-result1-2}
If we run the  $\beta$-approximate $k$-means clustering with $z$ outliers algorithm $\mathcal{B}$ instead of $\mathcal{A}$, and compute the optimal weight distribution with $2z$ outliers over $T_j$ in the clustering based LP algorithm, we have 
\begin{eqnarray}
\mathtt{Cost}_{-2z}(\mathbb{P}, \tilde{T}_{j_0})\leq (2+\sqrt{\beta})^2\cdot\mathtt{Cost}_{-z}(\mathbb{P}, S_{\mathtt{opt}}).
\end{eqnarray}
\end{theorem}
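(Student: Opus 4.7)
The plan is to mirror the three-layer structure of the proof of Theorem~\ref{the-result1} (an analog of Lemma~\ref{lem-1}, then of Lemma~\ref{lem-2}, then the final selection over $j$), adapted to the fact that under $\mathcal{B}$ each $T_j$ now has total weight $n-z$ and $\tilde{T}_j$ has total weight $n-2z$. Because $\mathcal{B}$ is $\beta$-approximate for $k$-means with $z$ outliers, Claim~\ref{cl-m1} directly yields $\mathcal{W}^2_{-z}(P_j, T_j) \leq \beta\,\mathcal{W}^2_{-z}(P_j, S_{\mathtt{opt}})$, playing the role of~(\ref{for-lem-1-3}) (Claim~\ref{cl-for-kzmeans} is no longer needed). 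The essential new difficulty is that, with $T_j$ and $S_{\mathtt{opt}}$ both of weight $n-z$, the original trick of passing through the ``inliers of $T_j$ induced by $P_j^{S_{\mathtt{opt}}}$'' becomes vacuous, which is precisely the motivation for the second $z$-batch of outliers.

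The heart of the modification is an explicit competitor $W_j$ for $\tilde{T}_j$ that is supported on $T_j$ with total weight $n-2z$. First I would build a ``common inliers'' set $X_j$ satisfying $w_{X_j}(p) \leq \min\{w_{P_j^{T_j}}(p),\, w_{P_j^{S_{\mathtt{opt}}}}(p)\}$ pointwise and total weight $n-2z$; this is feasible because $\min(a,b) \geq a+b-w_{P_j}(p)$ together with $P_j^{T_j}$ and $P_j^{S_{\mathtt{opt}}}$ each having total weight $n-z$ forces the pointwise minimum to sum to at least $2(n-z)-n = n-2z$. Then I would define $W_j$ by pushing $X_j$ forward to $T_j$ along the optimal $\mathcal{W}_{-z}(P_j, T_j)$ flow, scaled at each $p$ by the ratio $w_{X_j}(p)/w_{P_j^{T_j}}(p)$, and symmetrically define $\hat{S}_j \preceq S_{\mathtt{opt}}$ of total weight $n-2z$ by pushing $X_j$ forward along the optimal $\mathcal{W}_{-z}(P_j, S_{\mathtt{opt}})$ flow. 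Pointwise dominance of $X_j$ then yields $\mathcal{W}(X_j, W_j) \leq \sqrt{\beta}\,\mathcal{W}_{-z}(P_j, S_{\mathtt{opt}})$ and $\mathcal{W}(X_j, \hat{S}_j) \leq \mathcal{W}_{-z}(P_j, S_{\mathtt{opt}})$, so the triangle inequality delivers the analog of Lemma~\ref{lem-1}: $\mathcal{W}(W_j, \hat{S}_j) \leq (1+\sqrt{\beta})\,\mathcal{W}_{-z}(P_j, S_{\mathtt{opt}})$.

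For the analog of Lemma~\ref{lem-2}, for any $j_1$ I would take the optimal $\mathcal{W}_{-z}(P_{j_1}, S_{\mathtt{opt}})$ flow and scale its inflow at each $q \in \mathtt{supp}(S_{\mathtt{opt}})$ by $w_{\hat{S}_j}(q)/w_{S_{\mathtt{opt}}}(q) \in [0,1]$; the restricted flow transports some $Y_{j_1} \preceq P_{j_1}$ of total weight $n-2z$ onto $\hat{S}_j$ at cost at most $\mathcal{W}_{-z}(P_{j_1}, S_{\mathtt{opt}})$. Triangulating via $\mathcal{W}_{-2z}(P_{j_1}, W_j) \leq \mathcal{W}(Y_{j_1}, \hat{S}_j) + \mathcal{W}(\hat{S}_j, W_j)$ then reproduces (\ref{for-lem-2-1}) with $\alpha\mapsto\beta$ and the left side upgraded to $\mathcal{W}_{-2z}(P_{j_1}, W_j)$, so squaring and averaging with $\delta = 1+\sqrt{\beta}$ yields the analog of (\ref{for-lem-2-2}). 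Finally, optimality of the LP for weight distributions on $T_j$ with $2z$ outliers gives $\mathtt{Cost}_{-2z}(\mathbb{P}, \tilde{T}_j) \leq \mathtt{Cost}_{-2z}(\mathbb{P}, W_j)$, and the minimum over $j$ together with $\min_j \mathcal{W}^2_{-z}(P_j, S_{\mathtt{opt}}) \leq \mathtt{Cost}_{-z}(\mathbb{P}, S_{\mathtt{opt}})$ closes the bound at $(2+\sqrt{\beta})^2\,\mathtt{Cost}_{-z}(\mathbb{P}, S_{\mathtt{opt}})$.

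The main obstacle will be identifying the right construction that uses the extra $z$ outliers meaningfully: the original Lemma~\ref{lem-1} exploits the weight gap between $T_j$ (weight $n$) and $S_{\mathtt{opt}}$ (weight $n-z$), but under $\mathcal{B}$ both sides have weight $n-z$, so a new mechanism must supply this missing degree of freedom. The common-inliers $X_j$ and its two scaled push-forwards serve this purpose by letting the barycenter shed an additional $z$ weight so as to isolate the portion of $P_j$ that is simultaneously well-explained by $T_j$ and by $S_{\mathtt{opt}}$. Once this is in place, the remaining bookkeeping with the $\preceq$ relation of Definition~\ref{def-fwd} (interpreted mildly so as to allow zero-weight entries) and with the scaled transport flows is routine.
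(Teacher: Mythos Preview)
Your proposal is correct and follows essentially the same route as the paper. Your common-inliers set $X_j$ is exactly the paper's $P_j' = P_j^{T_j}\cap P_j^{S_{\mathtt{opt}}}$, your competitor $W_j$ plays the role of the paper's $T_j'$ in Lemma~\ref{lem-3}, and your explicit $\hat{S}_j$ together with the flow-scaling that produces $Y_{j_1}$ is precisely what the paper encodes more implicitly via $S_{\mathtt{opt}}^{T_j'}$ and $P_{j_1}^{S_{\mathtt{opt}}^{T_j'}}$ in Lemma~\ref{lem-4}; the remaining averaging and minimization steps are identical.
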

Comparing with Theorem~\ref{the-result1}, it is guaranteed that the output $\tilde{T}_{j_0}$ in Theorem~\ref{the-result1-2} is exactly $k$-sparse, with only a violation on the size of outliers. The total weight of discarded outliers is increased to $2z$; but usually $z$ is a value much smaller   than $n$  and thus we believe this influence is acceptable in practice.  For the algorithm $\mathcal{B}$, we can use the $O(1)$-approximate algorithms~\cite{friggstad2019approximation,krishnaswamy2018constant} (the algorithm of \cite{friggstad2019approximation} slightly violates the number of returned cluster centers to be $(1+\epsilon)k$ with an arbitrarily small value $\epsilon>0$). In practice, we can also use some faster algorithms like~\cite{chawla2013k,gupta2017local}
(though their theoretical guarantees are weaker than \cite{friggstad2019approximation,krishnaswamy2018constant}). 

The proof of Theorem~\ref{the-result1-2} is similar with that of Theorem~\ref{the-result1},  but the only major challenge is that we have to provide a   more complicated version for Lemma~\ref{lem-1} (which is Lemma~\ref{lem-3} below). For each $1\leq j\leq m$, let the obtained cluster centers  from $\mathcal{B}$ be $ T_j=\{t_1, t_2, \cdots, t_k\}$; the key difficult problem is that $T_j$ and $S_{\mathtt{opt}}$ may induce different inliers on $P_j$. To resolve this issue, we should make a deep analysis on the distribution of $T_j$ and prove the existence of a set $T'_j\preceq T_j$ who can play the same role as $T_j$ in Lemma~\ref{lem-1}. Due to the space limit, we leave the proofs of Lemma~\ref{lem-3} and Lemma~\ref{lem-4} to our supplement. 

\begin{lemma}
\label{lem-3}
There exists a weighted point set $T_j^{\prime}$ satisfying $\mathcal{W}_{-z}(S_{\mathtt{opt}}, T_j^{\prime})\leq (1+\sqrt{\beta}) \mathcal{W}_{-z}(P_j, S_{\mathtt{opt}})$, where  $\mathtt{supp}(T_j^{\prime})= \mathtt{supp}(T_j)$ and $w_{T_j^{\prime}}(T_j^{\prime}) = n - 2z$.
\end{lemma}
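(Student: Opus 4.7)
The plan is to mirror the proof of Lemma~\ref{lem-1}, adding one extra layer of bookkeeping to reconcile the two different ``inlier sets'' that $\mathcal{B}$ and $S_{\mathtt{opt}}$ induce on $P_j$. In Lemma~\ref{lem-1}, the set $T_j$ had total weight $n$, so the chain $T_j\to P_j^{S_{\mathtt{opt}}}\to S_{\mathtt{opt}}$ could be coupled through flows of matching mass. Now $T_j$ has total weight $n-z$ because $\mathcal{B}$ has already discarded $z$ units of outliers from $P_j$; call the surviving $\mathcal{B}$-inliers $P_j^{\mathcal{B}}$. Since $P_j^{\mathcal{B}}$ need not agree with $P_j^{S_{\mathtt{opt}}}$, only a mass of $n-2z$ can be guaranteed common to both, which is exactly why the lemma permits $z$ additional outliers on the $T_j'$ side.

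First I would form the shared inliers $\hat{P}_j$ by pointwise minimum of the weight functions of $P_j^{\mathcal{B}}$ and $P_j^{S_{\mathtt{opt}}}$; an inclusion--exclusion computation on weights gives $w(\hat{P}_j)\geq (n-z)+(n-z)-n=n-2z$, and I would trim down to exactly $n-2z$ if needed. By construction $\hat{P}_j\preceq P_j^{\mathcal{B}}$ and $\hat{P}_j\preceq P_j^{S_{\mathtt{opt}}}$. I would then define $T_j'$ as the image of $\hat{P}_j$ under the clustering flow of $\mathcal{B}$, restricted proportionally to the reduced mass on each source point, so that $T_j'\preceq T_j$ with $w(T_j')=n-2z$ and $\mathcal{W}(\hat{P}_j,T_j')\leq \mathcal{W}(P_j^{\mathcal{B}},T_j)$. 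The $\beta$-approximation guarantee of $\mathcal{B}$ together with Claim~\ref{cl-m1} (which bounds $\mathtt{Mean}^k_{-z}(P_j)$ by $\mathcal{W}_{-z}^2(P_j,S_{\mathtt{opt}})$ since $|\mathtt{supp}(S_{\mathtt{opt}})|=k$) then yields $\mathcal{W}(\hat{P}_j,T_j')\leq \sqrt{\beta}\,\mathcal{W}_{-z}(P_j,S_{\mathtt{opt}})$.

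Analogously, I would restrict the optimal $P_j^{S_{\mathtt{opt}}}\to S_{\mathtt{opt}}$ flow to $\hat{P}_j$ to obtain some $S'\in\mathbb{P}^{S_{\mathtt{opt}}}_{-z}$ of total weight $n-2z$ with $\mathcal{W}(\hat{P}_j,S')\leq \mathcal{W}(P_j^{S_{\mathtt{opt}}},S_{\mathtt{opt}})=\mathcal{W}_{-z}(P_j,S_{\mathtt{opt}})$. Since $S'\in \mathbb{P}^{S_{\mathtt{opt}}}_{-z}$ and $w(S')=w(T_j')$, Definition~\ref{def-fwd} gives $\mathcal{W}_{-z}(S_{\mathtt{opt}},T_j')\leq \mathcal{W}(S',T_j')$, and the triangle inequality for $\mathcal{W}$ through the intermediate set $\hat{P}_j$ produces the required bound $(1+\sqrt{\beta})\,\mathcal{W}_{-z}(P_j,S_{\mathtt{opt}})$.

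The main obstacle I anticipate is the flow-restriction step: I need to argue cleanly that restricting a feasible transport flow to a sub-weighted source set yields a valid flow to some target $\preceq T_j$ whose squared transport cost does not increase (a proportional scaling of the flow entries suffices), and then restore the support condition $\mathtt{supp}(T_j')=\mathtt{supp}(T_j)$ by a vanishing perturbation that reinstates any center receiving zero restricted flow with an infinitesimal mass $\epsilon>0$, borrowed from some other support point. The perturbation affects the bound only by $O(\epsilon)$ and can be sent to $0$. All remaining steps are straightforward adaptations of Lemma~\ref{lem-1}.
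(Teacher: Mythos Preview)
Your proposal is correct and follows essentially the same route as the paper: form the pointwise-minimum set $\hat{P}_j=P_j^{T_j}\cap P_j^{S_{\mathtt{opt}}}$ (what the paper calls $P_j'$), push the $\mathcal{B}$-flow through it to obtain $T_j'\preceq T_j$, and then couple $S_{\mathtt{opt}}$ to $T_j'$ via the triangle inequality through $\hat{P}_j$, invoking Claim~\ref{cl-m1} and the $\beta$-approximation exactly as you indicate. The paper packages the two flow-restriction steps into a single monotonicity fact ($\mathcal{W}_{-z_1-z_2}(P,Q')\leq \mathcal{W}_{-z_1}(P,Q)$ whenever $Q'\preceq Q$) rather than constructing $S'$ explicitly, but the underlying argument is the same; your explicit handling of the $\mathtt{supp}(T_j')=\mathtt{supp}(T_j)$ condition via an $\epsilon$-perturbation is a detail the paper omits.
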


\begin{lemma}
\label{lem-4}
For any $1\leq j \leq m$, the weighted point set $T_j^{\prime}$ in Lemma~\ref{lem-3} also satisfies   
$\mathtt{Cost}_{-2z}(\mathbb{P}, T^{\prime}_j) \leq (2+\sqrt{\beta})\mathtt{Cost}_{-z}(\mathbb{P}, S_{\mathtt{opt}}) + (2+3\sqrt{\beta}+\beta) \mathcal{W}_{-z}^2(P_j, S_{\mathtt{opt}}) $.
\end{lemma}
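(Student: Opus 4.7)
The plan is to mirror the proof structure of Theorem~\ref{the-result1}, with Lemma~\ref{lem-3} and Lemma~\ref{lem-4} taking the places of Lemma~\ref{lem-1} and Lemma~\ref{lem-2}, and then assembling the final bound via the same ``best candidate versus average'' argument used at the end of Theorem~\ref{the-result1}. The principal obstacle is Lemma~\ref{lem-3}. In the proof of Theorem~\ref{the-result1}, the algorithm $\mathcal{A}$ produced a $(k+\hat{z})$-means solution and Claim~\ref{cl-for-kzmeans} let us treat its cost on all of $P_j$ as an upper bound; here $\mathcal{B}$ only gives a good fit of $T_j$ to some weight-$(n{-}z)$ sub-distribution $P_j^{T_j}$, and in general $P_j^{T_j}\neq P_j^{S_{\mathtt{opt}}}$, so the triangle chain used in Lemma~\ref{lem-1} no longer directly goes through.

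To circumvent this I would construct $T_j'$ by intersecting the two inlier sub-distributions. Let $R_j\preceq P_j$ be a sub-distribution of weight exactly $n-2z$ that is dominated by both $P_j^{T_j}$ and $P_j^{S_{\mathtt{opt}}}$; such an $R_j$ exists because these two weight-$(n{-}z)$ sub-distributions of $P_j$ overlap in weight at least $n-2z$ by inclusion-exclusion. Restricting the $\beta$-approximate flow certifying $\mathcal{W}^2(T_j,P_j^{T_j})\leq\beta\cdot\mathtt{Mean}^{k}_{-z}(P_j)$ to sources in $R_j$ yields a sub-distribution $T_j'\preceq T_j$ of weight $n-2z$; restricting the optimal flow certifying $\mathcal{W}(P_j^{S_{\mathtt{opt}}},S_{\mathtt{opt}})=\mathcal{W}_{-z}(P_j,S_{\mathtt{opt}})$ to the same sources yields $S_j^{*}\preceq S_{\mathtt{opt}}$ of weight $n-2z$. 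Routing through $R_j$ via the triangle inequality, combined with $\mathtt{Mean}^{k}_{-z}(P_j)\leq\mathcal{W}_{-z}^2(P_j,S_{\mathtt{opt}})$ (Claim~\ref{cl-m1}, applicable since $S_{\mathtt{opt}}$ is $k$-sparse), gives $\mathcal{W}(S_j^{*},T_j')\leq\mathcal{W}(S_j^{*},R_j)+\mathcal{W}(R_j,T_j')\leq(1+\sqrt{\beta})\,\mathcal{W}_{-z}(P_j,S_{\mathtt{opt}})$; since $S_j^{*}$ is a feasible witness for discarding $z$ outliers from $S_{\mathtt{opt}}$, this proves Lemma~\ref{lem-3}.

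Given Lemma~\ref{lem-3}, Lemma~\ref{lem-4} essentially replays Lemma~\ref{lem-2}: for each $j_1$, restrict the optimal $P_{j_1}^{S_{\mathtt{opt}}}\to S_{\mathtt{opt}}$ flow so that its targets lie only in the sub-distribution $\hat{S}\preceq S_{\mathtt{opt}}$ of weight $n-2z$ witnessing $\mathcal{W}_{-z}(S_{\mathtt{opt}},T_j')$, take the corresponding $\tilde{P}_{j_1}\preceq P_{j_1}$ of weight $n-2z$, and chain $\mathcal{W}_{-2z}(P_{j_1},T_j')\leq\mathcal{W}(\tilde{P}_{j_1},\hat{S})+\mathcal{W}(\hat{S},T_j')\leq\mathcal{W}_{-z}(P_{j_1},S_{\mathtt{opt}})+(1+\sqrt{\beta})\,\mathcal{W}_{-z}(P_j,S_{\mathtt{opt}})$; squaring, averaging over $j_1$, and invoking $(a+\delta b)^2\leq(1+\delta)a^2+(\delta^2+\delta)b^2$ as in Lemma~\ref{lem-2} completes Lemma~\ref{lem-4}. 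Finally, since $T_j'$ is a feasible weight distribution on $\mathtt{supp}(T_j)$ respecting the $2z$-outlier budget, $\mathtt{Cost}_{-2z}(\mathbb{P},\tilde{T}_j)\leq\mathtt{Cost}_{-2z}(\mathbb{P},T_j')$; taking the minimum over $j$ and using $\min_j\mathcal{W}_{-z}^2(P_j,S_{\mathtt{opt}})\leq\mathtt{Cost}_{-z}(\mathbb{P},S_{\mathtt{opt}})$ combines to the $(2+\sqrt{\beta})^2$ factor exactly as at the end of Theorem~\ref{the-result1}.
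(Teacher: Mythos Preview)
Your argument is correct and follows essentially the same route as the paper: route $\mathcal{W}_{-2z}(P_{j_1},T'_j)$ through the weight-$(n-2z)$ sub-distribution $\hat{S}=S_{\mathtt{opt}}^{T'_j}$ via the Wasserstein triangle inequality to obtain $\mathcal{W}_{-2z}(P_{j_1},T'_j)\leq\mathcal{W}_{-z}(P_{j_1},S_{\mathtt{opt}})+(1+\sqrt{\beta})\,\mathcal{W}_{-z}(P_j,S_{\mathtt{opt}})$, then square, average over $j_1$, and apply $(a+\delta b)^2\leq(1+\delta)a^2+(\delta^2+\delta)b^2$ with $\delta=1+\sqrt{\beta}$. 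The only cosmetic difference is that you build the witness $\tilde{P}_{j_1}$ by explicitly restricting the $P_{j_1}^{S_{\mathtt{opt}}}\to S_{\mathtt{opt}}$ flow to targets in $\hat{S}$, whereas the paper invokes the optimal inlier set $P_{j_1}^{S_{\mathtt{opt}}^{T'_j}}$ directly; both yield the same bound.
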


\begin{proof}

[\textbf{of Theorem~\ref{the-result1-2}}]
Because of $\tilde{T}_j$ is the optimal weight distribution over $T_j$, so we have $\mathtt{Cost}_{-2z}(\mathbb{P}, \tilde{T}_j)\leq \mathtt{Cost}_{-2z}(\mathbb{P}, T^{\prime}_j)$. For the best candidate $\tilde{T}_{j_0}$, we have
\begin{eqnarray}
&&\mathtt{Cost}_{-2z}(\mathbb{P}, \tilde{T}_{j_0}) \leq \min\nolimits_{1\leq j\leq m}\mathtt{Cost}_{-2z}(\mathbb{P}, T^{\prime}_j) \nonumber\\
&\leq& (2+\sqrt{\beta})\mathtt{Cost}_{-z}(\mathbb{P}, S_{\mathtt{opt}}) \nonumber\\  
&&+ \;(2+3\sqrt{\beta}+\beta)\min\nolimits_{1\leq j\leq m}\mathcal{W}_{-z}^2(P_j, S_{\mathtt{opt}}) \nonumber\\ 
&\leq& (2+\sqrt{\beta})^2\mathtt{Cost}_{-z}(\mathbb{P}, S_{\mathtt{opt}}), 
\end{eqnarray}
where the second inequality follows from Lemme~\ref{lem-4} and the last inequality follows from $\min_{1\leq j\leq m}\mathcal{W}_{-z}^2(P_j, S_{\mathtt{opt}})\leq \mathtt{Cost}_{-z}(\mathbb{P}, S_{\mathtt{opt}})$.
\end{proof}

\textbf{Analysis on running time.} Let $\Gamma_1$ be the time complexity of $\mathcal{A}$ or $\mathcal{B}$ on each input set $P_j$, and let $\Gamma_2$ be the time complexity for solving the fixed-support WB for each $\tilde{T}_j$ as described in Remark~\ref{rem-fixoutlier}. Then the total time complexity of our algorithm is $m(\Gamma_1+\Gamma_2)$. Usually $\Gamma_1$ can be linear in $n$ if using the previous clustering algorithms (e.g.,
\cite{aggarwal2009adaptive})
; the complexity $\Gamma_2$ can be $\tilde{O}(mn^{7/3}\epsilon^{-4/3})$ if using the recent algorithm~\cite{lin2020fixed} ($\epsilon>0$ is a pre-specified small error).

\section{Improvement in Low-dimensional Space}
\label{sec-low}

We further consider a common case that the dimensionality $d$ is small ({\em e.g.,} the input $\mathbb{P}$ is a set of 2D images). In the previous section, we show that   the quality of our solution heavily depends on the clustering performance of the algorithm $\mathcal{A}$ or $\mathcal{B}$. So a natural question is  
\begin{center}
{\em Can we remove the dependence on the factors $\alpha$ and $\beta$ in our   quality guarantee?}
\end{center}
We answer this question in the affirmative. Our main idea is to generate the support through a  more sophisticated approach. Our algorithm contains the following two steps.

\textbf{(1) Generate the anchor points.} We still run the $\alpha$-approximate $(k+\hat{z})$-means clustering algorithm $\mathcal{A}$ on each $P_j$ and obtain the  set $T_j=\{t_1, \cdots, t_{\lambda(k+\hat{z})}\}$ of its cluster centers. 
To have a higher-quality fixed support for replacing   $T_j$, we utilize the low-dimensional coreset\footnote{Coreset is an algorithmic technique for representing large-scale data, which has been widely used for the optimization problems like clustering and regression~\cite{feldman2020introduction}. 
} technique~\cite{DBLP:conf/stoc/Har-PeledM04} to generate a set of ``anchor'' points $\hat{T}_j$ as follows. 
Denote by $\mathbb{B}(c, r)$ the ball centered at point $c$ with radius $r\geq 0$. We fix a $j$, and let $\bar{r}_j = \sqrt{\mathcal{W}(P_j, T_j)/n}$ and $\epsilon_1>0$. Then for $s=1, 2, \cdots, \lambda(k+\hat{z})$, we partition the ball $\mathbb{B}(t_s, n\bar{r}_j )$  into $\lceil \log n\rceil +1$ layers: $T_{j,s,0} = \mathbb{B}(t_s, \bar{r}_j)$ and $T_{j,s,h} = \mathbb{B}(t_s, \bar{r}_j2^h) \backslash \mathbb{B}(t_s, \bar{r}_j2^{h-1})$ for $h = 1, \cdots, \lceil\log(n)\rceil$. For each $h$-th layer, we can  build a grid with the side length $\bar{r}_j\epsilon_{1} 2^{h-1}/\sqrt{\alpha d}$; each point of $P_j\cap T_{j,s,h}$ is assigned to its nearest grid point, and each grid point has the weight equal to the total weight of the points assigned to it. Finally, we have the set $\hat{T}_j$ which contains all the weighted grid points of the  $\lceil \log n\rceil +1$ layers. 
The size $|\hat{T}_j| = O\big((k+\hat{z})\log(n)\alpha^{d/2}/\epsilon_{1}^d\big)$. We call the union set $\cup^m_{j=1}\hat{T}_j$ as the ``anchor points'' (which is actually the coreset in~\cite{DBLP:conf/stoc/Har-PeledM04}). 


\textbf{(2) Construct the support.} Without loss of generality, we assume that the minimum and maximum pairwise distances of $\cup^m_{j=1}P_j$ are $1$ and $\Delta$, respectively.  For each anchor point $q\in \cup^m_{j=1}\hat{T}_j$, we draw $\lceil\log \Delta\rceil +1$ concentric balls $\mathbb{B}(q, 2^h)$, $h=0, 1, \cdots, \lceil\log \Delta\rceil$. Let $\epsilon_2>0$. Inside each ball $\mathbb{B}(q, 2^h)$, we build a grid of side length $\epsilon_{2} 2^{h-1}/\sqrt{d}$. We denote the union set of the $\lceil\log \Delta\rceil +1$ grids as $G_q$, and denote $\bar{G}=\bigcup_{q\in \bigcup^m_{j=1}\hat{T}_j}G_q$. 

Finally, we solve the fixed-support WB with outliers problem by using the LP method on $\bar{G}$ instead of the $T_j$s. The obtained solution is denoted by $\tilde{G}$. 

\begin{theorem}
\label{the-result2}
Our Algorithm returns a solution $\tilde{G}$  that has the following quality guarantee by setting $\epsilon_1=\epsilon_2=\epsilon/16$:
\begin{eqnarray}
\mathtt{Cost}_{-z}(\mathbb{P}, \tilde{G})\leq(1+\epsilon)\cdot\mathtt{Cost}_{-z}(\mathbb{P}, S_{\mathtt{opt}}).
\end{eqnarray}
\end{theorem}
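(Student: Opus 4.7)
The plan is to exhibit a weighted point set $S^*$ supported on $\bar{G}$ whose cost is within a $(1+\epsilon)$ factor of $\mathtt{Cost}_{-z}(\mathbb{P}, S_{\mathtt{opt}})$, and then invoke the optimality of $\tilde{G}$ among $\bar{G}$-supported solutions (via the LP from Remark~\ref{rem-fixoutlier}) to conclude $\mathtt{Cost}_{-z}(\mathbb{P}, \tilde{G}) \leq \mathtt{Cost}_{-z}(\mathbb{P}, S^*)$. The construction of $S^*$ proceeds in two stages that mirror the two-stage grid construction, and is analyzed by two applications of the triangle inequality for Wasserstein distance.

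First, I would show the anchor points approximate $P_j$ well: $\mathcal{W}(P_j, \hat{T}_j) \leq O(\epsilon_1)\cdot\mathcal{W}_{-z}(P_j, S_{\mathtt{opt}})$. This is a Har-Peled--Mazumdar-style coreset bound. Each $p \in P_j \cap T_{j,s,h}$ is assigned to the nearest grid point, incurring displacement at most $(\epsilon_1/(2\sqrt{\alpha}))\|p - t_s\|$ when $h \geq 1$, and $O(\epsilon_1\bar{r}_j/\sqrt{\alpha})$ when $h=0$. Summing the squared displacements and using $n\bar{r}_j^2 \leq \mathcal{W}(P_j,T_j)^2$ (by the choice of $\bar{r}_j$) gives $\mathcal{W}^2(P_j, \hat{T}_j) \leq O(\epsilon_1^2/\alpha)\cdot\mathcal{W}^2(P_j, T_j)$. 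The $\alpha$ then cancels with $\mathcal{W}^2(P_j, T_j) \leq \alpha\cdot\mathtt{Mean}^{k+\hat{z}}(P_j) \leq \alpha\cdot\mathcal{W}_{-z}^2(P_j, S_{\mathtt{opt}})$, which follows from the $\alpha$-approximation guarantee of $\mathcal{A}$ together with Claim~\ref{cl-for-kzmeans} and Claim~\ref{cl-m1}. The cancellation is precisely why the grid side length carries $\sqrt{\alpha d}$ in its denominator.

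Next, I would construct $S^*$ by snapping $S_{\mathtt{opt}}$ to the anchor-centered grids. Fix $j$ and an optimal coupling realizing $\mathcal{W}_{-z}(\hat{T}_j, S_{\mathtt{opt}})$; for every pair $(q, s)$ with $q \in \mathtt{supp}(\hat{T}_j)$, $s \in \mathtt{supp}(S_{\mathtt{opt}})$, and positive flow, route that mass to the nearest grid point of $G_q$ lying in the shell $\mathbb{B}(q, 2^h) \setminus \mathbb{B}(q, 2^{h-1})$ that contains $s$. Since the grid side length in shell $h$ of $G_q$ is $\epsilon_2 2^{h-1}/\sqrt{d}$, the per-unit snapping error is at most $(\epsilon_2/2)\max(\|q - s\|, 1)$; the $\max(\cdot, 1)$ absolute offset is controlled by the normalization that the minimum pairwise distance in $\cup_j P_j$ equals $1$. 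Aggregating yields a total snapping cost of at most $O(\epsilon_2)\cdot\mathcal{W}_{-z}(\hat{T}_j, S_{\mathtt{opt}})$, so that $\mathcal{W}_{-z}(\hat{T}_j, S^*) \leq (1 + O(\epsilon_2))\mathcal{W}_{-z}(\hat{T}_j, S_{\mathtt{opt}})$. Composing the two stages via the triangle inequality (and the outlier-routing trick from the proof of Lemma~\ref{lem-1} to pass between inliers induced by $P_j$ versus by $\hat{T}_j$) gives $\mathcal{W}_{-z}(P_j, S^*) \leq (1 + O(\epsilon_1 + \epsilon_2))\mathcal{W}_{-z}(P_j, S_{\mathtt{opt}})$. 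Squaring, averaging over $j$, and choosing $\epsilon_1 = \epsilon_2 = \epsilon/16$ delivers the desired $(1+\epsilon)$ factor with the correct constants.

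The main obstacle is the second stage: the snapping scheme must be arranged so that the error telescopes against $\|q - s\|$ itself rather than being dominated by fixed-scale offsets, which is precisely why the shells are exponentially spaced and the grid resolution scales by $2^h$ alongside the shell thickness. The two coreset constructions must also align: around each $t_s$ we pay $\epsilon_1/\sqrt{\alpha}$ per unit of $\|p - t_s\|$, around each anchor $q$ we pay $\epsilon_2$ per unit of $\|q - s\|$, and one must verify these contributions compose additively along the triangle-inequality chain rather than multiplying. A subordinate issue is bookkeeping the several outlier sets (induced by $S_{\mathtt{opt}}$ on $P_j$, by $S_{\mathtt{opt}}$ on $\hat{T}_j$, and by $S^*$), always normalized back to $S_{\mathtt{opt}}$-induced inliers exactly as in the proof of Lemma~\ref{lem-1}.
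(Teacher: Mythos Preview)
Your overall strategy---exhibit a $\bar G$-supported competitor $S^*$ with near-optimal cost and then invoke LP optimality of $\tilde G$---is sound and in fact more direct than the paper's route. But the stage-2 snapping, as you describe it, does not produce a single $S^*$. You write ``Fix $j$ and an optimal coupling realizing $\mathcal{W}_{-z}(\hat T_j, S_{\mathtt{opt}})$; for every pair $(q,s)$ \ldots route that mass to the nearest grid point of $G_q$.'' This construction depends on $j$ through the coupling, so the resulting set is really $S^*_j$. You then average $\mathcal{W}_{-z}^2(P_j,S^*)$ over $j$ as though $S^*$ were fixed; that step is not justified, and the LP-optimality comparison $\mathtt{Cost}_{-z}(\mathbb{P},\tilde G)\le\mathtt{Cost}_{-z}(\mathbb{P},S^*)$ needs one \emph{single} competitor supported on $\bar G$.

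The fix is to make the snapping coupling-free: for each $s\in\mathtt{supp}(S_{\mathtt{opt}})$ move all of its weight to the nearest point of $\bar G$, regardless of any coupling. The displacement then satisfies
\[
\|s-\tilde s\|\;\le\;\epsilon_2\,\mathtt{d}\Bigl(s,\textstyle\bigcup_{j'}\hat T_{j'}\Bigr)\;\le\;\epsilon_2\,\mathtt{d}(s,\hat T_j)\quad\text{for every }j,
\]
because $\hat T_j\subset\bigcup_{j'}\hat T_{j'}$ and the grid around the \emph{globally} nearest anchor already witnesses the bound. Summing gives $\mathcal{W}(S_{\mathtt{opt}},S^*)\le\epsilon_2\,\mathcal{W}_{-z}(\hat T_j,S_{\mathtt{opt}})$ simultaneously for all $j$, and the rest of your triangle-inequality chain goes through with one fixed $S^*$.

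For comparison, the paper does not snap $S_{\mathtt{opt}}$ directly. It first passes to the anchor instance $\mathbb{T}=\{\hat T_1,\dots,\hat T_m\}$, snaps the optimal barycenter $S^{\mathbb{T}}_{\mathtt{opt}}$ of $\mathbb{T}$ to $\bar G$ (this is the $j$-independent nearest-point map above), obtains the bound on $\mathtt{Cost}_{-z}(\mathbb{T},\tilde G)$ in Lemma~\ref{lem-6}, and only then transfers back to $\mathtt{Cost}_{-z}(\mathbb{P},\tilde G)$ via Lemma~\ref{lem-5} and a weighted AM--GM with parameter $\delta$. Your route---once repaired---bypasses the detour through $\mathbb{T}$ and $S^{\mathbb{T}}_{\mathtt{opt}}$ and lands on the same final factor $(\sqrt{1.25}\epsilon_1+(1+\epsilon_2)(1+\sqrt{1.25}\epsilon_1))^2$ directly.
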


Before proving Theorem~\ref{the-result2}, we provide the following two key lemmas first. Lemma~\ref{lem-5} shows that each obtained $\hat{T}_j$ can  efficiently preserve the Wasserstein distance error for $P_j$ within any arbitrarily small bound, 
 even in the presence of outliers. 
Lemma~\ref{lem-6} further shows that the instance $\mathbb{T}=\{\hat{T}_1, \hat{T}_2, \cdots, \hat{T}_m\}$ yields a barycenter on the fix support $\bar{G}$ which can approximately represent the barycenter on the input instance $\mathbb{P}$. 
The detailed proofs are placed to our supplement.

\begin{lemma}
\label{lem-5}
For each $1\leq j\leq m$, we have 
\begin{eqnarray}
\mathcal{W}(P_j, \hat{T}_j) \leq \sqrt{1.25}\epsilon_{1} \mathcal{W}_{-z}(P_j, S_{\mathtt{opt}}).
\end{eqnarray}
\end{lemma}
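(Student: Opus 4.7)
The plan is to exhibit an explicit transport from $P_j$ to $\hat{T}_j$ that sends every point of $P_j$ to its assigned grid point, and to bound its cost layer by layer using the $\alpha$-approximation guarantee of $\mathcal{A}$ together with the identity $n\bar{r}_j^2 = \mathcal{W}^2(P_j,T_j)$ built into the definition of $\bar{r}_j$.

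First, I fix $j$ and, for each $p \in P_j$, identify the nearest center $t_s \in T_j$ and the annulus $T_{j,s,h}$ containing $p$. Sending $p$ to its nearest grid point in the layer-$h$ grid of side length $\bar{r}_j\epsilon_1 2^{h-1}/\sqrt{\alpha d}$ incurs a displacement on the order of $\bar{r}_j\epsilon_1 2^{h-1}/\sqrt{\alpha}$ in a $d$-dimensional cube. For $h \geq 1$, the annulus definition gives $\|p - t_s\| \geq \bar{r}_j 2^{h-1}$, so the squared displacement is at most $(\epsilon_1^2/\alpha)\,\|p-t_s\|^2$; for $h = 0$ it is bounded uniformly by a constant times $\bar{r}_j^2\epsilon_1^2/\alpha$, independent of $\|p-t_s\|$.

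Aggregating, the outer-layer terms telescope to at most $(\epsilon_1^2/\alpha)\,\mathcal{S}(P_j,T_j)$, where $\mathcal{S}$ denotes the $k$-means assignment cost. The inner-layer contribution is at most $n\bar{r}_j^2\epsilon_1^2/(4\alpha) = \mathcal{W}^2(P_j,T_j)\,\epsilon_1^2/(4\alpha) \leq \mathcal{S}(P_j,T_j)\,\epsilon_1^2/(4\alpha)$, using $\bar{r}_j^2 = \mathcal{W}^2(P_j,T_j)/n$ together with the fact that the $k$-means assignment is a valid Wasserstein flow. Adding the pieces yields a squared transport cost of at most $\tfrac{5\epsilon_1^2}{4\alpha}\,\mathcal{S}(P_j,T_j)$. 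The approximation chain $\mathcal{S}(P_j,T_j) \leq \alpha\,\mathtt{Mean}^{k+\hat{z}}(P_j) \leq \alpha\,\mathtt{Mean}^{k}_{-z}(P_j) \leq \alpha\,\mathcal{W}_{-z}^2(P_j, S_{\mathtt{opt}})$ -- by the $\alpha$-guarantee of $\mathcal{A}$, Claim~\ref{cl-for-kzmeans}, and Claim~\ref{cl-m1} respectively -- then cancels the $\alpha$ and leaves $\tfrac{5}{4}\epsilon_1^2\,\mathcal{W}_{-z}^2(P_j, S_{\mathtt{opt}})$, whose square root is exactly $\sqrt{1.25}\,\epsilon_1\,\mathcal{W}_{-z}(P_j, S_{\mathtt{opt}})$.

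The main obstacle will be the innermost layer $h = 0$: unlike the outer annuli, its displacement cannot be charged against $\|p - t_s\|$ (which may vanish near $t_s$), so one is forced to absorb an additive error of order $\bar{r}_j^2\epsilon_1^2/\alpha$ per unit of weight. Only the fact that the inner grid is scaled by $\bar{r}_j$ itself -- rather than by the local radius, which would be zero -- lets this error telescope back into $\mathcal{W}^2(P_j, T_j)$ via the identity $n\bar{r}_j^2 = \mathcal{W}^2(P_j,T_j)$. A minor side technicality is to verify that $\bigcup_s \mathbb{B}(t_s, n\bar{r}_j)$ covers essentially all of the weight of $P_j$; this follows from a Markov-type argument against the $k$-means cost $n\bar{r}_j^2$, showing that any stray point must carry weight at most $1/n$ and can be absorbed harmlessly.
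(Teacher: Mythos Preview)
Your proposal is correct and follows essentially the same argument as the paper: both exhibit the grid-snapping as a feasible transport, bound the squared displacement layer by layer (charging outer layers $h\ge 1$ against $\|p-t_s\|^2$ and the inner layer $h=0$ against $n\bar r_j^2=\mathcal W^2(P_j,T_j)$), and then invoke the chain $\mathcal S(P_j,T_j)\le\alpha\,\mathtt{Mean}^{k+\hat z}(P_j)\le\alpha\,\mathtt{Mean}^k_{-z}(P_j)\le\alpha\,\mathcal W_{-z}^2(P_j,S_{\mathtt{opt}})$ to cancel the $\alpha$. Your extra remark about the Markov-type coverage of $\bigcup_s\mathbb B(t_s,n\bar r_j)$ is a point the paper leaves implicit, but otherwise the two proofs coincide.
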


\begin{lemma}
\label{lem-6}
Let $\mathbb{T}=\{\hat{T}_1, \hat{T}_2, \cdots, \hat{T}_m\}$ be a new instance of $k$-sparse WB with outliers, then we have 
\begin{eqnarray}
\resizebox{.91\linewidth}{!}{$
\mathtt{Cost}_{-z}(\mathbb{T}, \tilde{G})\leq (1+\epsilon_{2})^2 (1+\sqrt{1.25}\epsilon_{1})^2 \mathtt{Cost}_{-z}(\mathbb{P}, S_{\mathtt{opt}})
$}.
\end{eqnarray}
\end{lemma}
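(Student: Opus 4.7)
The plan is to exhibit a particular weighted set $S'$ supported on $\bar{G}$ that closely approximates $S_{\mathtt{opt}}$ in Wasserstein distance, and then invoke the optimality of $\tilde{G}$ as the fixed-support WB with outliers over $\bar{G}$ for the instance $\mathbb{T}$ to conclude $\mathtt{Cost}_{-z}(\mathbb{T}, \tilde{G}) \le \mathtt{Cost}_{-z}(\mathbb{T}, S')$. The task then reduces to bounding $\mathtt{Cost}_{-z}(\mathbb{T}, S')$ by the right-hand side of the lemma, which naturally splits into two multiplicative contributions: the snapping-induced perturbation from $S_{\mathtt{opt}}$ to $S'$ (yielding the $(1+\epsilon_2)^2$ factor) and the discrepancy between the anchor instance $\mathbb{T}$ and the original $\mathbb{P}$ (yielding the $(1+\sqrt{1.25}\,\epsilon_1)^2$ factor through Lemma~\ref{lem-5}).

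For each $x \in \mathtt{supp}(S_{\mathtt{opt}})$, let $q_x$ denote the closest anchor point in $\bigcup_{j=1}^m \hat{T}_j$, and let $h$ be the smallest nonnegative integer with $\|x-q_x\|\le 2^h$. I would snap $x$ to its nearest grid point $x'\in G_{q_x}$ at scale $h$; the cell has side length $\epsilon_2 2^{h-1}/\sqrt{d}$ and hence diagonal $\epsilon_2 2^{h-1}$, so $\|x-x'\|\le \epsilon_2 2^{h-2}\le (\epsilon_2/2)\|x-q_x\|$ whenever $h\ge 1$. Define $S'$ by replacing each $x\in\mathtt{supp}(S_{\mathtt{opt}})$ with $x'$ while keeping its weight. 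To control $\mathcal{W}_{-z}(\hat{T}_j,S')$ I would redirect the optimal flow witnessing $\mathcal{W}_{-z}(\hat{T}_j,S_{\mathtt{opt}})$: every unit of flow previously sent from $t\in\hat{T}_j$ to $x$ is now sent to $x'$. Since $t$ is itself an anchor, $\|x-q_x\|\le\|x-t\|$, so $\|x-x'\|\le(\epsilon_2/2)\|x-t\|$ and the triangle inequality gives $\|t-x'\|\le(1+\epsilon_2/2)\|t-x\|$. Squaring, summing over the flow, and taking the square root yields $\mathcal{W}_{-z}(\hat{T}_j,S')\le(1+\epsilon_2)\mathcal{W}_{-z}(\hat{T}_j,S_{\mathtt{opt}})$, with the inlier mass preserved because snapping preserves weights.

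Finally, I would bound $\mathcal{W}_{-z}(\hat{T}_j,S_{\mathtt{opt}})$ in terms of $\mathcal{W}_{-z}(P_j,S_{\mathtt{opt}})$ by mimicking the proof of Lemma~\ref{lem-1}: routing through the inliers $P_j^{S_{\mathtt{opt}}}$ and $\hat{T}_j^{P_j^{S_{\mathtt{opt}}}}$, and applying the triangle inequality for Wasserstein distance, gives $\mathcal{W}_{-z}(\hat{T}_j,S_{\mathtt{opt}}) \le \mathcal{W}(\hat{T}_j,P_j)+\mathcal{W}_{-z}(P_j,S_{\mathtt{opt}}) \le (1+\sqrt{1.25}\,\epsilon_1)\mathcal{W}_{-z}(P_j,S_{\mathtt{opt}})$, where the second inequality is Lemma~\ref{lem-5}. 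Chaining the two bounds, $\mathcal{W}_{-z}(\hat{T}_j,S')\le(1+\epsilon_2)(1+\sqrt{1.25}\,\epsilon_1)\mathcal{W}_{-z}(P_j,S_{\mathtt{opt}})$; squaring, averaging over $j$, and invoking optimality of $\tilde{G}$ closes the argument. The main obstacle I anticipate is the innermost-layer case $h=0$, where the absolute snapping error $\epsilon_2/4$ need not be proportional to $\|x-q_x\|$ and hence not to $\|t-x\|$. The clean resolution is either to exploit the assumed minimum pairwise distance $1$ on $\bigcup_j P_j$---so that whenever $\|t-x\|$ is of order $1$, the $\epsilon_2/4$ perturbation is negligible---or to simply adjoin each anchor point to the grid $\bar{G}$, allowing any $x$ with $\|x-q_x\|<\epsilon_2/4$ to be snapped directly onto $q_x$. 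Together with carefully re-verifying that the triangle inequality for Wasserstein distance with outliers propagates through the chosen inlier sets, this $h=0$ corner case is where most of the technical care will be required.
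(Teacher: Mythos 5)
Your proposal is correct and follows essentially the same route as the paper's proof: snap an optimal support onto $\bar{G}$ with snapping error at most $\epsilon_2$ times the distance to the nearest anchor (which is in turn dominated by the per-unit transport cost to $\hat{T}_j$), chain with Lemma~\ref{lem-5} via the Wasserstein triangle inequality, and invoke the optimality of $\tilde{G}$ over the fixed support $\bar{G}$ --- the only cosmetic difference being that you snap $S_{\mathtt{opt}}$ directly, whereas the paper snaps $S^{\mathbb{T}}_{\mathtt{opt}}$ and then compares $\mathtt{Cost}_{-z}(\mathbb{T}, S^{\mathbb{T}}_{\mathtt{opt}})$ with $\mathtt{Cost}_{-z}(\mathbb{T}, S_{\mathtt{opt}})$, which yields the identical bound. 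The $h=0$ corner case you flag is real, but it is equally present (and silently elided) in the paper's own argument, which asserts $\mathtt{d}(s,\bar{G})\leq \epsilon_{2}\,\mathtt{d}(s,\bigcup_{j}\hat{T}_j)$ without treating support points lying closer than a constant to an anchor.
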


\begin{proof}

[\textbf{of Theorem~\ref{the-result2}}]
We can combine Lemma~\ref{lem-5} and Lemma~\ref{lem-6} to complete the proof. Let $\delta>0$ be a parameter that will be determined later. First, we have 
\begin{eqnarray}
&&\mathtt{Cost}_{-z}(\mathbb{P}, \tilde{G})
= \frac{1}{m}\sum\nolimits_{j=1}^{m}\mathcal{W}_{-z}^2(P_j, \tilde{G}) \nonumber\\
&\leq& \frac{1}{m}\sum\nolimits_{j=1}^{m} \Big(\mathcal{W}(P_j, \hat{T}_j) + \mathcal{W}_{-z}(\hat{T}_j, \tilde{G})\Big)^2 \nonumber\\
&\leq& \frac{1}{m}\sum\nolimits_{j=1}^{m} \Big((1+\delta)\;\mathcal{W}^2(P_j, \hat{T}_j) \nonumber\\ 
&& +\; (1+\frac{1}{\delta})\;\mathcal{W}_{-z}^2(\hat{T}_j, \tilde{G})\Big), \label{for-the-result2-1}
\end{eqnarray}
where the second inequality follows from the generalized triangle inequality for any real numbers $\delta>0$, $a$, and $b$: $(a+b)^2\leq(1+\delta)a^2+(1+\frac{1}{\delta})b^2$. Then from (\ref{for-the-result2-1}) we have $\mathtt{Cost}_{-z}(\mathbb{P}, \tilde{G})\leq (1+\delta)\frac{1}{m}\sum_{j=1}^{m} \mathcal{W}_{-z}^2(P_j, S_{\mathtt{opt}}) + (1+\frac{1}{\delta})\mathtt{Cost}_{-z}(\mathbb{T}, \tilde{G}) \leq ((1+\delta)1.25\epsilon_{1}^2 \!+\! (1+\frac{1}{\delta}) (1+\epsilon_{2})^2 (1+\sqrt{1.25}\epsilon_{1})^2) \mathtt{Cost}_{-z}(\mathbb{P}, S_{\mathtt{opt}})$, 
where the second inequality follows from Lemma~\ref{lem-5} and Lemma~\ref{lem-6}. Finally we choose $\delta=1/(\sqrt{1.25}\epsilon_{1}(1+\epsilon_{2}) (1+\sqrt{1.25}\epsilon_{1}))$,  then we have $\mathtt{Cost}_{-z}(\mathbb{P}, \tilde{G})\leq (\sqrt{1.25}\epsilon_{1} + (1+\epsilon_{2})(1+\sqrt{1.25}\epsilon_{1}))^2\mathtt{Cost}_{-z}(\mathbb{P}, S_{\mathtt{opt}})$. By setting $\epsilon_1=\epsilon_2=\epsilon/16$ we obtain Theorem~\ref{the-result2}.
\end{proof}

\textbf{Running time analysis.} The running time is similar with the complexity of Section~\ref{sec-alg}, where the only difference is adding  the time complexity for building the anchor points and constructing the support in our algorithm. Note that the complexity of~\cite{DBLP:conf/stoc/Har-PeledM04} is linear in the input size $n$ for each $P_j$. The total complexity for this extra part is $\tilde{O}\Big(\log (\Delta )(k+\hat{z})\alpha^{d/2}\epsilon^{-2d} + n\Big)$.

\textbf{Extension in doubling metric.} Our result can be easily extended to the more general case in doubling metric. Informally speaking, the ``doubling dimension'' measures the intrinsic dimension of data (Euclidean dimension is one kind of special doubling dimension)~\cite{gupta2003bounded}. We show the extension with details in our supplement. 

\section{Experiments}
\label{sec-exp}
In this section, we illustrate the practical performance of our algorithms and study the significance of considering outliers for WB. Our experiments contain three parts. Firstly, we conduct the experiments on synthetic datasets, where the positions of the  barycenter supports are predefined, allowing us to compute the exact optimal objective value for measuring the approximation ratio of our algorithm. Secondly, 
we compare our algorithms with several baselines 
on real-world datasets. Finally, we provide the visualized results  on the MNIST dataset~\cite{lecun2010mnist}. 
Due to the space limit, the full experimental results are placed to our supplement. 

\textbf{Datasets.} In our synthetic datasets, we set the supports size $k\in[10,40]$ and the dimensionality $d\in[10,40]$; each instance comprises $m \in [2,10]$ different distributions, where each distribution consists of $n=20,000$ points.  The true barycenter supports are uniformly sampled within a hypercube with a side length of 10, and random weights are assigned to each center point. Points are randomly generated within Gaussian balls around the centers based on the assigned weights. We introduce outliers by uniformly sampling $z$ points for each distribution within the cube, with $z$ ranging from $0$ to $0.15\times n$. 

We also select three widely-used datasets from the UCI repository~\cite{UCI}: 
\textbf{Bank}~\cite{bank} ($4,521$ points in $\mathbb{R}^3$) represents the individual telephone calls during a marketing campaign, which
contains the information of the customers.
We have $m=3$  distributions categorized  based on marital status. 
\textbf{Credit card}~\cite{credit_card} ($30,000$ points in $\mathbb{R}^{14}$) includes the information about the credit card holders. We 
partitioned the data into $m=9$ distributions based on marriage and education. 
\textbf{Adult}~\cite{adult} ($32,561$ points in $\mathbb{R}^5$) represents the individual information from the 1994 U.S. Census. We 
partitioned it into $m=10$ distributions based on sex and race.  Finally, $5\%$ random noise are added to each dataset as outliers.

\textbf{Baselines and our implementation.} It is worth noting that there is no method that explicitly addresses $k$-sparse WB with outliers or fair clustering with outliers, to the best of our knowledge. We employ three baselines. First, following Remark~\ref{rem-fixoutlier}, we consider the fixed-support WB with outliers algorithm, utilizing $k$ random centers as support, and compute the optimal weight distribution via LP (denoted as ``Random\_$\mathcal{O}$'').
The other two baselines include a fair clustering algorithm that does not consider outliers (denoted as ``FC\_$\mathcal{O}$'')~\cite{bera2019fair}, and a non-fair clustering method considering outliers ``$k$-means-\,-\_$\mathcal{O}$''~\cite{chawla2013k}.
For the FC algorithm, we identify the farthest points in each class as outliers; for $k$-means-\,-, after obtaining the support positions, a new fair clustering solution can be obtained through LP. Additionally, we also test their three ``plain'' versions that do not discard  outliers, aiming to study the significance of considering outliers (denoted as ``Random'', ``FC'', ``$k$-means-\,-'', respectively).

In our implementation, we use the $k$-means++\cite{Arthur2007kmeansTA} as Algorithm $\mathcal{A}$ and $k$-means-\,-\cite{chawla2013k} as Algorithm $\mathcal{B}$; we also employ the LP solver~\cite{gurobi}  as the subroutine for solving fixed-support WB with outliers. To ensure a fair comparison, although Theorem~\ref{the-result1-2} suggests removing $2z$ outliers, we  remove only $z$ outliers in reality. Also, to keep $k$-sparsity for the result returned by $\mathcal{A}$, we only retain the top $k$ centers with the largest cluster sizes. We use ``Our\_$\mathcal{A}$'' and ``Our\_$\mathcal{B}$'' to denote them.

\vspace{0.05in}
\textbf{Results on synthetic datasets.} We compute the optimal cost $\mathtt{Cost}_{\mathtt{opt}}$ for the WB problem by using the pre-specified barycenter support. Subsequently, we execute our algorithm under various parameters to obtain the $\mathtt{Cost}$ and calculate the approximation ratio, defined as $\frac{\mathtt{Cost}}{\mathtt{Cost}_{\mathtt{opt}}}$. A subset of the results obtained by Algorithm $\mathcal{A}$  is presented in Table \ref{table:synthetic}, while the complete experimental outcomes are detailed in the appendix. We can see that our algorithm consistently achieves favorable approximation ratios across different dimensions and outlier proportions, where more than $70\%$ of them are less than $1.5$.

\textbf{Results on real datasets.} 
The results are illustrated in Figure \ref{fig:real}. 
As can be seen, even with only $5\%$ outliers, the plain versions of the three baselines take almost double costs than their counterparts who consider outliers. Moreover, our algorithms demonstrate even lower costs across all the datasets with different values of $k$. 
\vspace{-5pt}
\begin{table}[H]
  \centering
  \begin{tabular}{c@{\hspace{5pt}}c|@{\hspace{10pt}}c@{\hspace{5pt}}c@{\hspace{5pt}}c@{\hspace{5pt}}c@{\hspace{5pt}}c@{\hspace{5pt}}c@{\hspace{5pt}}c} 
  \toprule 
    & & \multicolumn{7}{c}{\textbf{Proportion of Outliers $z/n$}} \\
    $d$&$k$ & $0$   & $0.025$  & $0.05$  & $0.075$ & $0.1$ &$0.125$ &$0.15$  \\ 
    \cmidrule(r){1-9}
    \multirow{4}{*}{$10$}&10 & 1.321 & 1.380 & 1.477 & 1.651 & 1.547 & 1.452 & 1.493 \\
    &20 & 1.346 & 1.326 & 1.395 & 1.435 & 1.475 & 1.497 & 1.527 \\
    &30 & 1.370 & 1.375 & 1.397 & 1.434 & 1.476 & 1.496 & 1.558 \\
    &40 & 1.367 & 1.380 & 1.413 & 1.450 & 1.490 & 1.498 & 1.554 \\
    \cmidrule(r){1-9}
    \multirow{4}{*}{$20$}&10 & 1.332 & 1.412 & 1.695 & 1.714 & 1.746 & 1.353 & 1.399 \\
    &20 & 1.349 & 1.459 & 1.789 & 1.423 & 1.429 & 1.455 & 1.485 \\
    &30 & 1.373 & 1.468 & 1.412 & 1.441 & 1.485 & 1.497 & 1.538 \\
    &40 & 1.386 & 1.422 & 1.420 & 1.495 & 1.520 & 1.575 & 1.602 \\
\bottomrule
\end{tabular}
\caption{The approximation ratios of our algorithm for $m=10$.}
\label{table:synthetic}
\end{table}
\vspace{-15pt}
\begin{figure}[H]
  \centering
  \includegraphics[width=0.48\textwidth]{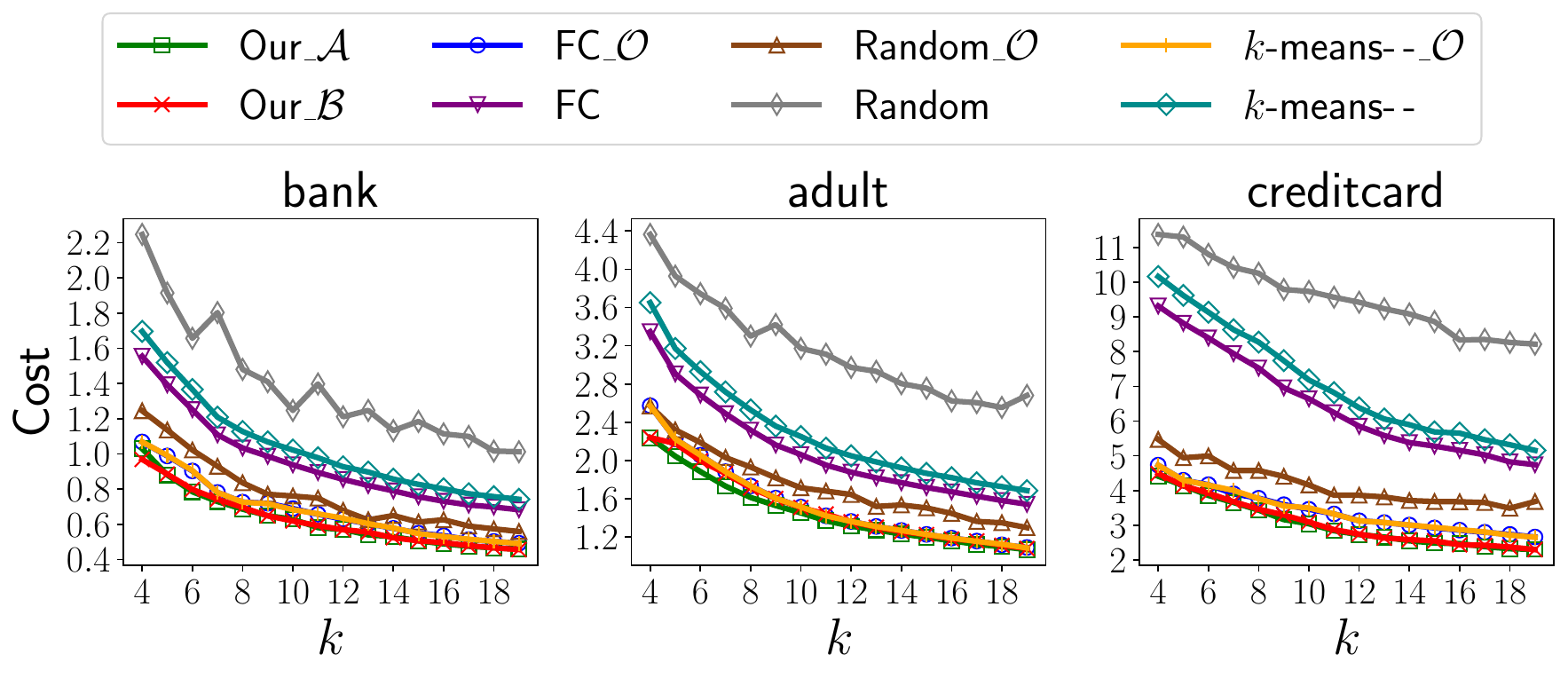}
  \caption{The obtained costs on real datasets.}
  \label{fig:real}
\end{figure}
\textbf{Visualized results.} 
In Figure \ref{fig:mnist}, we show the $40$-sparse barycenters obtained by Our\_$\mathcal{A}$ for  digit $0$-$4$ in the MNIST dataset, with $2\%$ of outliers removed from each digit. It is evident that the obtained set of $40$ points effectively captures the distinctive features for each digit.

\begin{figure}[H]
  \centering
  \includegraphics[width=0.45\textwidth]{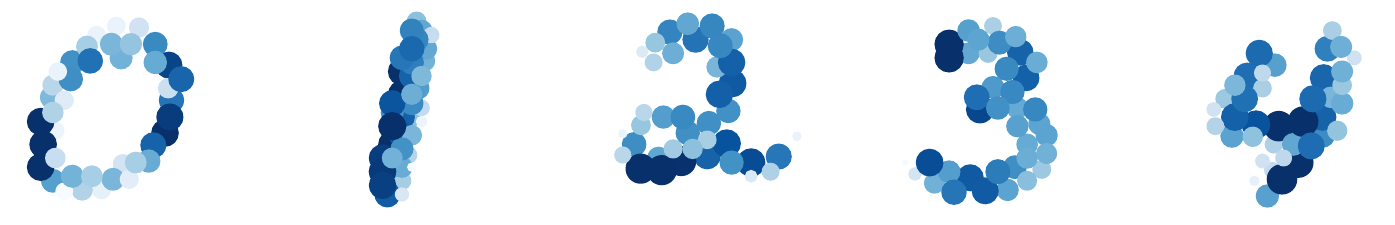}
  \caption{$k$-sparse WB obtained by Our\_$\mathcal{A}$ for $k=40$.}
  \label{fig:mnist}
\end{figure}

\section{Conclusions}
In this paper, we study the problem of $k$-sparse WB with outliers and present several efficient approximate algorithms with theoretical quality guarantees. Some omitted proofs are placed to our supplement. Following this work, there are several interesting problems deserved to study in future. 
For example, inspired by the local search method for designing the PTAS algorithm for ordinary $k$-means clustering with outliers~\cite{friggstad2019approximation}, 
an interesting theoretical question is that whether we can also apply it to achieve a PTAS for $k$-sparse WB with outliers in low-dimensional space. 

\newpage
\bibliographystyle{named}
\bibliography{ijcai24}

\appendix

\section{Omitted Proofs for Theorem~\ref{the-result1-2}}
\subsection{The Proof of Lemma~\ref{lem-3}}
Before proving Lemma~\ref{lem-3}, we provide the following fact first. For any $w_P(P)=n, w_Q(Q)=n-z_1, Q^{\prime}\preceq Q, w_{Q^{\prime}}(Q^{\prime})=n-z_1-z_2 $, we have 
\begin{eqnarray}
&&\mathcal{W}_{-z_1-z_2}(P, Q^{\prime})\leq \mathcal{W}_{-z_2}(P^Q, Q^{\prime})\nonumber \\
&\leq& \mathcal{W}(P^Q, Q) = \mathcal{W}_{-z_1}(P, Q). \label{inequality-for-rwb-1}
\end{eqnarray}
\begin{proof}

[\textbf{of Lemma~\ref{lem-3}}]
Let $P_j^{T_j}$ and $P_j^{S_{\mathtt{opt}}}$ be the sets of inliers induced by $T_j$ and $S_{\mathtt{opt}}$, respectively. Note that $P_j^{T_j}$ and $P_j^{S_{\mathtt{opt}}}$ have the same support $\mathtt{supp}(P_j)$ but only with different weight distributions. 
We define the ``$\cap$ ''operation between weighted. Let $P_j^{\prime} = P_j^{T_j} \cap P_j^{S_{\mathtt{opt}}}$ be the set with:
\begin{center}
{\em $\mathtt{supp}(P_j^{\prime})=\mathtt{supp}(P_j)$ ,  and for any $p\in \mathtt{supp}(P_j)$, 
$w_{P_j^{\prime}}(p)=\min\{w_{P_j^{T_j}}(p),w_{P_j^{S_{\mathtt{opt}}}}(p)\}$. }
\end{center}

Note that the total weight of $P_j^{\prime}$ is at least $n - 2z$. Let $T_j^{\prime}$ is any set satisfies $T_j^{\prime}\preceq T_j^{P_j^{\prime}}$ and $w_{T_j^{\prime}}(T_j^{\prime})=n-2z$, we have
\begin{eqnarray}
&&\mathcal{W}_{-z}(S_{\mathtt{opt}}, T_j^{\prime}) \leq \mathcal{W}(S^{P_j^{\prime}}_{\mathtt{opt}}, T_j^{P_j^{\prime}})  \nonumber\\
&\leq& \mathcal{W}(S^{P_j^{\prime}}_{\mathtt{opt}}, P_j^{\prime}) + \mathcal{W}(P_j^{\prime}, T_j^{P_j^{\prime}}) \nonumber\\
&=& \mathcal{W}_{-z}(S_{\mathtt{opt}}, P_j^{\prime}) + \mathcal{W}_{-z}(T_j, P_j^{\prime}) \nonumber\\
&\leq& \mathcal{W}(S_{\mathtt{opt}}, P_j^{S_{\mathtt{opt}}}) + \mathcal{W}(T_j, P_j^{T_j})\nonumber\\
&=& \mathcal{W}_{-z}(P_j, S_{\mathtt{opt}}) + \mathcal{W}_{-z}(P_j, T_j) \label{for-lem-3-1}
\end{eqnarray}
where the second inequality follows from (\ref{inequality-for-rwb-1}) and the fact that $P_j^{\prime}\preceq P_j^{S_{\mathtt{opt}}}$ and $P_j^{\prime}\preceq P_j^{T_j}$ .
Because $T_j$ is obtained from the algorithm $\mathcal{B}$, we have $\mathcal{W}_{-z}(P_j, T_j)^2 < \beta\mathtt{Cost}^{k}_{-z}(P_j)$. From Claim \ref{cl-m1} we know that $\mathtt{Cost}^{k}_{-z}(P)\leq \mathcal{W}_{-z}(P, S_{\mathtt{opt}})$, then (\ref{for-lem-3-1}) implies $\mathcal{W}_{-z}(S_{\mathtt{opt}}, T_j^{\prime})\leq (1+\sqrt{\beta}) \mathcal{W}_{-z}(P_j, S_{\mathtt{opt}})$.
\end{proof}

\subsection{The Proof of Lemma~\ref{lem-4}}
\begin{proof} 

For any $1\leq j_1\leq m$, we have
\begin{eqnarray}
&&\mathcal{W}_{-2z}(P_{j_1}, T^{\prime}_j) \leq \mathcal{W}(P^{S_{\mathtt{opt}}^{T^{\prime}_j}}_{j_1}, T^{\prime}_j) \nonumber\\
&\leq& \mathcal{W}(P^{S_{\mathtt{opt}}^{T^{\prime}_j}}_{j_1}, S_{\mathtt{opt}}^{T^{\prime}_j})+\mathcal{W}(S_{\mathtt{opt}}^{T^{\prime}_j}, T^{\prime}_j) \nonumber \\
&=& \mathcal{W}_{-2z}(P_{j_1}, S_{\mathtt{opt}}^{T^{\prime}_j})+\mathcal{W}_{-z}(S_{\mathtt{opt}}, T^{\prime}_j) \nonumber\\
&\leq& \mathcal{W}_{-z}(P_{j_1}, S_{\mathtt{opt}})+(1+\sqrt{\beta})\mathcal{W}_{-z}(P_j, S_{\mathtt{opt}}) \label{for-lem-4-1}
\end{eqnarray}
where the second inequality follows from the triangle inequality of Wasserstein distance and the third inequality follows from Lemma~\ref{lem-3} and (\ref{inequality-for-rwb-1}). Then we obtain the following bound by applying (\ref{for-lem-4-1}):
\begin{eqnarray}
&&\mathtt{Cost}_{-2z}(\mathbb{P}, T^{\prime}_j)\nonumber \\
&=& \frac{1}{m}\sum^m_{j_1=1}\mathcal{W}_{-2z}^2(P_{j_1}, T^{\prime}_j) \nonumber\\
&\leq& \frac{1}{m}\sum^m_{j_1=1}(\mathcal{W}_{-z}(P_{j_1}, S_{\mathtt{opt}})\nonumber \\
&&+\;(1+\sqrt{\beta})\mathcal{W}_{-z}(P_j, S_{\mathtt{opt}}))^2 \nonumber\\
&\leq& \frac{1}{m}\sum^m_{j_1=1}(2+\sqrt{\beta})\mathcal{W}_{-z}^2(P_{j_1}, S_{\mathtt{opt}})\nonumber \\
&&+\;(2+3\sqrt{\beta}+\beta)\mathcal{W}_{-z}^2(P_j, S_{\mathtt{opt}}) \nonumber\\
&=& (2+\sqrt{\beta})\mathtt{Cost}_{-z}(\mathbb{P}, S_{\mathtt{opt}}) \nonumber \\
&&+\; (2+3\sqrt{\beta}+\beta)\mathcal{W}_{-z}^2(P_j, S_{\mathtt{opt}}). \label{for-lem-4-2}
\end{eqnarray}
where the third inequality follows from the fact that $ (a+\lambda b)^2\leq (1+\lambda)a^2+(\lambda^2+\lambda)b^2$ for any real numbers $a, b$, and $\lambda$.
\end{proof}

\section{Omitted Proofs for Theorem~\ref{the-result2}}
\subsection{The Proof of Lemma~\ref{lem-5}}
\begin{proof}

Let $P_{j,t_s}$ denote the cluster induced by $t_s$ and $P_{j, t_s, h} = P_{j, t_s} \cap T_{j,s,h}$,$P_{j,*, h} = \bigcup_{t\in T} P_{j, t, h}$. We can rewrite the Wasserstein distance between $P_j$ and $T_j$ as
\begin{eqnarray}
&&\mathcal{W}(P_j, T_j)^2 \nonumber \\
&=& \sum_{t\in T_j}\sum_{h=0}^{\log n}\sum_{p\in P_{j,t,h}} w_{P_j}(p)||p-t||^2 \nonumber\\
&=& \sum_{t\in T}\sum_{h=1}^{\log n}\sum_{p\in P_{t,h}} w_{P_j}(p)||p-t||^2 \nonumber \\
&&+\; \sum_{t\in T}\sum_{p\in P_{t,0}} w_{P_j}(p)||p-t||^2 \nonumber\\
&\geq& \sum_{h=1}^{\log n}\bar{r}_j^{2}4^{h-1}\sum_{p\in P_{*, h}} w_{P_j}(p) \label{for-lem-5-1}
\end{eqnarray}
where the last inequality holds because the minimum radius of $T_{j,s,h}$ is $\bar{r}_j2^{h-1}$.
Let $\hat{t}_p$ denote the nearest point from $p$ in $\hat{T}$. Because   every point in $P_j$ is assigned to its nearest grid point, we can rewrite the Wasserstein distance between $P_j$ and $\hat{T}_j$ as
\begin{eqnarray}
&&\mathcal{W}(P_j, \hat{T}_j)^2 \nonumber \\
&=& \sum_{p\in P_{j,*,0}} w_{P_j}(p)||p-\hat{t}_p||^2 \nonumber \\
&&+\; \sum_{h=1}^{\log n}\sum_{p\in P_{j,*,h}} w_{P_j}(p)||p-\hat{t}_p||^2 \nonumber\\
&\leq& n  (\bar{r}_j\epsilon_{1}2^{-1}/ \sqrt{\alpha})^2 \nonumber \\
&&+\; \sum_{h=1}^{\log n}( \bar{r}_{j}\epsilon_{1}2^{h-1}/ \sqrt{\alpha})^2\sum_{p\in P_{j,*,h}} w_{P_j}(p) \nonumber\\
&=& \frac{\epsilon_{1}^2 }{4\alpha}\mathcal{W}(P_j, T_j)^2 \nonumber \\
&&+\; \epsilon_{1}^2 / \alpha \sum_{h=1}^{\log n}\bar{r}_j^{2}4^{h-1}\sum_{p\in P_{j,*, h}} w_{P_j}(p) \nonumber\\
&\leq& 1.25 \epsilon_{1}^2 / \alpha\mathcal{W}(P_j, T_j)^2 \nonumber \\
&\leq& 1.25 \epsilon_{1}^2 \mathcal{W}_{-z}(P_j, S_{\mathtt{opt}})^2 \label{for-lem-5-2}
\end{eqnarray}
where the second inequality holds because we build the grid with the side length $\bar{r}_j\epsilon_{1} 2^{h-1}/\sqrt{\alpha d}$ in each $h$-th layer and the last inequality follows from (\ref{for-lem-1-3}).
\end{proof}

\subsection{The Proof of Lemma~\ref{lem-6}}
\begin{proof}

We use $S^{\mathbb{T}}_{\mathtt{opt}}$ to denote the optimal solution on $\mathbb{T}$. First, we prove that $\mathtt{Cost}_{-z}(\mathbb{T}, S^{\mathbb{T}}_{\mathtt{opt}}) \leq (1+\sqrt{1.25}\epsilon_{1}) \mathtt{Cost}_{-z}(\mathbb{P}, S_{\mathtt{opt}})$.

\begin{eqnarray}
&&\mathtt{Cost}_{-z}(\mathbb{T}, S^{\mathbb{T}}_{\mathtt{opt}}) \nonumber \\
&\leq& \mathtt{Cost}_{-z}(\mathbb{T}, S_{\mathtt{opt}}) = \frac{1}{m}\sum^m_{j=1}\mathcal{W}_{-z}^2(\hat{T}_j, S_{\mathtt{opt}}) \nonumber\\
&\leq& \frac{1}{m}\sum^m_{j=1}(\mathcal{W}(\hat{T}_j, P_j) + \mathcal{W}_{-z}(P_j, S_{\mathtt{opt}}))^2 \nonumber\\
&\leq& \frac{1}{m}\sum^m_{j=1}(\sqrt{1.25}\epsilon_{1} + 1)^2\mathcal{W}_{-z}^2(P_j, S_{\mathtt{opt}}) \nonumber\\
&=& (1+\sqrt{1.25}\epsilon_{1})^2 \mathtt{Cost}_{-z}(\mathbb{P}, S_{\mathtt{opt}}) \label{for-lem-6-1}
\end{eqnarray}
where the second inequality follows from the triangle inequality of Wasserstein distance and the third inequality follows from Lemma~\ref{lem-5}. It means that the instance $\mathbb{T}$ is a good approximation for the instance $\mathbb{P}$. Then we prove that there exists a weighted point set $\tilde{S}$ whose support is $\bar{G}$ which can well approximate $S^{\mathbb{T}}_{\mathtt{opt}}$.

For convenience, we define the distance function $\mathtt{d}(p, U):=\min_{q\in U}||p-q||$ for any point $p$ and point set $U$ in the space. 
It is easy to see that for any point $s$ satisfies $\mathtt{d}(s, \bigcup_{ \hat{T} \in \mathbb{T} }\hat{T})\leq\Delta$, we can conclude that $s$ lies within the ball $ \mathbb{B}(t_s, 2^{\lceil\log \mathtt{d}(s, \bigcup_{ \hat{T} \in \mathbb{T} }\hat{T})\rceil})$, where   $t_s = \arg\min\{||s-t||\mid t\in \bigcup_{ \hat{T} \in \mathbb{T} }\hat{T}\}$. So $s$ lies within the grid of side length $ \epsilon_{2} 2^{\lceil\log \mathtt{d}(s, \bigcup_{ \hat{T} \in \mathbb{T} }\hat{T})\rceil-1}/\sqrt{d}\leq \epsilon_{2} \mathtt{d}(s, \bigcup_{ \hat{T} \in \mathbb{T} }\hat{T})/\sqrt{d} $. Then we have $\mathtt{d}(s, \bar{G})\leq \epsilon_{2} \mathtt{d}(s, \bigcup_{ \hat{T} \in \mathbb{T} }\hat{T})$ . We suppose $S^{\mathbb{T}}_{\mathtt{opt}}=\{s_1, \cdots, s_k\}$, let $\mathtt{supp}(\tilde{S})=\{ \tilde{s}_1, \cdots,\tilde{s}_k \}$ where $\tilde{s}_i$ is the nearest point in $\bar{G}$ to $s_i$ and $w_{\tilde{S}}(\tilde{s}_i) = w_{S^{\mathbb{T}}_{\mathtt{opt}}}(s_i)$. Then we have $||s_i-\tilde{s}_i||=\mathtt{d}(s_i,\bar{G})\leq \epsilon_{2} \mathtt{d}(s_i, \bigcup_{ \hat{T} \in \mathbb{T} }\hat{T})$. From the above we can conclude that
\begin{eqnarray}
&&\mathcal{W}(\tilde{S}, S^{\mathbb{T}}_{\mathtt{opt}})^2 \nonumber \\
&=& \sum_{s\in S^{\mathbb{T}}_{\mathtt{opt}}} w_{S^{\mathbb{T}}_{\mathtt{opt}}}(s)||s-\tilde{s}||^2 \nonumber \\
&\leq& \epsilon_{2}^2 \sum_{s\in S^{\mathbb{T}}_{\mathtt{opt}}} w_{S^{\mathbb{T}}_{\mathtt{opt}}}(s) \mathtt{d}(s,\cup_{ \hat{T} \in \mathbb{T} }\hat{T})^2\nonumber \\
&\leq&  \epsilon_{2}^2 \mathcal{W}_{-z}(\hat{T}_j, S^{\mathbb{T}}_{\mathtt{opt}})^2 \label{for-lem-6-2}
\end{eqnarray}
holds for any $j$.
So we have
\begin{eqnarray}
&&\mathtt{Cost}_{-z}(\mathbb{T}, \tilde{S}) = \frac{1}{m}\sum_{j=1}^{m}\mathcal{W}_{-z}^2(\hat{T}_j, \tilde{S})\nonumber \\
&\leq& \frac{1}{m}\sum_{j=1}^{m}(\mathcal{W}_{-z}(\hat{T}_j, S^{\mathbb{T}}_{\mathtt{opt}}) + \mathcal{W}(S^{\mathbb{T}}_{\mathtt{opt}}, \tilde{S}))^2 \nonumber\\
&\leq& (1+\epsilon_{2})^2 \mathtt{Cost}_{-z}(\mathbb{T}, S^{\mathbb{T}}_{\mathtt{opt}}) \nonumber\\
&\leq& (1+\epsilon_{2})^2 (1+\sqrt{1.25}\epsilon_{1})^2 \mathtt{Cost}_{-z}(\mathbb{P}, S_{\mathtt{opt}}) \label{for-lem-6-3}
\end{eqnarray}
where the first inequality follows from the triangle inequality of Wasserstein distance and the second inequality follows from (\ref{for-lem-6-3}). It is easy to know $\mathtt{Cost}_{-z}(\mathbb{T}, \tilde{G})\leq \mathtt{Cost}_{-z}(\mathbb{T}, \tilde{S}) \leq (1+\epsilon_{2})^2 (1+\sqrt{1.25}\epsilon_{1})^2 \mathtt{Cost}_{-z}(\mathbb{P}, S_{\mathtt{opt}})$, so the proof is completed. 
\end{proof}

\subsection{Extension in Doubling Metric}
Doubling dimension is a concept used in computational geometry and algorithmic analysis to measure the intrinsic complexity of metric spaces. It provides a quantitative measure of how fast the size of a metric space increases as its radius doubles.
\begin{definition}[Doubling Dimension]
\label{def-doubling-dimension}
The doubling dimension of a data set $P$ is the smallest number $\rho>0$, such that for any $p \in P$ and $r \geq 0, P \cap \mathbb{B}(p, 2 r)$ is always covered by the union of at most $2^\rho$ balls with radius $r$.
\end{definition}

By recursively applying Definition \ref{def-doubling-dimension} $\log(1/\epsilon)$ times, we can conclude that for a given dataset $P$ within the ball $\mathbb{B}(p, r)$, if $P$ has a doubling dimension of $\rho$, then it can be covered by $(1/\epsilon)^\rho$ balls with a radius of $\epsilon r$. 

In the algorithm proposed in Section~\ref{sec-low}, we use a grid-based approach in the ``Generate the anchor points'' and ``Construct the support'' steps to find a set of points within the ball $\mathbb{B}(p,r2^h)$ such that their distances to other points in the ball are less than $\epsilon r2^{h-1}$. If the dataset has a low doubling dimension instead of Euclidean dimension, we can use the $2$-approximation Gonzalez's algorithm for the $k$-center problem~\cite{gonzalez1985clustering}. Because the points in $\mathbb{B}(p,r2^h)$ can be covered by $(4/\epsilon)^\rho$ balls of radius $\epsilon r2^{h-2}$,  with a $2$-approximation $k$-center clustering algorithm, we can obtain $(4/\epsilon)^\rho$ points such that their distances to other points in the ball are less than $\epsilon r2^{h-1}$. Namely, by using the Gonzalez's algorithm  we can achieve the similar result as the grid-based approach. Then the remaining parts of the proof can be extended to the case of low doubling dimension.

\section{Results for Two-side version}

First we give a more general definition of robust Wasserstein distance that both $P$ and $Q$ may contain outliers.

\begin{definition}[Wasserstein distance with $(z_1,z_2)$ outliers]
\label{def-2sidewd}
Let $n>z_1,z_2\geq 0$. Suppose $P=\{p_1, p_2, \cdots, p_{n_1}\}$ and $Q=\{q_1, q_2, \cdots, q_{n_2}\}$ are two sets of  weighted points in   $\mathbb{R}^d$; also $\sum^{n_1}_{i=1}w_P(p_i)=n $ and $ \sum^{n_2}_{j=1}w_Q(q_j)=n-z_1+z_2$.  
The \textbf{Wasserstein distance with $(z_1,z_2)$ outliers} from $P$ to $Q$ is 
\begin{eqnarray}
\mathcal{W}_{-z_1,-z_2}(P, Q)=\min_{P^{\prime}\in \mathbb{P}^P_{-z_1}, Q^{\prime}\in \mathbb{P}^Q_{-z_2}} \mathcal{W}(P^{\prime},Q^{\prime}).\label{for-2sidewd}
\end{eqnarray} 
The sets $P^{Q,z_1}$ and $Q^{P,z_2}$ denote the corresponding optimal subsets from $\mathbb{P}^P_{-z_1}$ and $\mathbb{P}^Q_{-z_2}$,  
 which are called ``\textbf{the inliers of $P$ induced by $Q$ and $z_1$}'' and ``\textbf{the inliers of $Q$ induced by $P$ and $z_2$}'', respectively. Clearly, the definition of $P^Q$  in definition~\ref{def-fwd} is equivalent to $P^{Q,z_1}$ defined here when $w_Q(Q)=w_P(P)-z_1$.
\end{definition}

Similarly, we can define the two-side version of the $k$-sparse Wasserstein barycenter.

\begin{definition}[$k$-sparse WB with $(z_1,z_2)$ outliers]
\label{def-kwboutlier-2side}
Let $n>z_1,z_2\geq 0$. Suppose the input $\mathbb{P}$ contains $m\geq 1$ weighted point sets $P_1, P_2, \cdots, P_m$ where each $P_j$ has the total weight $n$, the problem of $k$-sparse Wasserstein Barycenter with $(z_1,z_2)$ outliers for a given $k\in \mathbb{Z}^+$ is to construct a new weighted point set $S$ with  $|\mathtt{supp}(S)|=k$, such that 
\begin{eqnarray}
\mathtt{Cost}_{-z_1,-z_2}(\mathbb{P}, S)=\frac{1}{m}\sum^m_{j=1}\mathcal{W}_{-z_1,-z_2}^2(P_j, S) \label{for-kwboutliers-2side}
\end{eqnarray}
is minimized. 
\end{definition}
Throughout this section, we always use $S_{\mathtt{opt}}$ to denote the optimal solution of (\ref{for-kwboutliers-2side}). With the above definitions in place, we can obtain the more general versions of Theorem \ref{the-result1} and Theorem \ref{the-result1-2}. If we set $z_2=0$,  Theorem \ref{the-result1-2side} and Theorem \ref{the-result1-2-2side} are equivalent to Theorem \ref{the-result1} and Theorem \ref{the-result1-2}.

\begin{theorem}
\label{the-result1-2side}
If we compute the optimal weight distribution with $((z_1+z_2), z_2)$ outliers over $T_j$ in the clustering based LP algorithm, we have 
\begin{eqnarray}
&&\mathtt{Cost}_{-(z_1+z_2),-z_2}(\mathbb{P}, \tilde{T}_{j_0})\nonumber \\
&\leq& (2+\sqrt{\alpha})^2\cdot\mathtt{Cost}_{-z_1,-z_2}(\mathbb{P}, S_{\mathtt{opt}}).
\end{eqnarray}
\end{theorem}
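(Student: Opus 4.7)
The plan is to mirror the proof of Theorem~\ref{the-result1}, upgrading each one-side ingredient to its two-side counterpart while carefully tracking the asymmetric weights. First I would set up the objects: take $\hat{z}=\lceil z_1/w_{\mathtt{min}}\rceil$, let $P_j^{(1)}=P_j^{S_{\mathtt{opt}}, z_1}$ and $S^{*}=S_{\mathtt{opt}}^{P_j, z_2}$ (by Definition~\ref{def-2sidewd} both have total weight $n-z_1$ and $\mathcal{W}(P_j^{(1)}, S^{*})=\mathcal{W}_{-z_1,-z_2}(P_j, S_{\mathtt{opt}})$), and define the candidate $T'_j := T_j^{P_j^{(1)}, z_1}$ (weight $n-z_1$, support inside $\mathtt{supp}(T_j)$). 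The two-side versions of Claim~\ref{cl-for-kzmeans} and Claim~\ref{cl-m1} read $\mathtt{Mean}^{k+\hat{z}}(P_j)\leq\mathtt{Mean}^{k}_{-z_1}(P_j)\leq \mathcal{W}_{-z_1,-z_2}^{2}(P_j, S_{\mathtt{opt}})$ (for the second inequality use $\mathtt{supp}(S^{*})$ as $\leq k$ cluster centers and $P_j\setminus P_j^{(1)}$ as the $z_1$ outliers), which combined with the $\alpha$-approximation of $\mathcal{A}$ gives $\mathcal{W}(T_j, P_j)\leq \sqrt{\alpha}\,\mathcal{W}_{-z_1,-z_2}(P_j, S_{\mathtt{opt}})$. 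Then the triangle-inequality chain of (\ref{for-lem-1-4})--(\ref{for-lem-1-2}) carries over verbatim to yield the two-side Lemma~\ref{lem-1} analog $\mathcal{W}(T'_j, S^{*})\leq (1+\sqrt{\alpha})\,\mathcal{W}_{-z_1,-z_2}(P_j, S_{\mathtt{opt}})$.

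The hard step, and the reason the theorem has to budget $z_1+z_2$ outliers on the $P_{j'}$-side rather than just $z_1$, is the Lemma~\ref{lem-2} analog: bounding $\mathcal{W}_{-(z_1+z_2),-z_2}(P_{j'}, T'_j)$ uniformly in $j'$. I would construct an explicit two-stage coupled transport. Set $S^{*\prime}=S_{\mathtt{opt}}^{P_{j'}, z_2}$; both $S^{*}$ and $S^{*\prime}$ are weight-$(n-z_1)$ subdistributions of the weight-$(n-z_1+z_2)$ set $S_{\mathtt{opt}}$, so their pointwise minimum $R$ has total weight at least $n-z_1-z_2$. Pick any $M\preceq R$ with $w(M)=n-z_1-z_2$, and restrict both the optimal flow for $\mathcal{W}_{-z_1,-z_2}(P_{j'}, S_{\mathtt{opt}})$ and the optimal flow from $S^{*}$ to $T'_j$ to the mass units that end at / start from $M$. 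This exhibits $A\preceq P_{j'}$ with $w(A)=n-z_1-z_2$ (shedding exactly $z_1+z_2$ weight) matched to $M$ at cost $\leq \mathcal{W}_{-z_1,-z_2}(P_{j'}, S_{\mathtt{opt}})$, and $B\preceq T'_j$ with $w(B)=n-z_1-z_2$ (shedding exactly $z_2$ weight) matched from $M$ at cost $\leq (1+\sqrt{\alpha})\,\mathcal{W}_{-z_1,-z_2}(P_j, S_{\mathtt{opt}})$; so $(A,B)$ is a feasible $(-(z_1+z_2), -z_2)$-transport witness and the standard Wasserstein triangle inequality at the common-weight intermediate $M$ gives
\begin{align*}
\mathcal{W}_{-(z_1+z_2),-z_2}(P_{j'}, T'_j) &\leq \mathcal{W}(A,B) \leq \mathcal{W}(A, M)+\mathcal{W}(M, B)\\
&\leq \mathcal{W}_{-z_1,-z_2}(P_{j'}, S_{\mathtt{opt}})+(1+\sqrt{\alpha})\,\mathcal{W}_{-z_1,-z_2}(P_j, S_{\mathtt{opt}}),
\end{align*}
which is the exact two-side analog of (\ref{for-lem-2-1}). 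The main obstacle is precisely this coupling: without the extra $z_2$ slack on the $P_{j'}$-side one is forced to compare $S^{*}$ and $S^{*\prime}$ directly, i.e.\ to bound the internal Wasserstein distance between two weight-$(n-z_1)$ sub-distributions of the same $S_{\mathtt{opt}}$, which admits no constant-factor bound in terms of $\mathtt{Cost}_{-z_1,-z_2}(\mathbb{P}, S_{\mathtt{opt}})$; the extra $z_2$ budget absorbs exactly the $R$-deficit and sidesteps the obstruction.

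Everything else follows the Theorem~\ref{the-result1} template: squaring and averaging the displayed bound over $j'$ (invoking $(a+\delta b)^2\leq (1+\delta)a^2+(\delta^2+\delta)b^2$ with $\delta=1+\sqrt{\alpha}$) produces $\mathtt{Cost}_{-(z_1+z_2),-z_2}(\mathbb{P}, T'_j)\leq (2+\sqrt{\alpha})\,\mathtt{Cost}_{-z_1,-z_2}(\mathbb{P}, S_{\mathtt{opt}})+(2+3\sqrt{\alpha}+\alpha)\,\mathcal{W}_{-z_1,-z_2}^{2}(P_j, S_{\mathtt{opt}})$. Because $\tilde{T}_j$ is the LP-optimal weight distribution on $\mathtt{supp}(T_j)$ for the $(z_1+z_2, z_2)$-outlier cost, $\mathtt{Cost}_{-(z_1+z_2),-z_2}(\mathbb{P}, \tilde{T}_j)\leq \mathtt{Cost}_{-(z_1+z_2),-z_2}(\mathbb{P}, T'_j)$; taking the best candidate $\tilde{T}_{j_0}$ and using $\min_j \mathcal{W}_{-z_1,-z_2}^{2}(P_j, S_{\mathtt{opt}})\leq \mathtt{Cost}_{-z_1,-z_2}(\mathbb{P}, S_{\mathtt{opt}})$ collapses both terms into the claimed $(2+\sqrt{\alpha})^2$ ratio.
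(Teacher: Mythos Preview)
Your proposal is correct and follows essentially the same route as the paper: a two-side analog of Lemma~\ref{lem-1} (the paper's Lemma~\ref{lem-7}), then a two-side analog of Lemma~\ref{lem-2} (the paper's Lemma~\ref{lem-8}) built around a weight-$(n-z_1-z_2)$ subdistribution of $S_{\mathtt{opt}}$ lying below two different $z_2$-trimmed copies, and finally the same averaging-and-minimizing step. The only cosmetic differences are that the paper takes the candidate $T_j^{S_{\mathtt{opt}},z_1}$ (rather than your $T'_j=T_j^{P_j^{S_{\mathtt{opt}},z_1},z_1}$ routed through $P_j$) and packages the coupling step via the abstract inequalities (\ref{triangle-inequality-for-rwb-1}) and (\ref{inequality-for-rwb-1}) with intermediate $S_{\mathtt{opt}}^{j_1\prime}\preceq S_{\mathtt{opt}}^{T_j,z_2}\cap S_{\mathtt{opt}}^{P_{j_1},z_2}$, whereas you write out the explicit flow-restriction argument with $M\preceq S_{\mathtt{opt}}^{P_j,z_2}\cap S_{\mathtt{opt}}^{P_{j'},z_2}$; both realize the same idea and yield the identical bound.
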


\begin{theorem}
\label{the-result1-2-2side}
If we run the  $\beta$-approximate $k$-means clustering with $z_1$ outliers algorithm $\mathcal{B}$ instead of $\mathcal{A}$, and compute the optimal weight distribution with $((2z_1+z_2), z_2)$ outliers over $T_j$ in the clustering based LP algorithm, we have 
\begin{eqnarray}
&&\mathtt{Cost}_{-(2z_1+z_2), -z_2}(\mathbb{P}, \tilde{T}_{j_0})\nonumber \\
&\leq& (2+\sqrt{\beta})^2\cdot\mathtt{Cost}_{-z_1,-z_2}(\mathbb{P}, S_{\mathtt{opt}}).
\end{eqnarray}
\end{theorem}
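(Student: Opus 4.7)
The plan is to mirror the proof of Theorem~\ref{the-result1-2} line by line, while carefully tracking both the $z_1$ outliers on the $P_j$ side and the $z_2$ outliers on the barycenter side. The payoff is that almost every step of the one-side argument survives verbatim once the right two-side auxiliary objects are in place, so the real work is localized in generalizing Lemma~\ref{lem-3}.

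First, I would establish a two-side analogue of Lemma~\ref{lem-3}: there exists $T'_j \preceq T_j$ with $\mathtt{supp}(T'_j) = \mathtt{supp}(T_j)$ and $w_{T'_j}(T'_j) = n - 2z_1$ such that $\mathcal{W}_{-(z_1+z_2),0}(S_{\mathtt{opt}}, T'_j) \leq (1+\sqrt{\beta})\,\mathcal{W}_{-z_1,-z_2}(P_j, S_{\mathtt{opt}})$ (so that the remaining weights on both sides match $n-2z_1$). The construction follows the one-side template: let $P_j^{T_j}$ denote the inliers of $P_j$ induced by $T_j$ under the $z_1$-outlier distance, and let $P_j^{S_{\mathtt{opt}},z_1}$ denote the inliers of $P_j$ induced by $S_{\mathtt{opt}}$ in the two-side distance $\mathcal{W}_{-z_1,-z_2}$. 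Both sets have total weight $n-z_1$ and share $\mathtt{supp}(P_j)$, so $P'_j := P_j^{T_j} \cap P_j^{S_{\mathtt{opt}},z_1}$ has weight $\geq n-2z_1$; we take $T'_j$ to be any subset of $T_j^{P'_j}$ of total weight $n-2z_1$. The triangle inequality then yields $\mathcal{W}(S_{\mathtt{opt}}^{P'_j,z_2}, T'_j) \leq \mathcal{W}(S_{\mathtt{opt}}^{P'_j,z_2}, P'_j) + \mathcal{W}(P'_j, T_j^{P'_j}) \leq \mathcal{W}_{-z_1,-z_2}(P_j, S_{\mathtt{opt}}) + \mathcal{W}_{-z_1}(P_j, T_j)$, where the second summand is controlled via the $\beta$-approximation of $\mathcal{B}$ and Claim~\ref{cl-m1} (which compares $\mathtt{Mean}^k_{-z_1}(P_j)$ to any $k$-point solution, including $S_{\mathtt{opt}}$ viewed as such through its two-side inliers).

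Second, I would extend Lemma~\ref{lem-4} by the same computation as in the one-side case, applied to $\mathcal{W}_{-(2z_1+z_2),-z_2}(P_{j_1}, T'_j)$. By two applications of the triangle inequality and the monotonicity fact (\ref{inequality-for-rwb-1}) appropriately generalized to the two-side setting, we get
\begin{eqnarray*}
\mathcal{W}_{-(2z_1+z_2),-z_2}(P_{j_1}, T'_j) &\leq& \mathcal{W}_{-z_1,-z_2}(P_{j_1}, S_{\mathtt{opt}}) \\
&& +\;(1+\sqrt{\beta})\,\mathcal{W}_{-z_1,-z_2}(P_j, S_{\mathtt{opt}}).
\end{eqnarray*}
Squaring by $(a+\lambda b)^2 \leq (1+\lambda)a^2 + (\lambda^2+\lambda)b^2$ with $\lambda = 1+\sqrt{\beta}$ and averaging over $j_1$ gives exactly $\mathtt{Cost}_{-(2z_1+z_2),-z_2}(\mathbb{P}, T'_j) \leq (2+\sqrt{\beta})\mathtt{Cost}_{-z_1,-z_2}(\mathbb{P}, S_{\mathtt{opt}}) + (2+3\sqrt{\beta}+\beta)\mathcal{W}_{-z_1,-z_2}^2(P_j, S_{\mathtt{opt}})$.

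The wrap-up is then identical to the proof of Theorem~\ref{the-result1-2}. Since $\tilde{T}_j$ is the optimal weight distribution over $\mathtt{supp}(T_j)$ with $(2z_1+z_2, z_2)$ outliers, $\mathtt{Cost}_{-(2z_1+z_2),-z_2}(\mathbb{P}, \tilde{T}_j) \leq \mathtt{Cost}_{-(2z_1+z_2),-z_2}(\mathbb{P}, T'_j)$; minimizing over $j$ and using $\min_j \mathcal{W}_{-z_1,-z_2}^2(P_j, S_{\mathtt{opt}}) \leq \mathtt{Cost}_{-z_1,-z_2}(\mathbb{P}, S_{\mathtt{opt}})$ collapses the bound to $(2+\sqrt{\beta})^2\mathtt{Cost}_{-z_1,-z_2}(\mathbb{P}, S_{\mathtt{opt}})$. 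The main obstacle is the bookkeeping inside the generalized Lemma~\ref{lem-3}: one must verify that $P'_j$ has total weight at least $n-2z_1$ (so that $T_j^{P'_j}$ is heavy enough to host $T'_j$) and check that the one-side monotonicity (\ref{inequality-for-rwb-1}) extends to the two-side setting when both arguments can shed outliers, by treating the $z_2$ units from the $S_{\mathtt{opt}}$ side as a second independent dummy-point reduction in the spirit of Claim~\ref{cl-fwd}.
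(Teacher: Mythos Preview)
Your proposal is correct and follows essentially the same route as the paper: it proves two-side analogues of Lemmas~\ref{lem-3} and~\ref{lem-4} (the paper's Lemmas~\ref{lem-9} and~\ref{lem-10}) by intersecting the two inlier sets $P_j^{T_j,z_1}$ and $P_j^{S_{\mathtt{opt}},z_1}$, routing through this intersection via the triangle inequality, and then wrapping up exactly as in Theorem~\ref{the-result1-2}. The only place your sketch is slightly loose is the weight accounting in the Lemma~\ref{lem-3} analogue (e.g., $S_{\mathtt{opt}}^{P'_j,z_2}$ does not quite have the right total weight when $w_{P'_j}(P'_j)=n-z_1-z'$ with $z'>0$); the paper handles this by writing the intermediate triangle step with the correct outlier budgets $\mathcal{W}_{-(z_2+z')}(S_{\mathtt{opt}},P'_j)+\mathcal{W}_{-(z_1-z')}(P'_j,T'_j)$ and then invoking the monotonicity fact (\ref{inequality-for-rwb-1}), which is exactly the ``bookkeeping'' you flag as the remaining obstacle.
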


\subsection{The Proof of Theorem~\ref{the-result1-2side}}

For the convenience of later proofs, we present some variations of the triangle inequality under the robust Wasserstein distance: For any weighted set $O,P,Q$ with $w_{P}(P)=n, w_{O}(O)=n-z_1, w_{Q}(Q)=n-z_1+z_2$, we have
\begin{eqnarray}
&&\mathcal{W}_{-z_1,-z_2}(P, Q)  \nonumber \\
&=&  \mathcal{W}(P^{Q,z_1}, Q^{P,z_2}) \leq \mathcal{W}(P^{O,z_1}, Q^{O,z_2})   \nonumber \\
&\leq& \mathcal{W}(P^{O,z_1}, O) + \mathcal{W}(Q^{O,z_2}, O) \nonumber \\
&=& \mathcal{W}_{-z_1}(P, O) + \mathcal{W}_{-z_2}(Q, O) 
\label{triangle-inequality-for-rwb-1}
\end{eqnarray}
For any weighted set $O,P,Q$ with $w_{P}(P)=n, w_{O}(O)=n-z_1, w_{Q}(Q)=n-(z_1+z_2)$, we have
\begin{eqnarray}
&&\mathcal{W}_{-(z_1+z_2)}(P, Q)  \nonumber \\
&=& \mathcal{W}(P^{Q}, Q) \leq \mathcal{W}(P^{O^Q}, Q)   \nonumber \\
&\leq& \mathcal{W}(P^{O^Q}, O^Q) + \mathcal{W}(O^Q, O)  \nonumber \\
& = & \mathcal{W}_{-(z_1+z_2)}(P, O^Q) + \mathcal{W}_{-z_2}(O, Q) \nonumber \\
&\leq& \mathcal{W}_{-z_1}(P, O) + \mathcal{W}_{-z_2}(O, Q) 
\label{triangle-inequality-for-rwb-2}
\end{eqnarray}
where the last inequality follows from (\ref{inequality-for-rwb-1}).

\begin{lemma}
\label{lem-7}
For each $1\leq j\leq m$, let the obtained cluster centers $ T_j=\{t_1, t_2, \cdots, t_{\lambda(k+\hat{z})}\}$ from $\mathcal{A}$; also each weight $w_{T_j}(t_s)=$ the total weight of the $s$-th cluster. Then  $\mathcal{W}_{-z_1,-z_2}(T_j, S_{\mathtt{opt}})\leq (1+\sqrt{\alpha}) \mathcal{W}_{-z_1,-z_2}(P_j, S_{\mathtt{opt}})$.
\end{lemma}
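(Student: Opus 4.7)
The plan is to follow the same three-step template used in the proof of Lemma~\ref{lem-1}, but every inequality that previously used one-sided Wasserstein distance must be lifted to the two-sided variant via the triangle inequalities (\ref{triangle-inequality-for-rwb-1})--(\ref{triangle-inequality-for-rwb-2}). Recall that $T_j$ is obtained by running the $\alpha$-approximate $(k+\hat{z})$-means algorithm $\mathcal{A}$ on $P_j$ (where here $\hat{z}$ should be read as $\hat{z}_1=\lceil z_1/w_{\mathtt{min}}\rceil$), and that $w_{T_j}(T_j)=n$, $w_{P_j}(P_j)=n$, $w_{S_{\mathtt{opt}}}(S_{\mathtt{opt}})=n-z_1+z_2$.

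First, I would establish the clustering bound $\mathcal{W}^2(T_j,P_j)\leq \alpha\,\mathcal{W}_{-z_1,-z_2}^2(P_j,S_{\mathtt{opt}})$. The core observation is that the set $S_{\mathtt{opt}}^{P_j,z_2}$ has support size at most $k$ and total weight $n-z_1$, so when paired with the outlier selection $P_j\setminus P_j^{S_{\mathtt{opt}},z_1}$ it exhibits a feasible (though not necessarily optimal) solution to $k$-means clustering with $z_1$ outliers on $P_j$; since in Wasserstein matching every unit of mass is routed to a specific destination rather than to the nearest one, this solution's $k$-means cost is at most $\mathcal{W}^2(P_j^{S_{\mathtt{opt}},z_1},S_{\mathtt{opt}}^{P_j,z_2})=\mathcal{W}_{-z_1,-z_2}^2(P_j,S_{\mathtt{opt}})$. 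Hence $\mathtt{Mean}^{k}_{-z_1}(P_j)\leq \mathcal{W}_{-z_1,-z_2}^2(P_j,S_{\mathtt{opt}})$. Combining with the two-side analogue of Claim~\ref{cl-for-kzmeans}, namely $\mathtt{Mean}^{k+\hat{z}_1}(P_j)\leq \mathtt{Mean}^{k}_{-z_1}(P_j)$ (which holds by the same ``each outlier becomes its own singleton cluster'' argument), and with the $\alpha$-approximation guarantee of $\mathcal{A}$, yields the desired clustering bound.

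Second, I would derive the two-side triangle inequality $\mathcal{W}_{-z_1,-z_2}(T_j,S_{\mathtt{opt}})\leq \mathcal{W}(T_j,P_j)+\mathcal{W}_{-z_1,-z_2}(P_j,S_{\mathtt{opt}})$. The trick is to choose inlier witnesses carefully: take $P^*_j=P_j^{S_{\mathtt{opt}},z_1}$ and $S^*=S_{\mathtt{opt}}^{P_j,z_2}$ (both of weight $n-z_1$), and let $T^*_j=T_j^{P^*_j}$. Then $(T^*_j,S^*)$ is a feasible pair for $\mathcal{W}_{-z_1,-z_2}(T_j,S_{\mathtt{opt}})$, giving $\mathcal{W}_{-z_1,-z_2}(T_j,S_{\mathtt{opt}})\leq \mathcal{W}(T^*_j,S^*)$. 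The ordinary triangle inequality on balanced Wasserstein then gives $\mathcal{W}(T^*_j,S^*)\leq \mathcal{W}(T^*_j,P^*_j)+\mathcal{W}(P^*_j,S^*)=\mathcal{W}_{-z_1}(T_j,P^*_j)+\mathcal{W}_{-z_1,-z_2}(P_j,S_{\mathtt{opt}})$. Finally $\mathcal{W}_{-z_1}(T_j,P^*_j)\leq \mathcal{W}(T_j,P_j)$ follows by restricting any optimal flow between $T_j$ and $P_j$ to its sub-flow into $P^*_j$ (a feasible sub-distribution of $T_j$ of mass $n-z_1$ into $P^*_j$), an inequality of the same flavor as (\ref{inequality-for-rwb-1}).

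Combining the two steps yields
\begin{eqnarray*}
\mathcal{W}_{-z_1,-z_2}(T_j,S_{\mathtt{opt}}) &\leq& \mathcal{W}(T_j,P_j)+\mathcal{W}_{-z_1,-z_2}(P_j,S_{\mathtt{opt}}) \\
&\leq& (1+\sqrt{\alpha})\,\mathcal{W}_{-z_1,-z_2}(P_j,S_{\mathtt{opt}}),
\end{eqnarray*}
which is exactly the claimed bound. The main obstacle I anticipate is step two: verifying carefully that restricting the witness inlier sets to $(T^*_j,S^*)$ really is feasible for $\mathcal{W}_{-z_1,-z_2}(T_j,S_{\mathtt{opt}})$ and that the sub-flow argument legitimately upper-bounds $\mathcal{W}_{-z_1}(T_j,P^*_j)$ by $\mathcal{W}(T_j,P_j)$, since the two-sided setting makes both endpoints subject to removal and the relevant inequality (\ref{triangle-inequality-for-rwb-1}) as stated is only for $w(O)=n-z_1$, which is why we manually bridge through $O=P_j$ (of weight $n$) via a direct inlier-projection argument rather than quoting (\ref{triangle-inequality-for-rwb-1}) off the shelf.
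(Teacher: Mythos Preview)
Your proposal is correct and follows essentially the same route as the paper: first bound $\mathcal{W}^2(T_j,P_j)\le\alpha\,\mathcal{W}_{-z_1,-z_2}^2(P_j,S_{\mathtt{opt}})$ via the chain $\mathtt{Mean}^{k+\hat z_1}\le\mathtt{Mean}^k_{-z_1}\le\mathcal{W}_{-z_1,-z_2}^2$, then obtain $\mathcal{W}_{-z_1,-z_2}(T_j,S_{\mathtt{opt}})\le\mathcal{W}(T_j,P_j)+\mathcal{W}_{-z_1,-z_2}(P_j,S_{\mathtt{opt}})$ by pivoting at $P_j^{S_{\mathtt{opt}},z_1}$. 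One small simplification: your pivot $P^*_j=P_j^{S_{\mathtt{opt}},z_1}$ already has weight exactly $n-z_1$, so (\ref{triangle-inequality-for-rwb-1}) applies directly with $O=P^*_j$ and there is no need to re-derive it by hand---the paper simply quotes (\ref{triangle-inequality-for-rwb-1}) at that point and then uses the same sub-flow bound $\mathcal{W}_{-z_1}(T_j,P^*_j)\le\mathcal{W}(T_j,P_j)$ that you use.
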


\begin{proof}
Similar with (\ref{for-lem-1-3}) we have 
\begin{eqnarray}
&&\mathcal{W}(T_j, P_j)^2 \leq \alpha\mathtt{Mean}^{k+\hat{z_1}}(P_j) \nonumber\\ 
&\leq& \alpha\mathtt{Mean}^{k}_{-z_1}(P_j) \leq \alpha\mathcal{W}_{-z_1,-z_2}^2(P_j, S_{\mathtt{opt}}). \label{for-lem-7-1}
\end{eqnarray}
Then we have 
\begin{eqnarray}
&&\mathcal{W}_{-z_1,-z_2}(T_j, S_{\mathtt{opt}}) \nonumber\\
&\leq& \mathcal{W}_{-z_1}(T_j, P_j^{S_{\mathtt{opt}},z_1})+\mathcal{W}_{-z_2}(S_{\mathtt{opt}}, P_j^{S_{\mathtt{opt}},z_1}) \nonumber\\
& \leq & \mathcal{W}(T_j, P_j)+ \mathcal{W}_{-z_2}(S_{\mathtt{opt}}, P_j^{S_{\mathtt{opt}},z_1}) \nonumber\\
& = & \mathcal{W}(T_j, P_j)+ \mathcal{W}_{-z_1,-z_2}(P_j, S_{\mathtt{opt}}) \nonumber\\
&\leq& (1+\sqrt{\alpha})\mathcal{W}_{-z_1,-z_2}(P_j, S_{\mathtt{opt}})  \label{for-lem-7-2}
\end{eqnarray}
where the first inequality follows from (\ref{triangle-inequality-for-rwb-1}) and the last inequality follows from (\ref{for-lem-7-1}).
\end{proof}

\begin{lemma}
\label{lem-8}
For any $1\leq j \leq m$, 
\begin{eqnarray}
&&\mathtt{Cost}_{-(z_1+z_2),-z_2}(\mathbb{P}, T_j^{S_{\mathtt{opt}},z_1}) \nonumber\\
&\leq& (2+\sqrt{\alpha})\mathtt{Cost}_{-z_1,-z_2}(\mathbb{P}, S_{\mathtt{opt}}) \nonumber\\
&& +\; (2+3\sqrt{\alpha}+\alpha)\mathcal{W}_{-z_1,-z_2}^2(P_j, S_{\mathtt{opt}})\nonumber
\end{eqnarray}
\end{lemma}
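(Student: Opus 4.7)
The plan is to mirror the proof of Lemma~\ref{lem-2} (and, for the algorithm-$\mathcal{B}$ analog, Lemma~\ref{lem-4}), but with two modifications for the two-side outlier setting: invoke Lemma~\ref{lem-7} instead of Lemma~\ref{lem-1}, and use a carefully chosen weight-$(n-z_1-z_2)$ subset $O$ of $S_{\mathtt{opt}}$ as a bridge in the triangle inequality, since $S_{\mathtt{opt}}$ itself has the ``wrong'' weight $n-z_1+z_2$ and cannot serve as the bridge directly the way it does in the one-side proof.

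The first step will be a per-distribution bound: for each $j_1\in\{1,\dots,m\}$,
\[
\mathcal{W}_{-(z_1+z_2),-z_2}(P_{j_1},T_j^{S_{\mathtt{opt}},z_1}) \le \mathcal{W}_{-z_1,-z_2}(P_{j_1},S_{\mathtt{opt}}) + (1+\sqrt{\alpha})\,\mathcal{W}_{-z_1,-z_2}(P_j,S_{\mathtt{opt}}).
\]
To construct the bridge, I set $S^o_{j_1}=S_{\mathtt{opt}}^{P_{j_1},z_2}$ and $S^o_j=S_{\mathtt{opt}}^{T_j,z_2}$ (each of weight $n-z_1$ with support $\mathtt{supp}(S_{\mathtt{opt}})$), form their weighted intersection $S^o_{j_1}\cap S^o_j$ as in the proof of Lemma~\ref{lem-3}, and truncate it to a set $O$ of weight exactly $n-z_1-z_2$; weighted inclusion-exclusion inside $S_{\mathtt{opt}}$ guarantees this is feasible, and by construction $O\preceq S^o_{j_1}$ and $O\preceq S^o_j$. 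Then (\ref{triangle-inequality-for-rwb-1}) with outlier parameters $(z_1+z_2,z_2)$ and bridge $O$ splits the left-hand side into $\mathcal{W}_{-(z_1+z_2)}(P_{j_1},O)+\mathcal{W}_{-z_2}(T_j^{S_{\mathtt{opt}},z_1},O)$. Applying (\ref{inequality-for-rwb-1}) to the first summand (using $O\preceq S^o_{j_1}$) gives $\mathcal{W}_{-(z_1+z_2)}(P_{j_1},O)\le\mathcal{W}_{-z_1}(P_{j_1},S^o_{j_1})=\mathcal{W}_{-z_1,-z_2}(P_{j_1},S_{\mathtt{opt}})$; applying (\ref{inequality-for-rwb-1}) to the second summand (using $O\preceq S^o_j$) gives $\mathcal{W}_{-z_2}(T_j^{S_{\mathtt{opt}},z_1},O)\le\mathcal{W}(T_j^{S_{\mathtt{opt}},z_1},S^o_j)=\mathcal{W}_{-z_1,-z_2}(T_j,S_{\mathtt{opt}})$, which Lemma~\ref{lem-7} further bounds by $(1+\sqrt{\alpha})\mathcal{W}_{-z_1,-z_2}(P_j,S_{\mathtt{opt}})$.

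The second step is aggregation, which is routine and identical in structure to Lemma~\ref{lem-2}: square the pointwise bound using the elementary inequality $(a+\delta b)^2\le(1+\delta)a^2+(\delta^2+\delta)b^2$ with $\delta=1+\sqrt{\alpha}$ (so that $1+\delta=2+\sqrt{\alpha}$ and $\delta^2+\delta=2+3\sqrt{\alpha}+\alpha$), and average over $j_1$. The averaged first term produces $(2+\sqrt{\alpha})\,\mathtt{Cost}_{-z_1,-z_2}(\mathbb{P},S_{\mathtt{opt}})$, and the second term, which is constant in $j_1$, produces $(2+3\sqrt{\alpha}+\alpha)\,\mathcal{W}^2_{-z_1,-z_2}(P_j,S_{\mathtt{opt}})$, giving the claim.

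The main obstacle lies in the choice of $O$. In the one-side proof of Lemma~\ref{lem-2}, $S_{\mathtt{opt}}$ has weight $n-z$ and can be used as the bridge ``as is''; here $S_{\mathtt{opt}}$ carries $2z_2$ units of surplus weight, and the specific $2z_2$ one removes depends on both endpoints $P_{j_1}$ and $T_j$. The weighted intersection $S^o_{j_1}\cap S^o_j$ resolves this tension because it produces a single $O$ that sits below $S^o_{j_1}$ and below $S^o_j$ simultaneously in the $\preceq$ order, so one bridge suffices for both applications of (\ref{inequality-for-rwb-1}) on the two legs of the triangle. Apart from this observation, the rest of the proof is essentially a line-by-line adaptation of Lemma~\ref{lem-2}'s proof to the two-side notation.
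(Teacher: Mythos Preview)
Your proposal is correct and follows essentially the same approach as the paper's own proof: the bridge set $O$ you construct (a weight-$(n-z_1-z_2)$ truncation of the weighted intersection $S_{\mathtt{opt}}^{P_{j_1},z_2}\cap S_{\mathtt{opt}}^{T_j,z_2}$) is exactly the paper's $S_{\mathtt{opt}}^{j_1\prime}$, and the subsequent use of (\ref{triangle-inequality-for-rwb-1}), (\ref{inequality-for-rwb-1}), Lemma~\ref{lem-7}, and the $(a+\delta b)^2$ aggregation all match line for line. If anything, your explicit inclusion--exclusion check that the intersection has weight at least $n-z_1-z_2$ is slightly more careful than the paper, which simply asserts such an $S_{\mathtt{opt}}^{j_1\prime}$ exists.
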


\begin{proof}
For any $1\leq j_1\leq m$, Let $S_{\mathtt{opt}}^{j_1\prime}$ be an arbitrary weighted point set that satisfies  $S_{\mathtt{opt}}^{j_1\prime} \preceq S_{\mathtt{opt}}^{T_j,z_2}\cap S_{\mathtt{opt}}^{P_{j_1},z_2}$ and $w_{S_{\mathtt{opt}}^{j_1\prime}}(S_{\mathtt{opt}}^{j_1\prime}) = n - z_1 - z_2$, we have
\begin{eqnarray}
&&\mathcal{W}_{-(z_1+z_2),-z_2}(P_{j_1}, T_j^{S_{\mathtt{opt}},z_1}) \nonumber\\
&\leq& \mathcal{W}_{-(z_1+z_2)}(P_{j_1}, S_{\mathtt{opt}}^{j_1\prime}) + \mathcal{W}_{-z_2}(T_j^{S_{\mathtt{opt}},z_1}, S_{\mathtt{opt}}^{j_1\prime}) \nonumber\\
&\leq& \mathcal{W}_{-z_1}(P_{j_1}, S_{\mathtt{opt}}^{P_{j_1},z_2}) + \mathcal{W}(T_j^{S_{\mathtt{opt}},z_1}, S_{\mathtt{opt}}^{T_j,z_2}) \nonumber\\
& = & \mathcal{W}_{-z_1,-z_2}(P_{j_1}, S_{\mathtt{opt}}) + \mathcal{W}_{-z_1,-z_2}(T_j, S_{\mathtt{opt}}) \nonumber\\
&\leq& \mathcal{W}_{-z_1,-z_2}(P_{j_1}, S_{\mathtt{opt}}) \nonumber\\
&& +\  (1+\sqrt{\alpha})\mathcal{W}_{-z_1,-z_2}(P_j, S_{\mathtt{opt}}) \label{for-lem-8-1}
\end{eqnarray}
where the first inequality follows from (\ref{triangle-inequality-for-rwb-1}), the second inequality follows from (\ref{inequality-for-rwb-1}) and the last inequality follows from Lemma~\ref{lem-7}. Then we obtain the following bound by applying (\ref{for-lem-8-1}):
\begin{eqnarray}
&&\mathtt{Cost}_{-(z_1+z_2),-z_2}(\mathbb{P}, T_j^{S_{\mathtt{opt}},z_1})\nonumber\\
&=& \frac{1}{m}\sum^m_{j_1=1}\mathcal{W}_{-2z}^2(P_{j_1}, T^{\prime}_j) \nonumber\\
&\leq& \frac{1}{m}\sum^m_{j_1=1}(\mathcal{W}_{-z_1,-z_2}(P_{j_1}, S_{\mathtt{opt}}) \nonumber\\
&& +\; (1+\sqrt{\alpha})\mathcal{W}_{-z_1,-z_2}(P_j, S_{\mathtt{opt}}))^2 \nonumber\\
&\leq& \frac{1}{m}\sum^m_{j_1=1}(2+\sqrt{\alpha})\mathcal{W}_{-z_1,-z_2}^2(P_{j_1}, S_{\mathtt{opt}})\nonumber\\
&& +\;(2+3\sqrt{\alpha}+\alpha)\mathcal{W}_{-z_1,-z_2}^2(P_j, S_{\mathtt{opt}}) \nonumber\\
&=& (2+\sqrt{\alpha})\mathtt{Cost}_{-z_1,-z_2}(\mathbb{P}, S_{\mathtt{opt}}) \nonumber\\
&& +\; (2+3\sqrt{\alpha}+\alpha)\mathcal{W}_{-z_1,-z_2}^2(P_j, S_{\mathtt{opt}}) \label{for-lem-8-2}
\end{eqnarray}
where the third inequality follows from the fact that $ (a+\lambda b)^2\leq (1+\lambda)a^2+(\lambda^2+\lambda)b^2$ for any real numbers $a, b$, and $\lambda$.
\end{proof}

\begin{proof} 

[\textbf{of Theorem~\ref{the-result1-2side}}] 
Because of $\tilde{T}_j$ is the optimal weight distribution over $T_j$, so we have $\mathtt{Cost}_{-(z_1+z_2),-z_2}(\mathbb{P}, \tilde{T}_{j})\leq \mathtt{Cost}_{-(z_1+z_2),-z_2}(\mathbb{P}, T_j^{S_{\mathtt{opt}},z_1})$. For the best candidate $\tilde{T}_{j_0}$, we have
\begin{eqnarray}
&&\mathtt{Cost}_{-(z_1+z_2),-z_2}(\mathbb{P}, \tilde{T}_{j_0}) \nonumber\\
&\leq& \min_{1\leq j\leq m} \mathtt{Cost}_{-(z_1+z_2),-z_2}(\mathbb{P}, T_j^{S_{\mathtt{opt}},z_1}) \nonumber\\
&\leq& (2+\sqrt{\alpha})\mathtt{Cost}_{-z_1,-z_2}(\mathbb{P}, S_{\mathtt{opt}}) \nonumber\\
&& +\; (2+3\sqrt{\alpha}+\alpha)\min_{1\leq j\leq m}\mathcal{W}_{-z_1,-z_2}^2(P_j, S_{\mathtt{opt}})  \nonumber\\ 
&\leq& (2+\sqrt{\alpha})^2\cdot\mathtt{Cost}_{-z_1,-z_2}(\mathbb{P}, S_{\mathtt{opt}}) , 
\end{eqnarray}
where the second inequality follows from Lemme~\ref{lem-8} and the last inequality follows from $\min_{1\leq j\leq m}\mathcal{W}_{-z_1,-z_2}^2(P_j, S_{\mathtt{opt}}) \leq \mathtt{Cost}_{-z_1,-z_2}(\mathbb{P}, S_{\mathtt{opt}})$.
\end{proof}

\subsection{The Proof of Theorem~\ref{the-result1-2-2side}}

\begin{lemma}
\label{lem-9}
There exists a weighted point set $T_j^{\prime}$ with $\mathtt{supp}(T_j^{\prime})= \mathtt{supp}(T_j)$ and $w_{T_j^{\prime}}(T_j^{\prime}) = n - 2z_1$ that satisfies $\mathcal{W}_{-(z_1+z_2)}(S_{\mathtt{opt}}, T_j^{\prime}) \leq (1+\sqrt{\beta}) \mathcal{W}_{-z_1,-z_2}(P_j, S_{\mathtt{opt}})$.
\end{lemma}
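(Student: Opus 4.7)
The plan is to adapt the one-sided argument of Lemma~\ref{lem-3} to the two-sided setting, using the monotonicity relation~(\ref{inequality-for-rwb-1}) together with the triangle inequality of Wasserstein distance. The main new twist is that $S_{\mathtt{opt}}$ is no longer ``full weight'' (it has weight $n-z_1+z_2$), and the two-sided distance drops an additional $z_2$ mass from it, so the weight bookkeeping is more delicate than in Lemma~\ref{lem-3}, but the skeleton is the same.

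First, I would define the two natural inlier witnesses on the $P_j$ side: let $P_j^{T_j}$ be the inliers of $P_j$ induced by $T_j$ under the $\beta$-approximate algorithm $\mathcal{B}$ with $z_1$ outliers, and let $P_j^{S_{\mathtt{opt}},z_1}$ be the inliers of $P_j$ realizing $\mathcal{W}_{-z_1,-z_2}(P_j,S_{\mathtt{opt}})$; both have total weight $n-z_1$. Let $P_j^{\prime}$ denote their pointwise-minimum intersection, which by inclusion--exclusion has total weight at least $n-2z_1$, and fix any subset $\hat P_j\preceq P_j^{\prime}$ of weight exactly $n-2z_1$. To construct $T_j^{\prime}$, take the optimal transport plan realizing $\mathcal{W}(T_j,P_j^{T_j})=\mathcal{W}_{-z_1}(P_j,T_j)$ and restrict it to the mass that terminates inside $\hat P_j$ (this is legitimate because $\hat P_j\preceq P_j^{T_j}$); the source-side marginal of this restricted sub-flow is a weighted set $T_j^{\prime}\preceq T_j$ of total weight $n-2z_1$, with $\mathtt{supp}(T_j^{\prime})\subseteq \mathtt{supp}(T_j)$ and $\mathcal{W}(\hat P_j,T_j^{\prime})\leq \mathcal{W}_{-z_1}(P_j,T_j)$.

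The core estimate then follows from the Wasserstein triangle inequality with $\hat P_j$ as the waypoint:
\begin{eqnarray*}
\mathcal{W}_{-(z_1+z_2)}(S_{\mathtt{opt}},T_j^{\prime}) \leq \mathcal{W}_{-(z_1+z_2)}(S_{\mathtt{opt}},\hat P_j) + \mathcal{W}(\hat P_j,T_j^{\prime}).
\end{eqnarray*}
For the first summand, because $\hat P_j\preceq P_j^{S_{\mathtt{opt}},z_1}$ and the weight gap $w_{S_{\mathtt{opt}}}-w_{P_j^{S_{\mathtt{opt}},z_1}}=z_2$, the monotonicity inequality~(\ref{inequality-for-rwb-1}) collapses it to $\mathcal{W}_{-z_2}(S_{\mathtt{opt}},P_j^{S_{\mathtt{opt}},z_1})$, which in turn equals $\mathcal{W}_{-z_1,-z_2}(P_j,S_{\mathtt{opt}})$ by the definition of the two-sided distance. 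For the second summand, combining the $\beta$-approximation guarantee of $\mathcal{B}$ with the two-sided analogue of Claim~\ref{cl-m1} -- namely $\mathtt{Mean}^{k}_{-z_1}(P_j)\leq \mathcal{W}_{-z_1,-z_2}^2(P_j,S_{\mathtt{opt}})$, which holds because $S_{\mathtt{opt}}^{P_j,z_2}$ is a feasible $k$-support solution for $k$-means with $z_1$ outliers on $P_j$ -- gives $\mathcal{W}_{-z_1}(P_j,T_j)\leq \sqrt{\beta}\,\mathcal{W}_{-z_1,-z_2}(P_j,S_{\mathtt{opt}})$. Adding the two bounds yields exactly $(1+\sqrt{\beta})\mathcal{W}_{-z_1,-z_2}(P_j,S_{\mathtt{opt}})$.

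The main obstacle is the weight bookkeeping in the first summand: in the one-sided case of Lemma~\ref{lem-3} one just invokes (\ref{inequality-for-rwb-1}) once, but here three different totals ($n-z_1+z_2$ for $S_{\mathtt{opt}}$, $n-z_1$ for $P_j^{S_{\mathtt{opt}},z_1}$, and $n-2z_1$ for $\hat P_j$) must be tracked simultaneously, and one has to verify that the two-sided distance $\mathcal{W}_{-z_1,-z_2}(P_j,S_{\mathtt{opt}})$ collapses correctly into the one-sided $\mathcal{W}_{-z_2}(S_{\mathtt{opt}},P_j^{S_{\mathtt{opt}},z_1})$ when the $P_j$-side is pinned to the optimal inlier. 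Once this is confirmed, the remainder is a line-by-line translation of the proof of Lemma~\ref{lem-3}.
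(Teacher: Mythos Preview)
Your proposal is correct and follows essentially the same route as the paper: define the intersection $P_j^{T_j}\cap P_j^{S_{\mathtt{opt}},z_1}$, use it as a waypoint in a triangle-inequality argument, and bound the two legs via the monotonicity relation~(\ref{inequality-for-rwb-1}) and the $\beta$-approximation of $\mathcal{B}$ combined with the two-sided analogue of Claim~\ref{cl-m1}. The only cosmetic difference is that you trim the intersection down to weight exactly $n-2z_1$ before using it as the waypoint, whereas the paper keeps $P_j'$ at its full weight $n-z_1-z'$ and instead invokes the mixed triangle inequality~(\ref{triangle-inequality-for-rwb-2}) with the extra parameter $z'$; your choice avoids tracking $z'$ and makes the final two-term bound slightly cleaner, but the substance is identical.
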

\begin{proof}
Let $P_j^{\prime} = P_j^{T_j,z_1} \cap P_j^{S_{\mathtt{opt}},z_1}$.  
Note that the total weight of $P_j^{\prime}$ is at least $n - 2z_1$, assume that $w_{P_j^{\prime}}(P_j^{\prime}) = n-z_1-z^{\prime}$, $0\leq z^{\prime}\leq z_1$. Let $T_j^{\prime}$ is any set satisfies $T_j^{\prime}\preceq T_j^{P_j^{\prime}}$ and $w_{T_j^{\prime}}(T_j^{\prime})=n-2z_1$, we have
\begin{eqnarray}
&& \mathcal{W}_{-(z_1+z_2)}(S_{\mathtt{opt}}, T_j^{\prime})  \nonumber\\
&\leq&  \mathcal{W}_{-(z_2+z^{\prime})}(S_{\mathtt{opt}}, P_j^{\prime})  + \mathcal{W}_{-(z_1-z^{\prime})}(P_j^{\prime}, T_j^{\prime}) \nonumber\\
&\leq& \mathcal{W}_{-z_2}(S_{\mathtt{opt}}, P_j^{S_{\mathtt{opt}},z_1})  + \mathcal{W}_{-z_1}(P_j^{T_j,z_1}, T_j^{\prime}) \nonumber\\
&\leq& \mathcal{W}_{-z_2}(S_{\mathtt{opt}}, P_j^{S_{\mathtt{opt}},z_1})  + \mathcal{W}(P_j^{T_j,z_1}, T_j) \nonumber\\
&=& \mathcal{W}_{-z_1,-z_2}(P_j, S_{\mathtt{opt}}) + \mathcal{W}_{-z_1}(P_j, T_j) \label{for-lem-9-1}
\end{eqnarray}
where the first inequality follows from (\ref{triangle-inequality-for-rwb-2}), the second and third inequality follows from (\ref{inequality-for-rwb-1}) and the fact that $P_j^{\prime}\preceq P_j^{S_{\mathtt{opt}},z_1}$, $P_j^{\prime}\preceq P_j^{T_j,z_1}$ and $T_j^{\prime}\preceq T_j$.
Because of $T_j$ is obtained from $\mathcal{B}$, we have $\mathcal{W}_{-z}(P_j, T_j)^2 < \beta\mathtt{Cost}^{k}_{-z}(P_j)$, From Claim \ref{cl-m1} we know that $\mathtt{Cost}^{k}_{-z}(P)\leq \mathcal{W}_{-z_1,-z_2}(P, S_{\mathtt{opt}})$, then (\ref{for-lem-9-1}) implies $\mathcal{W}_{-(z_1+z_2)}(S_{\mathtt{opt}}, T_j^{\prime}) \leq (1+\sqrt{\beta}) \mathcal{W}_{-z_1,-z_2}(P_j, S_{\mathtt{opt}})$.
\end{proof}

\begin{lemma}
\label{lem-10}
For any $1\leq j \leq m$, the weighted point set $T_j^{\prime}$ in Lemma~\ref{lem-9} also satisfies   
$\mathtt{Cost}_{-(2z_1+z_2), -z_2}(\mathbb{P}, T^{\prime}_j) \leq (2+\sqrt{\beta})\mathtt{Cost}_{-z_1, -z_2}(\mathbb{P}, S_{\mathtt{opt}}) + (2+3\sqrt{\beta}+\beta) \mathcal{W}_{-z_1, -z_2}^2(P_j, S_{\mathtt{opt}}) $.
\end{lemma}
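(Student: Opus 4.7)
The plan is to mimic the proof of Lemma~\ref{lem-4}, with Lemma~\ref{lem-9} playing the role of Lemma~\ref{lem-3}. Fix $1\leq j_1\leq m$. The first step is to establish the two-sided triangle-type bound
\[
\mathcal{W}_{-(2z_1+z_2),-z_2}(P_{j_1},T_j') \leq \mathcal{W}_{-z_1,-z_2}(P_{j_1},S_{\mathtt{opt}}) + \mathcal{W}_{-(z_1+z_2)}(S_{\mathtt{opt}},T_j').
\]
To do this I would construct a common bridge $S^*\preceq S_{\mathtt{opt}}$ of total weight $n-2z_1-z_2$ that lies simultaneously in $S_{\mathtt{opt}}^{P_{j_1},z_2}$ and in $S_{\mathtt{opt}}^{T_j',z_1+z_2}$. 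Such an $S^*$ exists because, by inclusion--exclusion on pointwise minimum weights, the intersection has total weight at least $(n-z_1)+(n-2z_1)-(n-z_1+z_2)=n-2z_1-z_2$, which also matches the weight constraint required for any $(2z_1+z_2,z_2)$-outlier flow between $P_{j_1}$ and $T_j'$. Restricting the optimal plan for $\mathcal{W}_{-z_1,-z_2}(P_{j_1},S_{\mathtt{opt}})$ to the sinks in $S^*$ gives a subset $P'\preceq P_{j_1}$ of weight $n-2z_1-z_2$ with $\mathcal{W}(P',S^*)\leq \mathcal{W}_{-z_1,-z_2}(P_{j_1},S_{\mathtt{opt}})$; restricting the optimal plan for $\mathcal{W}_{-(z_1+z_2)}(S_{\mathtt{opt}},T_j')$ to the sources in $S^*$ gives $T_j''\preceq T_j'$ of the same weight with $\mathcal{W}(S^*,T_j'')\leq \mathcal{W}_{-(z_1+z_2)}(S_{\mathtt{opt}},T_j')$. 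Applying the plain Wasserstein triangle inequality to the concatenation $P'\to S^*\to T_j''$ then yields the claimed bound, since $(P',T_j'')$ is a feasible inlier pair for $\mathcal{W}_{-(2z_1+z_2),-z_2}(P_{j_1},T_j')$.

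Next, I would invoke Lemma~\ref{lem-9} to replace $\mathcal{W}_{-(z_1+z_2)}(S_{\mathtt{opt}},T_j')$ by $(1+\sqrt{\beta})\,\mathcal{W}_{-z_1,-z_2}(P_j,S_{\mathtt{opt}})$, obtaining, for every $j_1$,
\[
\mathcal{W}_{-(2z_1+z_2),-z_2}(P_{j_1},T_j') \leq \mathcal{W}_{-z_1,-z_2}(P_{j_1},S_{\mathtt{opt}}) + (1+\sqrt{\beta})\mathcal{W}_{-z_1,-z_2}(P_j,S_{\mathtt{opt}}).
\]
Squaring and using $(a+\lambda b)^2\leq (1+\lambda)a^2 + (\lambda^2+\lambda)b^2$ with $\lambda=1+\sqrt{\beta}$, then averaging over $j_1\in\{1,\ldots,m\}$, converts the first term into $(2+\sqrt{\beta})\,\mathtt{Cost}_{-z_1,-z_2}(\mathbb{P},S_{\mathtt{opt}})$ and the second into $(2+3\sqrt{\beta}+\beta)\,\mathcal{W}_{-z_1,-z_2}^2(P_j,S_{\mathtt{opt}})$, yielding the stated bound on $\mathtt{Cost}_{-(2z_1+z_2),-z_2}(\mathbb{P},T_j')$.

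The main obstacle is the first step: making the bridge through $S_{\mathtt{opt}}$ rigorous in the two-sided setting. The difficulty, already encountered in Lemmas~\ref{lem-3} and~\ref{lem-9}, is that the inliers of $S_{\mathtt{opt}}$ induced by $P_{j_1}$ and by $T_j'$ need not coincide, so one must show by weight-counting that their pointwise intersection is large enough to host a common subset $S^*$ of the exact weight $n-2z_1-z_2$. Once $S^*$ is fixed, restricting the two optimal transport plans to $S^*$ is straightforward, and the rest of the derivation (squaring, the numerical inequality, averaging) is a mechanical two-sided adaptation of the corresponding calculation in the proof of Lemma~\ref{lem-4}.
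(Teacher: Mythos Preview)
Your proposal is correct and follows essentially the same route as the paper. The paper also bridges through a common subset $S_{\mathtt{opt}}^{j_1\prime}\preceq S_{\mathtt{opt}}^{T_j',\,z_1+z_2}\cap S_{\mathtt{opt}}^{P_{j_1},\,z_2}$ of weight $n-2z_1-z_2$, then applies the two-sided triangle inequality (\ref{triangle-inequality-for-rwb-1}) and the monotonicity (\ref{inequality-for-rwb-1}) to reach $\mathcal{W}_{-z_1,-z_2}(P_{j_1},S_{\mathtt{opt}})+\mathcal{W}_{-(z_1+z_2)}(S_{\mathtt{opt}},T_j')$, invokes Lemma~\ref{lem-9}, and finishes with the same squaring/averaging; your explicit plan-restriction and weight-count for the intersection are exactly what underlies those abstract inequalities (and in fact make the existence of the bridge more explicit than the paper does).
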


\begin{proof}
For any $1\leq j_1\leq m$, Let $S_{\mathtt{opt}}^{j_1\prime}$ be an arbitrary weighted point set that satisfies  $S_{\mathtt{opt}}^{j_1\prime} \preceq S_{\mathtt{opt}}^{T_j^{\prime},z_1+z_2}\cap S_{\mathtt{opt}}^{P_{j_1},z_2}$ and $w_{S_{\mathtt{opt}}^{j_1\prime}}(S_{\mathtt{opt}}^{j_1\prime}) = n - 2z_1 - z_2$, we have
\begin{eqnarray}
&&\mathcal{W}_{-(2z_1+z_2), -z_2}(P_{j_1}, T^{\prime}_j) \nonumber\\
&\leq& \mathcal{W}_{-(2z_1+z_2)}(P_{j_1}, S_{\mathtt{opt}}^{j_1\prime}) + \mathcal{W}_{-z_2}(T^{\prime}_j, S_{\mathtt{opt}}^{j_1\prime}) \nonumber\\
&\leq& \mathcal{W}_{-z_1}(P_{j_1}, S_{\mathtt{opt}}^{P_{j_1},z_2}) + \mathcal{W}(T^{\prime}_j, S_{\mathtt{opt}}^{T_j^{\prime},z_1+z_2}) \nonumber\\
&=& \mathcal{W}_{-z_1,-z_2}(P_{j_1}, S_{\mathtt{opt}}) + \mathcal{W}_{-(z_1+z_2)}(S_{\mathtt{opt}}, T^{\prime}_j) \nonumber\\
&\leq& \mathcal{W}_{-z_1,-z_2}(P_{j_1}, S_{\mathtt{opt}}) \nonumber\\
&& +\; (1+\sqrt{\beta})\mathcal{W}_{-z_1,-z_2}(P_j, S_{\mathtt{opt}}) \label{for-lem-10-1}
\end{eqnarray}
where the first inequality follows from (\ref{triangle-inequality-for-rwb-1}), the second inequality follows from (\ref{inequality-for-rwb-1}) and the last inequality follows from Lemma~\ref{lem-9}. Then we obtain the following bound by applying (\ref{for-lem-10-1}):
\begin{eqnarray}
&&\mathtt{Cost}_{-(2z_1+z_2), -z_2}(\mathbb{P}, T^{\prime}_j) \nonumber\\
&=& \frac{1}{m}\sum^m_{j_1=1}\mathcal{W}_{-(2z_1+z_2), -z_2}^2(P_{j_1}, T^{\prime}_j) \nonumber\\
&\leq& \frac{1}{m}\sum^m_{j_1=1}(\mathcal{W}_{-z_1,-z_2}(P_{j_1}, S_{\mathtt{opt}}) \nonumber\\
&& +\; (1+\sqrt{\beta})\mathcal{W}_{-z_1,-z_2}(P_j, S_{\mathtt{opt}}))^2 \nonumber\\
&\leq& \frac{1}{m}\sum^m_{j_1=1}(2+\sqrt{\beta})\mathcal{W}_{-z_1,-z_2}^2(P_{j_1}, S_{\mathtt{opt}})\nonumber\\
&& +\;(2+3\sqrt{\beta}+\beta)\mathcal{W}_{-z_1,-z_2}^2(P_j, S_{\mathtt{opt}}) \nonumber\\
&=& (2+\sqrt{\beta})\mathtt{Cost}_{-z_1,-z_2}(\mathbb{P}, S_{\mathtt{opt}}) \nonumber\\
&& +\; (2+3\sqrt{\beta}+\beta)\mathcal{W}_{-z_1,-z_2}^2(P_j, S_{\mathtt{opt}}). \label{for-lem-10-2}
\end{eqnarray}
where the third inequality follows from the fact that $ (a+\lambda b)^2\leq (1+\lambda)a^2+(\lambda^2+\lambda)b^2$ for any real numbers $a, b$, and $\lambda$.
\end{proof}

\begin{proof} 

[\textbf{of Theorem~\ref{the-result1-2-2side}}]
Because of $\tilde{T}_j$ is the optimal weight distribution over $T_j$, so we have $\mathtt{Cost}_{-(2z_1+z_2), -z_2}(\mathbb{P}, \tilde{T}_{j})\leq \mathtt{Cost}_{-(2z_1+z_2), -z_2}(\mathbb{P}, T_{j}^{\prime})$. For the best candidate $\tilde{T}_{j_0}$, we have
\begin{eqnarray}
&&\mathtt{Cost}_{-(2z_1+z_2), -z_2}(\mathbb{P}, \tilde{T}_{j_0}) \nonumber\\
&\leq& \min_{1\leq j\leq m} \mathtt{Cost}_{-(2z_1+z_2), -z_2}(\mathbb{P}, T_{j}^{\prime}) \nonumber\\
&\leq& (2+\sqrt{\beta})\mathtt{Cost}_{-z_1,-z_2}(\mathbb{P}, S_{\mathtt{opt}}) \nonumber\\
&& +\; (2+3\sqrt{\beta}+\beta)\min_{1\leq j\leq m}\mathcal{W}_{-z_1,-z_2}^2(P_j, S_{\mathtt{opt}})  \nonumber\\ 
&\leq& (2+\sqrt{\beta})^2\cdot\mathtt{Cost}_{-z_1,-z_2}(\mathbb{P}, S_{\mathtt{opt}}) , 
\end{eqnarray}
where the second inequality follows from Lemme~\ref{lem-10} and the last inequality follows from 
$$\min_{1\leq j\leq m}\mathcal{W}_{-z_1,-z_2}^2(P_j, S_{\mathtt{opt}}) \leq \mathtt{Cost}_{-z_1,-z_2}(\mathbb{P}, S_{\mathtt{opt}})$$
\end{proof}

\section{Results for any \texorpdfstring{$l\geq 1$}{Lg}}
When $l \neq 2$, we need to use alternative base algorithms instead of $k$-means. For example, when $l = 1$, we should use $k$-median algorithm, and for $l > 2$, we use the $(k,l)$-clustering algorithm~\cite{cohen2021new}. In these cases, we can still derive similar conclusions. If we use an $(k,l)$-clustering algorithm $\mathcal{A}$ with an approximation ratio of $\alpha$, under our framework, we can obtain a solution with an approximation ratio of $O(2^{2l-2}(\alpha + 1))$. Similarly, if we use an $(k,l)$-clustering with outliers algorithm $\mathcal{B}$ with an approximation ratio of $\beta$, under our framework, we can obtain a solution with an approximation ratio of  $O(2^{2l-2}(\beta + 1))$.

Below, we present a variant of Theorem 1 for the case when $l \neq 2$. We provide the proof for this variant, and the proofs for the remaining theorems can be modified similarly. 
\begin{theorem}
\label{the-result1-l!2}
Our clustering based LP Algorithm returns a solution $\tilde{T}_{j_0}$ for $k$-sparse WB with outliers for any $l\geq 1$ and achieves the following quality guarantee:
\begin{eqnarray}
&&\mathtt{Cost}_{-z}(\mathbb{P}, \tilde{T}_{j_0}) \nonumber\\
&\leq& (2^{l-1}+2^{2l-2}(\alpha + 1))\cdot\mathtt{Cost}_{-z}(\mathbb{P}, S_{\mathtt{opt}}).
\end{eqnarray}
\end{theorem}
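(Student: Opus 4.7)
The plan is to mirror the proof of Theorem~\ref{the-result1} essentially line by line, replacing every appearance of the square/square-root with the $l$-th power/$l$-th root and using the convexity inequality $(a+b)^l \le 2^{l-1}(a^l + b^l)$ whenever the original proof invoked $(a+b)^2 \le 2(a^2+b^2)$. All three of the building blocks from Section~\ref{sec-alg}---Claim~\ref{cl-m1}, Claim~\ref{cl-for-kzmeans}, Lemma~\ref{lem-1}, and Lemma~\ref{lem-2}---carry over with only cosmetic modifications once $k$-means is replaced by $(k,l)$-clustering (with cost $\sum w(p) \min_s \|p-c_s\|^l$) as the base subroutine, and we interpret $\mathcal{A}$ as an $\alpha$-approximate $(k+\hat z,l)$-clustering algorithm.

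First I would establish the generalization of Lemma~\ref{lem-1}: taking $l$-th roots in the $\alpha$-approximation guarantee $\mathcal{W}^l(T_j,P_j) \le \alpha \cdot \mathtt{Cost}^{(k+\hat z,l)}(P_j) \le \alpha \cdot \mathcal{W}^l_{-z}(P_j,S_{\mathtt{opt}})$ gives $\mathcal{W}(T_j,P_j) \le \alpha^{1/l}\,\mathcal{W}_{-z}(P_j,S_{\mathtt{opt}})$. Plugging this into exactly the same chain of triangle inequalities used in (\ref{for-lem-1-4})--(\ref{for-lem-1-2}) (which is distance-based and does not depend on $l$) yields
\begin{eqnarray*}
\mathcal{W}_{-z}(T_j, S_{\mathtt{opt}}) \le (1+\alpha^{1/l})\,\mathcal{W}_{-z}(P_j, S_{\mathtt{opt}}).
\end{eqnarray*}

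Next I would rerun the computation of Lemma~\ref{lem-2}. The same triangle-inequality step yields, for every $j_1$,
\begin{eqnarray*}
\mathcal{W}_{-z}(P_{j_1},T_j^{S_{\mathtt{opt}}}) \le \mathcal{W}_{-z}(P_{j_1},S_{\mathtt{opt}}) + (1+\alpha^{1/l})\,\mathcal{W}_{-z}(P_j,S_{\mathtt{opt}}).
\end{eqnarray*}
Raising both sides to the $l$-th power and applying $(a+b)^l \le 2^{l-1}(a^l+b^l)$ (rather than the $(a+\delta b)^2$ identity used for $l=2$), then averaging over $j_1$, gives the $l$-analogue
\begin{eqnarray*}
\mathtt{Cost}_{-z}(\mathbb{P},T_j^{S_{\mathtt{opt}}}) \le 2^{l-1}\mathtt{Cost}_{-z}(\mathbb{P},S_{\mathtt{opt}}) + 2^{l-1}(1+\alpha^{1/l})^l\,\mathcal{W}^l_{-z}(P_j,S_{\mathtt{opt}}).
\end{eqnarray*}

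Finally, since $\tilde T_j$ is the LP-optimal weighting on the support $T_j$, we have $\mathtt{Cost}_{-z}(\mathbb{P},\tilde T_j) \le \mathtt{Cost}_{-z}(\mathbb{P},T_j^{S_{\mathtt{opt}}})$, and picking the best index $j_0$ along with the bound $\min_j \mathcal{W}^l_{-z}(P_j,S_{\mathtt{opt}}) \le \mathtt{Cost}_{-z}(\mathbb{P},S_{\mathtt{opt}})$ collapses both terms. One more application of the convexity inequality gives $(1+\alpha^{1/l})^l \le 2^{l-1}(1+\alpha)$, so the right-hand side becomes $\bigl(2^{l-1} + 2^{2l-2}(1+\alpha)\bigr)\mathtt{Cost}_{-z}(\mathbb{P},S_{\mathtt{opt}})$, which is exactly the claimed factor. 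The main obstacle is not conceptual but notational: one has to verify that Claim~\ref{cl-m1} and Claim~\ref{cl-for-kzmeans}, which were stated only for $l=2$, still hold verbatim when ``$k$-means'' is replaced by ``$(k,l)$-clustering'' (the proof is unchanged, since both use only that an outlier can be viewed as a singleton cluster with zero cost), and that the $\alpha^{1/l}$-for-$\sqrt{\alpha}$ substitution is consistent throughout---after which the rest of the argument is a mechanical transcription.
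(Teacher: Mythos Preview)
Your proposal is correct and follows essentially the same route as the paper's own proof: both replace $\sqrt{\alpha}$ by $\alpha^{1/l}$ in the analogue of Lemma~\ref{lem-1}, apply the convexity inequality $(a+b)^l\le 2^{l-1}(a^l+b^l)$ (the paper quotes it as the $t=1$ case of $(a+b)^l\le(1+t)^{l-1}a^l+(1+1/t)^{l-1}b^l$) in place of the quadratic identity, and then use it once more to bound $(1+\alpha^{1/l})^l\le 2^{l-1}(1+\alpha)$ before invoking LP-optimality and the $\min$ argument. Your additional remark that Claims~\ref{cl-m1} and~\ref{cl-for-kzmeans} need to be restated for $(k,l)$-clustering is a point the paper leaves implicit.
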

\begin{proof} 
First, it is easy to see that for each $1\leq j\leq m$, let the obtained cluster centers $ T_j=\{t_1, t_2, \cdots, t_{\lambda(k+\hat{z})}\}$ from $\mathcal{A}$; also each weight $w_{T_j}(t_s)=$ the total weight of the $s$-th cluster. Then  
\begin{eqnarray}
\mathcal{W}_{-z}(T_j, S_{\mathtt{opt}})\leq (1+\sqrt[l]{\alpha}) \mathcal{W}_{-z}(P_j, S_{\mathtt{opt}}).
\end{eqnarray}
This is because $\mathcal{W}^l(T_j, P_j) \leq \alpha\mathcal{W}_{-z}^l(P_j, S_{\mathtt{opt}})$ and $\mathcal{W}_{-z}(T_j, S_{\mathtt{opt}})\leq \mathcal{W}_{-z}(P_j, S_{\mathtt{opt}}) + \mathcal{W}(T_j, P_j)$.
For any $1\leq j_1\leq m$, we have
\begin{eqnarray}
&&\mathcal{W}_{-z}(P_{j_1}, T^{S_{\mathtt{opt}}}_j)
\leq \mathcal{W}(P^{S_{\mathtt{opt}}}_{j_1}, T^{S_{\mathtt{opt}}}_j)\nonumber\\
&\leq& \mathcal{W}(P^{S_{\mathtt{opt}}}_{j_1}, S_{\mathtt{opt}})+\mathcal{W}(S_{\mathtt{opt}}, T^{S_{\mathtt{opt}}}_j) \nonumber\\
&=& \mathcal{W}_{-z}(P_{j_1}, S_{\mathtt{opt}})+\mathcal{W}_{-z}(T_j, S_{\mathtt{opt}})\nonumber\\
&\leq& \mathcal{W}_{-z}(P_{j_1}, S_{\mathtt{opt}})+(1+\sqrt[l]{\alpha})\mathcal{W}_{-z}(P_j, S_{\mathtt{opt}}), \label{for-the-6-1}
\end{eqnarray}
where the second inequality follows from the triangle inequality of Wasserstein distance and the third inequality follows from Lemma~\ref{lem-1}. From Corollary A.1 of~\cite{makarychev2019performance}, we know that for every $a\geq 0,b\geq 0,0\leq t \leq 1, l > 1$, we have
\begin{eqnarray}
(a+b)^l\leq (1+t)^{l-1}a^l+(1+\frac{1}{t})^{l-1}b^l \label{triangle-l}
\end{eqnarray}
Then we obtain the following bound by applying (\ref{for-the-6-1})  and set $t=1$ in (\ref{triangle-l}):
\begin{small}
\begin{eqnarray}
&&\mathtt{Cost}_{-z}(\mathbb{P}, T^{S_{\mathtt{opt}}}_j) \nonumber\\
&\leq& \frac{1}{m}\sum^m_{j_1=1}(\mathcal{W}_{-z}(P_{j_1}, S_{\mathtt{opt}})+(1+\sqrt[l]{\alpha})\mathcal{W}_{-z}(P_j, S_{\mathtt{opt}}))^l \nonumber\\
&\leq& \frac{1}{m}\sum^m_{j_1=1}2^{l-1}\mathcal{W}_{-z}^{l}(P_{j_1}, S_{\mathtt{opt}})\nonumber\\
&& +\;2^{l-1}(1+\sqrt[l]{\alpha})^{l}\mathcal{W}_{-z}^{l}(P_j, S_{\mathtt{opt}}) \nonumber\\
&\leq& 2^{l-1}\mathtt{Cost}_{-z}(\mathbb{P}, S_{\mathtt{opt}}) + 2^{2l-2}(1+\alpha)\mathcal{W}_{-z}^{l}(P_j, S_{\mathtt{opt}}) \label{for-the-6-2}
\end{eqnarray}
\end{small}
where the second inequality and the last inequality follows from (\ref{triangle-l}).

Because of $\tilde{T}_j$ is the optimal weight distribution over $T_j$, so we have $\mathtt{Cost}_{-z}(\mathbb{P}, \tilde{T}_j)\leq \mathtt{Cost}_{-z}(\mathbb{P}, T^{S_{\mathtt{opt}}}_j)$. For the best candidate $\tilde{T}_{j_0}$, we have
\begin{small}
\begin{eqnarray}
&& \mathtt{Cost}_{-z}(\mathbb{P}, \tilde{T}_{j_0}) \nonumber\\
&\leq& \min_{1\leq j\leq m}\mathtt{Cost}_{-z}(\mathbb{P}, T^{S_{\mathtt{opt}}}_j) \nonumber\\
&\leq& 2^{l-1}\mathtt{Cost}_{-z}(\mathbb{P}, S_{\mathtt{opt}}) + 2^{2l-2}(1+\alpha)\min_{1\leq j\leq m}\mathcal{W}_{-z}^{l}(P_j, S_{\mathtt{opt}}) \nonumber\\ 
&\leq& (2^{l-1}+2^{2l-2}(\alpha + 1))\cdot\mathtt{Cost}_{-z}(\mathbb{P}, S_{\mathtt{opt}})
\end{eqnarray}
\end{small}
where the second inequality follows from (\ref{for-the-6-2}) and the last inequality follows from $\min_{1\leq j\leq m}\mathcal{W}_{-z}^l(P_j, S_{\mathtt{opt}})\leq \mathtt{Cost}_{-z}(\mathbb{P}, S_{\mathtt{opt}})$.
\end{proof}
By using inequality (\ref{triangle-l}), we can modify Theorems \ref{the-result1-2} in a similar manner to hold for any $l$.

\section{Other Detailed Proof}
\subsection{The Proof of Claim \ref{cl-fwd}}

Here we present a more general version of Claim \ref{cl-fwd} for the two-side case. To achieve this, we introduce a dummy point $q_*$ to $Q$ with a weight equal to $z_1$, and a dummy point $p_*$ to $P$ with a weight equal to $z_2$. We enforce the distance between $q_*$ and each $p_i$, as well as the distance between $p_*$ and each $q_i$, to be $0$, while setting the distance between $q_*$ and $p_*$ to $\infty$. Please note that we do not actually find a real point $q_*$ and $p_*$ in the space; instead, we set all the entries corresponding to $p_*$ and $q_*$ in the $(n_1+1)\times (n_2+1)$ distance matrix to be $0$. Computing $\mathcal{W}_{-z_1,-z_2}(P, Q)$ is equivalent to computing $\mathcal{W}(P\cup\{p_*\}, Q\cup\{q_*\})$.

\begin{proof}
First, recall the definition of the Wasserstein distance:
\begin{eqnarray}
\mathcal{W}(P, Q)=\min_{F} \Big(\sum^{n_1}_{i=1}\sum^{n_2}_{j=1}f_{ij}||p_i-q_j||^l\Big)^{\frac{1}{l}} \label{fwd-2}
\end{eqnarray}
For convenience, we also define 
\begin{eqnarray}
\mathcal{W}^F(P, Q)=(\sum^{n_1}_{i=1}\sum^{n_2}_{j=1}f_{ij}||p_i-q_j||^l)^{\frac{1}{l}},
\end{eqnarray}
{\em i.e.,} $\mathcal{W}(P, Q)=\min_{F}\mathcal{W}^F(P, Q)$.
Let 
\begin{eqnarray}
F^*=\arg\min_{F}\mathcal{W}^{F}(P\cup\{p_*\}, Q\cup\{q_*\})
\end{eqnarray}
be the best flows between $P\cup\{p_*\}$ and $Q\cup\{q_*\}$. Since the distance between $p_*$ and $q_*$ is $\infty$, it is clear that the flow between them $f_{**}=0$.

First, for any feasible flow $F$ between $P\cup\{p_*\}$ and $Q\cup\{q_*\}$ satisfying $f_{**}=0$, we can construct a flow 
\begin{eqnarray}
F^{\prime}=\{f_{ij}^{\prime}=f_{ij}\mid 1\leq i\leq n_1, 1\leq j\leq n_2\}
\end{eqnarray}
which is a feasible flow between $\mathbb{P}^P_{-z_1}$ and $\mathbb{P}^Q_{-z_2}$ 
because of 
\begin{eqnarray}
\sum^{n_1}_{i=1} f_{ij}\leq \sum^{n_1}_{i=1} f_{ij}^{\prime} + f_{*j}^{\prime} =w_Q(q_j)
\end{eqnarray}
for any $j$, and 
\begin{eqnarray}
\sum^{n_2}_{j=1} f_{ij} \leq \sum^{n_2}_{j=1} f_{ij}^{\prime} + f_{i*}^{\prime} =w_P(p_i)
\end{eqnarray}
for any $i$, and 
\begin{eqnarray}
\resizebox{.91\linewidth}{!}{$\displaystyle
\sum^{n_1}_{i=1}\sum^{n_2}_{j=1} f_{ij}=\sum^{n_1}_{i=1}\sum^{n_2}_{j=1} f_{ij}^{\prime}=w_{P}(P)-z_1=w_{Q}(Q)-z_2.$}
\end{eqnarray}
Similarly, for any feasible flow $F^{\prime}$ between $\mathbb{P}^P_{-z_1}$ and $\mathbb{P}^Q_{-z_2}$ , we can construct a feasible flow 
\begin{center}
{ $F=\{f_{ij}=f_{ij}^{\prime} \text{ and } f_{i*} = w_{P}(p_i) - \sum_{j=1}^{n_2}f_{ij}^{\prime} \text{ and } f_{*j} = w_{Q}(q_j) - \sum_{i=1}^{n_1}f_{ij}^{\prime} \mid 1\leq i\leq n_1, 1\leq j\leq n_2\}$ }
\end{center}
which is a feasible flow between $P\cup\{p_*\}$ and $Q\cup\{q_*\}$.

Because $||p_i-q_*||=0$,  $||p_*-q_j||=0$, and $f_{ij}=f_{ij}^{\prime}$ for every $i,j$, so we have \begin{eqnarray}
\mathcal{W}^{F}(P\cup\{p_*\}, Q\cup\{q_*\})=\mathcal{W}_{-z_1,-z_2}^{F^{\prime}}(P, Q).
\end{eqnarray}
Since $F^{\prime}$ and $F$ are in one-to-one correspondence, we have 
\begin{eqnarray}
&&\mathcal{W}_{-z_1,-z_2}(P, Q)\nonumber \\
&=&\min_{F} \mathcal{W}^{F}(P\cup\{p_*\}, Q\cup\{q_*\})\nonumber \\
&=&\min_{F^{\prime}} \mathcal{W}_{-z_1,-z_2}^{F^{\prime}}(P, Q)\nonumber \\
&=&\mathcal{W}(P\cup\{p_*\}, Q\cup\{q_*\}). \label{for-rmk-3}
\end{eqnarray}
Therefore, computing $\mathcal{W}_{-z_1,-z_2}(P, Q)$ is equivalent to computing $\mathcal{W}(P\cup\{p_*\}, Q\cup\{q_*\})$.
\end{proof}

\subsection{Formulation of Remark \ref{rem-fixoutlier}}

First, we know that 
$$\mathcal{W}^F(P, Q)^2=\sum^{n_1}_{i=1}\sum^{n_2}_{j=1}f_{ij}||p_i-q_j||^2$$
is a linear function with respect to  $F=\{f_{ij}\mid 1\leq i\leq n_1, 1\leq j\leq n_2\} \in \mathcal{M}(P, Q)$; we use $\mathcal{M}(P, Q)$ to denote the set of feasible flows  from $P$ to $Q$, where each $F \in \mathcal{M}(P, Q)$ satisfies: $\sum^{n_1}_{i=1} f_{ij}=w_Q(q_j)$ for any $j$, and $\sum^{n_2}_{j=1} f_{ij}=w_P(p_i)$ for any $i$. If the support of $S$ be fixed to a given set $G$, the problem of WB is equivalent to
\begin{eqnarray}
&&\min_{w_G}\frac{1}{m}\sum^m_{j=1}\min_{F_j}\mathcal{W}^{F_j}(P_j, G)^2 \nonumber \\
&&\mathrm{s.t.}\text{\;\;\;} F_j \in \mathcal{M}(P_j, G)
\label{for-wblp}
\end{eqnarray}
Clearly, the problem (\ref{for-wblp}) can be solved using linear programming. By referring to (\ref{for-rmk-3}), we know that the fixed-support WB with outliers problem is equivalent to the vanilla fixed-support WB problem with replacing each $P_j$ and $G$ by $P_j\cup\{p_*\}$ and $G \cup \{g_*\}$, respectively. Therefore, the fixed-support WB with outliers problem can also be solved using linear programming.

\subsection{Proof of Claim \ref{cl-m2}}

To prove Claim 3, it suffices to prove the following two arguments: (1) for any feasible solution of the $k$-means clustering that meets the condition that for $1\leq s\leq k$,
\begin{eqnarray}
w _{U _s}(U _s\cap P _1)=w _{U _s}(U _s\cap P _2)=\cdots=w _{U _s}(U _s\cap P _m),\label{for-faircondition}
\end{eqnarray}
we can construct a weighted $k$-point set $S$ such that  $m\mathtt{Cost} _{-z}(\mathbb{P}, S)$ is no greater than the clustering cost. (2) Furthermore, for any weighted $k$-point set $S$, we can construct a clustering result that satisfies the condition (\ref{for-faircondition}), such that the clustering cost is no larger than  $m\mathtt{Cost} _{-z}(\mathbb{P}, S)$.

We prove the  argument (1) first.  We know that the clustering cost is $\sum _{s=1}^{k}\sum _{p\in U _s}w _{U _s}(p)||p-c _s||^2$. We directly set $S=\{c _1,\dots,c _k\}$ with $w _S(c _s)=w _{U _s}(U _s\cap P _j)$ for $1\leq s\leq k$, and simultaneously, we construct a feasible flow $F _j$ from $P _j$ to $S$ as $f^j _{is}=w _{U _s}(p _{ji})$ if $p _{ji}\in U _s$, else $f^j _{is}=0$. We can prove that 
$m\mathtt{Cost} _{-z}(\mathbb{P}, S)\leq\sum _{j=1}^m \sum _{i=1}^{n _j} \sum _{s=1}^k f^j_{is}||p _{ji}-c _s|| = \sum _{s=1}^{k}\sum _{p\in U _s}w _{U _s}(p)||p-c _s||^2$.

Similarly, for each given $S=\{c _1,\dots,c _k\}$, we can obtain the optimal transport flow $F _j=\{f^j_{is}\mid 1\leq i\leq |P_j|, 1\leq s\leq k\}$ induced by $\mathtt{Cost} _{-z}(\mathbb{P}, S)$ for each $1\leq j\leq m$. We can construct a clustering solution $\{U _1,\dots,U _k\}$ satisfying that the center of each $U _s$ is $c _s$, and $w _{U _s}(p _{ji})=f^j_{is}$. Clearly, such a solution conforms to the condition (\ref{for-faircondition}). It is also easy to know that:
$ \sum _{s=1}^{k}\sum _{p\in U _s}w _{U _s}(p)||p-c _s||^2 = \sum _{j=1}^m \sum _{i=1}^{n _j} \sum _{s=1}^k f^j_{is}||p _{ji}-c _s|| = m\mathtt{Cost} _{-z}(\mathbb{P}, S)$. Therefore, the argument (2) is true. 

Overall, computing the optimal solution of $k$-sparse WB with $z$ outliers is equivalent to solving the  $k$-means clustering with $mz$ outliers on $\cup^m _{j=1}P _j$ with the  condition (\ref{for-faircondition}).

\subsection{The Proof of Claim \ref{cl-for-kzmeans}}
For simplicity of the notations, we use ``$P$'' to replace the notation ``$P_j$'' in our following proof. 
Let $P^k_{-z} \in \mathbb{P}^P_{-z}$ represent the optimal inliers of $P$ with respect to the $k$-means clustering with $z$ outliers on $P$. We  have the following lemma and the proof is shown later. 
\begin{lemma}
\label{lm-sparsity}
$|\mathtt{supp}(P\setminus P^{k}_{-z})|\leq\hat{z}$.
\end{lemma}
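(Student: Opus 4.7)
The plan is to fix an optimal clustering $C^{*}=\mathtt{C}^{k}_{-z}(P)=\{c_{1},\dots,c_{k}\}$ that realizes $\mathtt{Mean}^{k}_{-z}(P)$ together with the inliers $P^{k}_{-z}$, and then to analyze how the $z$ units of ``removed'' weight must be distributed across $\mathtt{supp}(P)$. For each $p\in\mathtt{supp}(P)$, write $\delta(p):=w_{P}(p)-w_{P^{k}_{-z}}(p)\ge 0$ and $d(p):=\min_{1\le s\le k}\|p-c_{s}\|^{2}$. With $C^{*}$ fixed, choosing $P^{k}_{-z}$ optimally is equivalent to solving the fractional knapsack
\[
\max \sum_{p\in\mathtt{supp}(P)}\delta(p)\,d(p)\quad\text{s.t.}\quad 0\le\delta(p)\le w_{P}(p),\ \sum_{p}\delta(p)=z.
\]

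First, I would show that among the optimal solutions of this LP there is one in which at most one point is \emph{strictly partially removed}, i.e.\ $|\{p:0<\delta(p)<w_{P}(p)\}|\le 1$. This is the standard structure of fractional knapsack: sort the points by $d(p)$ in decreasing order and assign $\delta$ greedily; only the point at which the budget $z$ is crossed needs a fractional value, and if several points tie at the threshold distance we can freely redistribute mass among them so that all but one become either fully removed ($\delta(p)=w_{P}(p)$) or fully retained ($\delta(p)=0$) without changing the objective. WLOG I would take $P^{k}_{-z}$ to be this canonical optimum.

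Second, I would partition $R:=\mathtt{supp}(P\setminus P^{k}_{-z})=\{p:\delta(p)>0\}$ into $R_{\mathrm{full}}:=\{p:\delta(p)=w_{P}(p)\}$ and $R_{\mathrm{part}}:=\{p:0<\delta(p)<w_{P}(p)\}$, with $|R_{\mathrm{part}}|\le 1$ by the first step. Using $\delta(p)\ge w_{\mathtt{min}}$ on $R_{\mathrm{full}}$ and the budget $\sum_{R}\delta(p)=z$, the proof splits into two cases. If $R_{\mathrm{part}}=\emptyset$, then $|R_{\mathrm{full}}|\cdot w_{\mathtt{min}}\le z$, so $|R|\le z/w_{\mathtt{min}}\le \lceil z/w_{\mathtt{min}}\rceil=\hat{z}$. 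If $|R_{\mathrm{part}}|=1$, then the total weight on $R_{\mathrm{full}}$ is strictly less than $z$, giving $|R_{\mathrm{full}}|\cdot w_{\mathtt{min}}<z$; hence $|R_{\mathrm{full}}|<z/w_{\mathtt{min}}$, which forces $|R_{\mathrm{full}}|\le \lceil z/w_{\mathtt{min}}\rceil-1=\hat{z}-1$ and again $|R|\le\hat{z}$.

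The only non-mechanical step is the canonical-form reduction to ``at most one partial removal'' in the first paragraph; everything else is a short counting argument driven by the lower bound $w_{\mathtt{min}}$ on each removed weight. I would justify the canonical form by a direct exchange argument on points that tie at the threshold distance, which is the standard optimality structure of fractional knapsack and does not interact with the specific form of the $k$-means objective.
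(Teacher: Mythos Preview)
Your proposal is correct and follows essentially the same approach as the paper: fix the optimal centers, argue that among optimal inlier sets there is one with at most a single strictly partially removed point (the paper does this by an explicit exchange argument, which is exactly the fractional-knapsack optimality you invoke), and then count using the lower bound $w_{\mathtt{min}}$. Your case split on $R_{\mathrm{part}}$ is in fact slightly more careful than the paper's final counting step, but the overall strategy is the same.
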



Then we have $\mathtt{Mean}^{k+\hat{z}}(P)\leq \mathtt{Mean}^{k}_{-z}(P)$ for any set $P$, because 
\begin{eqnarray}
&&\mathtt{Mean}^{k+\hat{z}}(P) 
= \mathcal{W}(P, \mathtt{C}^{k+\hat{z}}(P))^2 \nonumber\\
&\leq& \mathcal{W}(P, \mathtt{C}^{k}_{-z}(P)\cup(P\setminus P^{k}_{-z}))^2\nonumber\\
&\leq& \mathcal{W}(P^{k}_{-z}, \mathtt{C}^{k}_{-z}(P))^2 + \mathcal{W}(P\setminus P^{k}_{-z}, P\setminus P^{k}_{-z})^2\nonumber\\
&=& \mathtt{Mean}^{k}_{-z}(P). \label{for-lem-1-1}
\end{eqnarray}
The key is to show the correctness of the second inequality:
if we specify the centers of $P^{k}_{-z}$ as $\mathtt{C}^{k}_{-z}(P)$ and the centers of $P\setminus P^{k}_{-z}$ as $P\setminus P^{k}_{-z}$ (Lemma~\ref{lm-sparsity} tells us that $P\setminus P^{k}_{-z}$ actually can be viewed as at most $\hat{z}$ separate points, where each point can 
 be regarded as a cluster with a single point); the obtained cost is no smaller than the optimal cost of the $(k+\hat{z})$-means clustering on $P$.

\begin{table*}[!ht]
\small
  \centering
  \begin{tabular}{c@{\hspace{5pt}}c@{\hspace{5pt}}c@{\hspace{10pt}}|@{\hspace{10pt}}c@{\hspace{5pt}}c@{\hspace{5pt}}c@{\hspace{5pt}}c@{\hspace{5pt}}c@{\hspace{5pt}}c@{\hspace{5pt}}c@{\hspace{10pt}}|@{\hspace{10pt}}c@{\hspace{5pt}}c@{\hspace{5pt}}c@{\hspace{5pt}}c@{\hspace{5pt}}c@{\hspace{5pt}}c@{\hspace{5pt}}c} 
  \toprule 
    & & & \multicolumn{14}{c}{\textbf{Proportion of Outliers $z/n$}} \\
    & & & \multicolumn{6}{c}{\textbf{Our\_$\mathcal{A}$}}& & \multicolumn{7}{c}{\textbf{Our\_$\mathcal{B}$}} \\
    $m$&$d$&$k$ & $0$   & $0.025$  & $0.05$  & $0.075$ & $0.1$ &$0.125$ &$0.15$& $0$   & $0.025$  & $0.05$  & $0.075$ & $0.1$ &$0.125$ &$0.15$  \\ 
    \cmidrule(r){1-17}
    2 &10 &10 & 1.010 & 1.183 & 1.092 & 1.270 & 1.141 & 1.072 & 1.139 & 1.009 & 1.111 & 1.327 & 1.654 & 1.984 & 2.131 & 3.040 \\
    2 &10 &20 & 1.186 & 1.005 & 1.057 & 1.074 & 1.073 & 1.104 & 1.174 & 0.979 & 1.121 & 1.264 & 1.458 & 1.755 & 2.659 & 2.807 \\
    2 &10 &30 & 1.001 & 1.198 & 1.028 & 1.087 & 1.103 & 1.218 & 1.164 & 1.025 & 1.079 & 1.266 & 1.648 & 1.656 & 2.703 & 2.609 \\
    2 &10 &40 & 1.015 & 1.036 & 1.060 & 1.071 & 1.105 & 1.161 & 1.185 & 0.982 & 1.160 & 1.431 & 1.840 & 2.061 & 2.263 & 2.873 \\
    2 &20 &10 & 1.009 & 1.114 & 1.440 & 1.911 & 1.870 & 1.039 & 1.058 & 1.027 & 1.254 & 1.683 & 2.202 & 2.942 & 3.233 & 3.156 \\
    2 &20 &20 & 1.198 & 1.101 & 1.124 & 1.061 & 1.064 & 1.110 & 1.111 & 1.008 & 1.176 & 1.601 & 2.556 & 3.116 & 3.275 & 3.108 \\
    2 &20 &30 & 1.193 & 1.108 & 1.053 & 1.047 & 1.103 & 1.151 & 1.169 & 1.004 & 1.181 & 1.644 & 2.095 & 2.859 & 3.020 & 3.172 \\
    2 &20 &40 & 1.002 & 1.038 & 1.064 & 1.090 & 1.137 & 1.155 & 1.240 & 1.001 & 1.192 & 1.570 & 2.355 & 2.793 & 2.667 & 3.270 \\
    2 &30 &10 & 1.003 & 1.143 & 1.914 & 2.814 & 2.950 & 1.017 & 1.195 & 1.005 & 1.346 & 2.167 & 2.438 & 3.012 & 2.838 & 2.978 \\
    2 &30 &20 & 1.024 & 1.185 & 2.069 & 1.012 & 1.018 & 1.009 & 1.014 & 0.989 & 1.334 & 2.918 & 3.023 & 2.407 & 2.934 & 2.565 \\
    2 &30 &30 & 1.004 & 1.207 & 1.004 & 1.013 & 1.023 & 1.024 & 1.045 & 1.023 & 1.388 & 2.466 & 2.586 & 3.158 & 3.193 & 3.091 \\
    2 &30 &40 & 1.199 & 1.156 & 1.026 & 1.038 & 1.041 & 1.041 & 1.057 & 0.986 & 1.330 & 3.046 & 3.137 & 3.211 & 2.910 & 3.241 \\
    2 &40 &10 & 1.003 & 1.482 & 2.663 & 3.052 & 3.157 & 1.008 & 1.200 & 1.005 & 1.448 & 3.107 & 2.676 & 2.801 & 3.159 & 3.136 \\
    2 &40 &20 & 1.200 & 1.318 & 3.287 & 1.198 & 1.008 & 1.193 & 1.014 & 1.017 & 1.448 & 2.798 & 3.299 & 2.533 & 2.595 & 2.877 \\
    2 &40 &30 & 1.005 & 1.339 & 1.004 & 1.000 & 1.008 & 1.003 & 1.198 & 0.999 & 1.751 & 3.022 & 2.889 & 2.599 & 3.264 & 3.000 \\
    2 &40 &40 & 1.007 & 2.017 & 1.013 & 1.010 & 1.014 & 1.009 & 1.015 & 0.991 & 2.722 & 3.175 & 3.075 & 3.130 & 3.048 & 2.977 \\
    \cmidrule(r){1-17}
    4 &10 &10 & 1.217 & 1.229 & 1.346 & 1.457 & 1.491 & 1.297 & 1.404 & 1.168 & 1.327 & 1.532 & 1.705 & 1.972 & 2.379 & 2.767 \\
    4 &10 &20 & 1.242 & 1.256 & 1.260 & 1.303 & 1.343 & 1.353 & 1.409 & 1.233 & 1.354 & 1.524 & 1.798 & 2.164 & 2.900 & 2.982 \\
    4 &10 &30 & 1.232 & 1.248 & 1.241 & 1.292 & 1.333 & 1.365 & 1.420 & 1.252 & 1.378 & 1.710 & 1.947 & 2.209 & 2.217 & 2.972 \\
    4 &10 &40 & 1.234 & 1.240 & 1.259 & 1.292 & 1.355 & 1.394 & 1.404 & 1.234 & 1.357 & 1.513 & 2.042 & 2.451 & 2.577 & 3.149 \\
    4 &20 &10 & 1.206 & 1.368 & 1.617 & 2.552 & 2.875 & 1.259 & 1.236 & 1.203 & 1.461 & 1.800 & 2.786 & 3.043 & 3.152 & 2.637 \\
    4 &20 &20 & 1.239 & 1.315 & 1.408 & 1.268 & 1.302 & 1.331 & 1.370 & 1.213 & 1.404 & 1.876 & 2.689 & 2.979 & 2.871 & 2.916 \\
    4 &20 &30 & 1.224 & 1.332 & 1.269 & 1.314 & 1.330 & 1.380 & 1.391 & 1.219 & 1.408 & 1.807 & 2.370 & 3.191 & 2.712 & 2.962 \\
    4 &20 &40 & 1.227 & 1.265 & 1.300 & 1.339 & 1.370 & 1.453 & 1.462 & 1.228 & 1.421 & 2.211 & 2.594 & 2.543 & 3.154 & 3.017 \\
    4 &30 &10 & 1.231 & 1.454 & 2.396 & 2.678 & 1.234 & 1.240 & 1.239 & 1.203 & 1.587 & 2.623 & 2.872 & 3.227 & 2.856 & 2.873 \\
    4 &30 &20 & 1.224 & 1.468 & 1.393 & 1.245 & 1.250 & 1.254 & 1.246 & 1.234 & 1.540 & 2.779 & 2.960 & 2.670 & 2.506 & 2.681 \\
    4 &30 &30 & 1.234 & 1.496 & 1.234 & 1.247 & 1.257 & 1.265 & 1.271 & 1.237 & 1.515 & 2.966 & 2.988 & 2.718 & 3.271 & 2.764 \\
    4 &30 &40 & 1.243 & 1.334 & 1.271 & 1.263 & 1.289 & 1.304 & 1.320 & 1.242 & 2.031 & 3.066 & 3.139 & 3.160 & 2.873 & 2.842 \\
    4 &40 &10 & 1.196 & 1.624 & 3.205 & 3.104 & 2.720 & 1.213 & 1.217 & 1.214 & 1.623 & 2.740 & 3.218 & 2.475 & 3.202 & 2.843 \\
    4 &40 &20 & 1.241 & 1.667 & 2.825 & 1.224 & 1.234 & 1.244 & 1.220 & 1.233 & 1.754 & 3.216 & 3.055 & 2.850 & 2.810 & 2.766 \\
    4 &40 &30 & 1.229 & 1.954 & 1.245 & 1.234 & 1.244 & 1.253 & 1.243 & 1.236 & 1.768 & 2.833 & 3.197 & 2.702 & 2.885 & 3.083 \\
    4 &40 &40 & 1.235 & 1.665 & 1.253 & 1.251 & 1.248 & 1.260 & 1.256 & 1.227 & 2.030 & 3.007 & 2.949 & 2.750 & 3.272 & 2.687 \\
    \cmidrule(r){1-17}
    6 &10 &10 & 1.247 & 1.270 & 1.400 & 1.428 & 1.453 & 1.385 & 1.402 & 1.284 & 1.347 & 1.613 & 1.804 & 2.113 & 2.829 & 2.858 \\
    6 &10 &20 & 1.272 & 1.264 & 1.305 & 1.334 & 1.360 & 1.405 & 1.484 & 1.305 & 1.398 & 1.812 & 2.394 & 2.055 & 2.917 & 3.231 \\
    6 &10 &30 & 1.302 & 1.314 & 1.330 & 1.335 & 1.393 & 1.415 & 1.434 & 1.287 & 1.405 & 1.559 & 2.140 & 2.504 & 2.577 & 3.120 \\
    6 &10 &40 & 1.290 & 1.311 & 1.332 & 1.358 & 1.400 & 1.427 & 1.471 & 1.322 & 1.444 & 1.590 & 2.047 & 2.594 & 2.779 & 3.265 \\
    6 &20 &10 & 1.267 & 1.382 & 1.698 & 2.607 & 1.284 & 1.266 & 1.302 & 1.276 & 1.503 & 1.950 & 2.687 & 3.251 & 3.109 & 2.756 \\
    6 &20 &20 & 1.281 & 1.375 & 1.345 & 1.331 & 1.351 & 1.366 & 1.402 & 1.292 & 1.503 & 1.914 & 3.011 & 2.962 & 3.285 & 2.917 \\
    6 &20 &30 & 1.294 & 1.401 & 1.348 & 1.355 & 1.405 & 1.440 & 1.450 & 1.312 & 1.500 & 1.953 & 3.008 & 3.050 & 3.248 & 2.949 \\
    6 &20 &40 & 1.296 & 1.360 & 1.354 & 1.391 & 1.410 & 1.473 & 1.518 & 1.313 & 1.552 & 1.858 & 2.334 & 2.648 & 3.241 & 3.242 \\
    6 &30 &10 & 1.277 & 1.533 & 2.142 & 3.109 & 2.815 & 1.279 & 1.275 & 1.270 & 1.565 & 2.790 & 3.214 & 3.136 & 3.081 & 3.053 \\
    6 &30 &20 & 1.283 & 1.484 & 1.595 & 1.308 & 1.310 & 1.311 & 1.307 & 1.325 & 1.599 & 3.137 & 2.938 & 2.691 & 2.565 & 3.014 \\
    6 &30 &30 & 1.297 & 1.502 & 1.319 & 1.320 & 1.320 & 1.330 & 1.345 & 1.315 & 1.589 & 3.116 & 3.107 & 3.103 & 2.920 & 3.251 \\
    6 &30 &40 & 1.302 & 1.596 & 1.328 & 1.345 & 1.351 & 1.353 & 1.362 & 1.319 & 1.738 & 3.136 & 3.048 & 3.026 & 2.852 & 3.254 \\
    6 &40 &10 & 1.280 & 1.875 & 2.998 & 3.137 & 3.246 & 1.266 & 1.280 & 1.295 & 1.718 & 3.090 & 3.085 & 2.930 & 3.203 & 3.240 \\
    6 &40 &20 & 1.289 & 1.749 & 2.410 & 1.289 & 1.307 & 1.298 & 1.299 & 1.321 & 1.777 & 2.866 & 2.972 & 3.176 & 3.126 & 2.685 \\
    6 &40 &30 & 1.296 & 1.655 & 1.314 & 1.312 & 1.307 & 1.304 & 1.317 & 1.327 & 1.929 & 3.210 & 2.708 & 2.650 & 2.935 & 2.480 \\
    6 &40 &40 & 1.310 & 1.510 & 1.311 & 1.322 & 1.325 & 1.321 & 1.327 & 1.314 & 1.824 & 3.044 & 3.274 & 3.117 & 2.806 & 2.901 \\
    \bottomrule
\end{tabular}
\caption{The approximation ratios of our algorithm for $m=2$-$6$.}
\label{table:synthetic_sp1}
\end{table*}

\begin{proof}

[\textbf{of Lemma~\ref{lm-sparsity}}]
To prove the above conclusion, it is sufficient to prove the existence of a set $P^k_{-z}$, which is an optimal inliers subset of $P$, such that there exists at most one $p_i\in P$   not satisfying ``$w_{P^k_{-z}}(p_i)=0$ or $w_{P^k_{-z}}(p_i)=w_{P}(p_i)$'' (for convenience, we name the point who does not satisfy this condition as a ``bad point'' ). \textbf{Namely, we want to prove that in $P\setminus P^k_{-z}$, there is at most one bad point.} Suppose this claim is true; then, since the weights of the points in $P\setminus P^k_{-z}$ are at least $w_{\mathtt{min}}$, and the total weight of $w_{P\setminus P^k_{-z}}(P\setminus P^k_{-z})=z$, we have $\mathtt{supp}(P\setminus P^k_{-z}) \leq \lceil z/w_{\mathtt{min}}\rceil =\hat{z}$.

Next, we use a constructive approach to prove the above claim. Suppose there exists an optimal solution $P^k_{-z}$ for $k$-means with outliers, where more than two bad points. Then, we can construct a modified solution $(P^{k}_{-z})^{\prime}$ from $P^k_{-z}$ using a step-by-step adjustment method, such that $(P^{k}_{-z})^{\prime}$ satisfies the given condition.

First, we arbitrarily choose two bad points $p_{i_1}$ and $ p_{i_2}$ from $P^k_{-z}$. Because   $P^k_{-z}$ is an optimal solution for $k$-means with outliers, it is clear that we have: 

\begin{center}
{\em $\mathtt{d}(p_{i_1},\mathtt{C}^{k}_{-z}(P))=\mathtt{d}(p_{i_2},\mathtt{C}^{k}_{-z}(P))$ where $\mathtt{C}^{k}_{-z}(P)$ is the set of optimal cluster centers. }
\end{center}

We  prove the above statement by contradiction. 
We know that  $\mathtt{Mean}^{k}_{-z}(P)=\mathcal{S}(P^k_{-z}, \mathtt{C}^{k}_{-z}(P))$. If $\mathtt{d}(p_{i_1},\mathtt{C}^{k}_{-z}(P)) \neq \mathtt{d}(p_{i_2},\mathtt{C}^{k}_{-z}(P))$, without loss of generality, we assume  $\mathtt{d}(p_{i_1},\mathtt{C}^{k}_{-z}(P)) < \mathtt{d}(p_{i_2},\mathtt{C}^{k}_{-z}(P))$. 
Then we let $\epsilon = \min(w_{P}(p_{i_1}) - w_{P^{k}_{-z}}(p_{i_1}),w_{P^{k}_{-z}}(p_{i_2}))$, and can construct $(P^{k}_{-z})^{\prime}$ satisfying 
$w_{(P^{k}_{-z})^{\prime}}(p_{i_1}) = w_{P^{k}_{-z}}(p_{i_1}) + \epsilon, w_{(P^{k}_{-z})^{\prime}}(p_{i_2}) = w_{P^{k}_{-z}}(p_{i_2}) - \epsilon$ 
and $w_{(P^{k}_{-z})^{\prime}}(p_{i}) = w_{P^{k}_{-z}}(p_{i})$ for $i\neq i_1, i_2$; as a consequence,  
\begin{small}
\begin{eqnarray}
&&\mathcal{S}((P^{k}_{-z})^{\prime}, \mathtt{C}^{k}_{-z}(P)) \nonumber \\
&=& \mathcal{S}(P^k_{-z}, \mathtt{C}^{k}_{-z}(P)) - \epsilon( \mathtt{d}(p_{i_2},\mathtt{C}^{k}_{-z}(P)) - \mathtt{d}(p_{i_1},\mathtt{C}^{k}_{-z}(P))) \nonumber \\
&<& \mathcal{S}(P^k_{-z}, \mathtt{C}^{k}_{-z}(P)).
\end{eqnarray}
\end{small}
This contradicts the assumption that $P^k_{-z}$ is an optimal solution. So $\mathtt{d}(p_{i_1},\mathtt{C}^{k}_{-z}(P)) = \mathtt{d}(p_{i_2},\mathtt{C}^{k}_{-z}(P))$. 
We can apply the same method to construct $(P^{k}_{-z})^{\prime}$, satisfying $w_{(P^{k}_{-z})^{\prime}}(p_{i_1}) = w_{P^{k}_{-z}}(p_{i_1}) + \epsilon, w_{(P^{k}_{-z})^{\prime}}(p_{i_2}) = w_{P^{k}_{-z}}(p_{i_2}) - \epsilon$ 
and $w_{(P^{k}_{-z})^{\prime}}(p_{i}) = w_{P^{k}_{-z}}(p_{i})$ for $i\neq i_1, i_2$. We have
\begin{small}
\begin{eqnarray}
&&\mathcal{S}((P^{k}_{-z})^{\prime}, \mathtt{C}^{k}_{-z}(P)) \nonumber \\
&=& \mathcal{S}(P^k_{-z}, \mathtt{C}^{k}_{-z}(P)) - \epsilon( \mathtt{d}(p_{i_2},\mathtt{C}^{k}_{-z}(P)) - \mathtt{d}(p_{i_1},\mathtt{C}^{k}_{-z}(P))) \nonumber \\
&=& \mathcal{S}(P^k_{-z}, \mathtt{C}^{k}_{-z}(P)) = \mathtt{Mean}^{k}_{-z}(P).
\end{eqnarray}
\end{small}
Therefore, $(P^{k}_{-z})^{\prime}$ is also an optimal solution for the $k$-means with outliers problem. Because of $\epsilon = \min(w_{w_{P}}(p_{i_1}) - w_{P^{k}_{-z}}(p_{i_1}),w_{P^{k}_{-z}}(p_{i_2}))$,  satisfies $w_{(P^{k}_{-z})^{\prime}}(p_{i_2})=0$ or $w_{(P^{k}_{-z})^{\prime}}(p_{i_1})=w_{P}(p_{i_1})$. By this way, the number of bad points is reduced by one. By applying similar adjustments step by step, we can obtain a $P^{k}_{-z}$, such that there is at most one bad point. Then we have $|\mathtt{supp}(P\setminus P^{k}_{-z})|\leq\hat{z}$.
\end{proof}

\begin{table*}[!ht]
\small
  \centering
  \begin{tabular}{c@{\hspace{5pt}}c@{\hspace{5pt}}c@{\hspace{10pt}}|@{\hspace{10pt}}c@{\hspace{5pt}}c@{\hspace{5pt}}c@{\hspace{5pt}}c@{\hspace{5pt}}c@{\hspace{5pt}}c@{\hspace{5pt}}c@{\hspace{10pt}}|@{\hspace{10pt}}c@{\hspace{5pt}}c@{\hspace{5pt}}c@{\hspace{5pt}}c@{\hspace{5pt}}c@{\hspace{5pt}}c@{\hspace{5pt}}c} 
  \toprule 
    & & & \multicolumn{14}{c}{\textbf{Proportion of Outliers $z/n$}} \\
    & & & \multicolumn{6}{c}{\textbf{Our\_$\mathcal{A}$}}& & \multicolumn{7}{c}{\textbf{Our\_$\mathcal{B}$}} \\
    $m$&$d$&$k$ & $0$   & $0.025$  & $0.05$  & $0.075$ & $0.1$ &$0.125$ &$0.15$& $0$   & $0.025$  & $0.05$  & $0.075$ & $0.1$ &$0.125$ &$0.15$  \\ 
    \cmidrule(r){1-17}
    8 &10 &10 & 1.308 & 1.309 & 1.412 & 1.458 & 1.636 & 1.437 & 1.516 & 1.429 & 1.554 & 1.697 & 1.745 & 1.757 & 2.113 & 2.482 \\
    8 &10 &20 & 1.330 & 1.336 & 1.440 & 1.409 & 1.422 & 1.482 & 1.544 & 1.491 & 1.534 & 1.509 & 1.686 & 2.067 & 2.731 & 2.691 \\
    8 &10 &30 & 1.329 & 1.364 & 1.392 & 1.399 & 1.431 & 1.521 & 1.539 & 1.559 & 1.425 & 1.689 & 1.904 & 2.019 & 1.945 & 2.681 \\
    8 &10 &40 & 1.372 & 1.374 & 1.403 & 1.455 & 1.454 & 1.528 & 1.545 & 1.564 & 1.537 & 1.503 & 2.044 & 2.310 & 2.375 & 2.858 \\
    8 &20 &10 & 1.309 & 1.424 & 1.881 & 2.047 & 1.703 & 1.350 & 1.365 & 1.447 & 1.590 & 1.865 & 2.738 & 3.006 & 3.152 & 2.647 \\
    8 &20 &20 & 1.352 & 1.493 & 1.505 & 1.385 & 1.426 & 1.438 & 1.491 & 1.540 & 1.614 & 1.930 & 2.648 & 3.068 & 2.821 & 3.123 \\
    8 &20 &30 & 1.344 & 1.454 & 1.388 & 1.432 & 1.443 & 1.480 & 1.519 & 1.546 & 1.535 & 1.888 & 2.304 & 2.954 & 3.212 & 2.602 \\
    8 &20 &40 & 1.353 & 1.449 & 1.406 & 1.448 & 1.471 & 1.531 & 1.572 & 1.543 & 1.486 & 2.153 & 2.637 & 3.179 & 2.871 & 3.160 \\
    8 &30 &10 & 1.363 & 1.575 & 1.967 & 3.650 & 3.596 & 1.325 & 1.331 & 1.464 & 1.800 & 2.663 & 2.515 & 3.068 & 3.159 & 3.246 \\
    8 &30 &20 & 1.352 & 1.561 & 1.741 & 1.346 & 1.372 & 1.348 & 1.380 & 1.479 & 1.675 & 2.817 & 2.845 & 3.255 & 2.860 & 3.146 \\
    8 &30 &30 & 1.359 & 1.667 & 1.364 & 1.370 & 1.391 & 1.376 & 1.386 & 1.481 & 1.714 & 2.630 & 3.091 & 2.839 & 2.821 & 2.319 \\
    8 &30 &40 & 1.362 & 1.524 & 1.391 & 1.387 & 1.412 & 1.418 & 1.426 & 1.461 & 2.132 & 2.462 & 2.935 & 3.154 & 2.999 & 2.846 \\
    8 &40 &10 & 1.334 & 1.737 & 3.011 & 2.304 & 3.266 & 1.351 & 1.294 & 1.455 & 1.798 & 3.186 & 3.180 & 3.146 & 3.012 & 3.214 \\
    8 &40 &20 & 1.345 & 1.675 & 1.903 & 1.365 & 1.366 & 1.350 & 1.357 & 1.444 & 1.963 & 2.390 & 3.207 & 2.806 & 2.686 & 3.165 \\
    8 &40 &30 & 1.358 & 1.742 & 1.370 & 1.364 & 1.360 & 1.363 & 1.386 & 1.568 & 1.932 & 2.518 & 3.040 & 3.026 & 3.297 & 3.288 \\
    8 &40 &40 & 1.346 & 1.872 & 1.364 & 1.364 & 1.366 & 1.382 & 1.359 & 1.423 & 2.165 & 2.979 & 2.968 & 3.236 & 2.826 & 3.203 \\
    \cmidrule(r){1-17}
    10 &10 &10 & 1.321 & 1.380 & 1.477 & 1.651 & 1.547 & 1.452 & 1.493 & 1.467 & 1.553 & 1.640 & 1.732 & 2.076 & 2.551 & 2.483 \\
    10 &10 &20 & 1.346 & 1.326 & 1.395 & 1.435 & 1.475 & 1.497 & 1.527 & 1.566 & 1.635 & 1.840 & 2.287 & 1.900 & 2.602 & 2.976 \\
    10 &10 &30 & 1.370 & 1.375 & 1.397 & 1.434 & 1.476 & 1.496 & 1.558 & 1.580 & 1.583 & 1.734 & 2.006 & 2.314 & 2.417 & 2.713 \\
    10 &10 &40 & 1.367 & 1.380 & 1.413 & 1.450 & 1.490 & 1.498 & 1.554 & 1.624 & 1.600 & 1.574 & 2.025 & 2.511 & 2.562 & 2.862 \\
    10 &20 &10 & 1.332 & 1.412 & 1.695 & 1.714 & 1.746 & 1.353 & 1.399 & 1.467 & 1.688 & 2.096 & 2.738 & 2.798 & 3.268 & 2.804 \\
    10 &20 &20 & 1.349 & 1.459 & 1.789 & 1.423 & 1.429 & 1.455 & 1.485 & 1.501 & 1.668 & 1.876 & 2.944 & 3.130 & 3.044 & 2.976 \\
    10 &20 &30 & 1.373 & 1.468 & 1.412 & 1.441 & 1.485 & 1.497 & 1.538 & 1.633 & 1.767 & 1.911 & 3.062 & 2.596 & 3.280 & 3.060 \\
    10 &20 &40 & 1.386 & 1.422 & 1.420 & 1.495 & 1.520 & 1.575 & 1.602 & 1.515 & 1.609 & 1.978 & 2.394 & 2.941 & 2.898 & 2.546 \\
    10 &30 &10 & 1.318 & 1.546 & 1.729 & 2.480 & 3.495 & 1.366 & 1.324 & 1.500 & 1.804 & 2.874 & 3.255 & 3.197 & 2.890 & 3.127 \\
    10 &30 &20 & 1.342 & 1.594 & 2.059 & 1.391 & 1.372 & 1.382 & 1.360 & 1.640 & 1.770 & 3.154 & 2.686 & 2.508 & 3.143 & 3.275 \\
    10 &30 &30 & 1.376 & 1.579 & 1.366 & 1.395 & 1.411 & 1.409 & 1.412 & 1.524 & 1.813 & 3.123 & 3.019 & 2.772 & 3.060 & 3.019 \\
    10 &30 &40 & 1.380 & 1.473 & 1.399 & 1.410 & 1.427 & 1.413 & 1.443 & 1.532 & 1.835 & 3.146 & 3.108 & 2.867 & 2.635 & 3.029 \\
    10 &40 &10 & 1.353 & 1.765 & 2.772 & 2.823 & 2.406 & 1.365 & 1.357 & 1.568 & 1.790 & 3.115 & 3.094 & 3.238 & 3.081 & 3.148 \\
    10 &40 &20 & 1.379 & 1.684 & 2.336 & 1.361 & 1.373 & 1.373 & 1.374 & 1.592 & 1.852 & 3.054 & 3.071 & 2.510 & 3.173 & 2.954 \\
    10 &40 &30 & 1.373 & 2.041 & 1.368 & 1.367 & 1.380 & 1.384 & 1.386 & 1.482 & 2.066 & 2.898 & 3.134 & 3.225 & 2.690 & 3.255 \\
    10 &40 &40 & 1.380 & 1.493 & 1.370 & 1.383 & 1.393 & 1.399 & 1.400 & 1.661 & 2.048 & 2.576 & 2.998 & 3.295 & 2.770 & 2.804 \\
    \bottomrule
\end{tabular}
\caption{The approximation ratios of our algorithm for $m=8$-$10$.}
\label{table:synthetic_sp2}
\end{table*}

\section{Full Experimental Results}

\textbf{Hardware description.} We performed our experiments on a machine having the following configuration: CPU: Intel(R) Xeon(R) CPU E5-2680 v4 @ 2.40GHz; Memory: 256 GB.

In this section, we illustrate the practical performance of our algorithms and study the significance of considering outliers for WB. Our experiments contain three parts. Firstly, we conduct the experiments on synthetic datasets, where the positions of the  barycenter supports are predefined, allowing us to compute the exact optimal objective value for measuring the approximation ratio of our algorithm. Secondly, 
we compare our algorithms with several baselines 
on real-world datasets. Finally, we provide the visualized results  on the MNIST dataset~\cite{lecun2010mnist}. 


\textbf{Datasets.} In our synthetic datasets, we set the supports size $k\in[10,40]$ and the dimensionality $d\in[10,40]$; each instance comprises $m \in [2,10]$ different distributions, where each distribution consists of $n=20,000$ points.  The true barycenter supports are uniformly sampled within a hypercube with a side length of 10, and random weights are assigned to each center point. Points are randomly generated within Gaussian balls around the centers based on the assigned weights. We introduce outliers by uniformly sampling $z$ points for each distribution within the cube, with $z$ ranging from $0$ to $0.15\times n$. 

We also select three widely-used datasets from the UCI repository~\cite{UCI}: 
\textbf{Bank}~\cite{bank} ($4,521$ points in $\mathbb{R}^3$) represents the individual telephone calls during a marketing campaign, which
contains the information of the customers.
We have $m=3$  distributions categorized  based on marital status. 
\textbf{Credit card}~\cite{credit_card} ($30,000$ points in $\mathbb{R}^{14}$) includes the information about the credit card holders. We 
partitioned the data into $m=9$ distributions based on marriage and education. 
\textbf{Adult}~\cite{adult} ($32,561$ points in $\mathbb{R}^5$) represents the individual information from the 1994 U.S. Census. We 
partitioned it into $m=10$ distributions based on sex and race.  Finally, $5\%$ random noise are added to each dataset as outliers.

\textbf{Baselines and our implementation.} It is worth noting that there is no method that explicitly addresses $k$-sparse WB with outliers or fair clustering with outliers, to the best of our knowledge. We employ three baselines. First, following Remark~\ref{rem-fixoutlier}, we consider the fixed-support WB with outliers algorithm, utilizing $k$ random centers as support, and compute the optimal weight distribution via LP (denoted as ``Random\_$\mathcal{O}$'').
The other two baselines include a fair clustering algorithm that does not consider outliers (denoted as ``FC\_$\mathcal{O}$'')~\cite{bera2019fair}, and a non-fair clustering method considering outliers ``$k$-means-\,-\_$\mathcal{O}$''~\cite{chawla2013k}.
For the FC algorithm, we identify the farthest points in each class as outliers; for $k$-means-\,-, after obtaining the support positions, a new fair clustering solution can be obtained through LP. Additionally, we also test their three ``plain'' versions that do not discard  outliers, aiming to study the significance of considering outliers (denoted as ``Random'', ``FC'', ``$k$-means-\,-'', respectively).
 

In our implementation, we use the $k$-means++\cite{Arthur2007kmeansTA} as Algorithm $\mathcal{A}$ and $k$-means-\,-\cite{chawla2013k} as Algorithm $\mathcal{B}$; we also employ the LP solver~\cite{gurobi}  as the subroutine for solving fixed-support WB with outliers. To ensure a fair comparison, although Theorem~\ref{the-result1-2} suggests removing $2z$ outliers, we  remove only $z$ outliers in reality. Also, to keep $k$-sparsity for the result returned by $\mathcal{A}$, we only retain the top $k$ centers with the largest cluster sizes. We use ``Our\_$\mathcal{A}$'' and ``Our\_$\mathcal{B}$'' to denote them.

\vspace{0.05in}
\textbf{Results on synthetic datasets.} We compute the optimal cost $\mathtt{Cost}_{\mathtt{opt}}$ for the WB problem by using the pre-specified barycenter support. Subsequently, we execute our algorithm under various parameters to obtain the $\mathtt{Cost}$ and calculate the approximation ratio, defined as $\frac{\mathtt{Cost}}{\mathtt{Cost}_{\mathtt{opt}}}$. The results obtained by Algorithm Our\_$\mathcal{A}$ and Our\_$\mathcal{B}$ is presented in Table \ref{table:synthetic_sp1} and Table \ref{table:synthetic_sp2}. We can see that our algorithm consistently achieves favorable approximation ratios across different dimensions and outlier proportions.

\textbf{Results on real datasets.} 
The results are illustrated in Figure \ref{fig:real_sup}. 
As can be seen, even with only $5\%$ outliers, the plain versions of the three baselines take almost double costs than their counterparts who consider outliers. Moreover, our algorithms demonstrate even lower costs across all the datasets with different values of $k$.

\begin{figure}[H]
  \centering
  \includegraphics[width=0.48\textwidth]{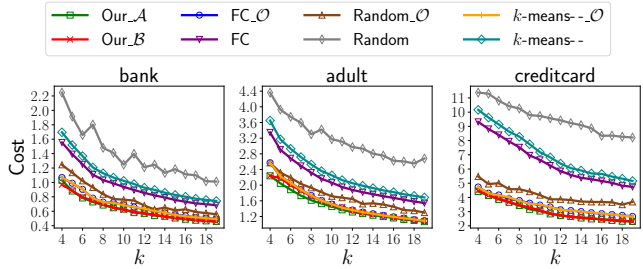}
  \caption{The obtained costs on real datasets.}
  \label{fig:real_sup}
\end{figure}

\textbf{Visualized results.} 
In Figure \ref{fig:mnist_sup} and Figure \ref{fig:mnist_sup2}, we show the $40$-sparse barycenters obtained by Our\_$\mathcal{A}$ and Our\_$\mathcal{B}$ for digit $0$-$9$ in the MNIST dataset, with $2\%$ of outliers removed from each digit. It is evident that the obtained set of $40$ points effectively captures the distinctive features for each digit.

\begin{figure}[H]
  \centering
  \includegraphics[width=0.45\textwidth]{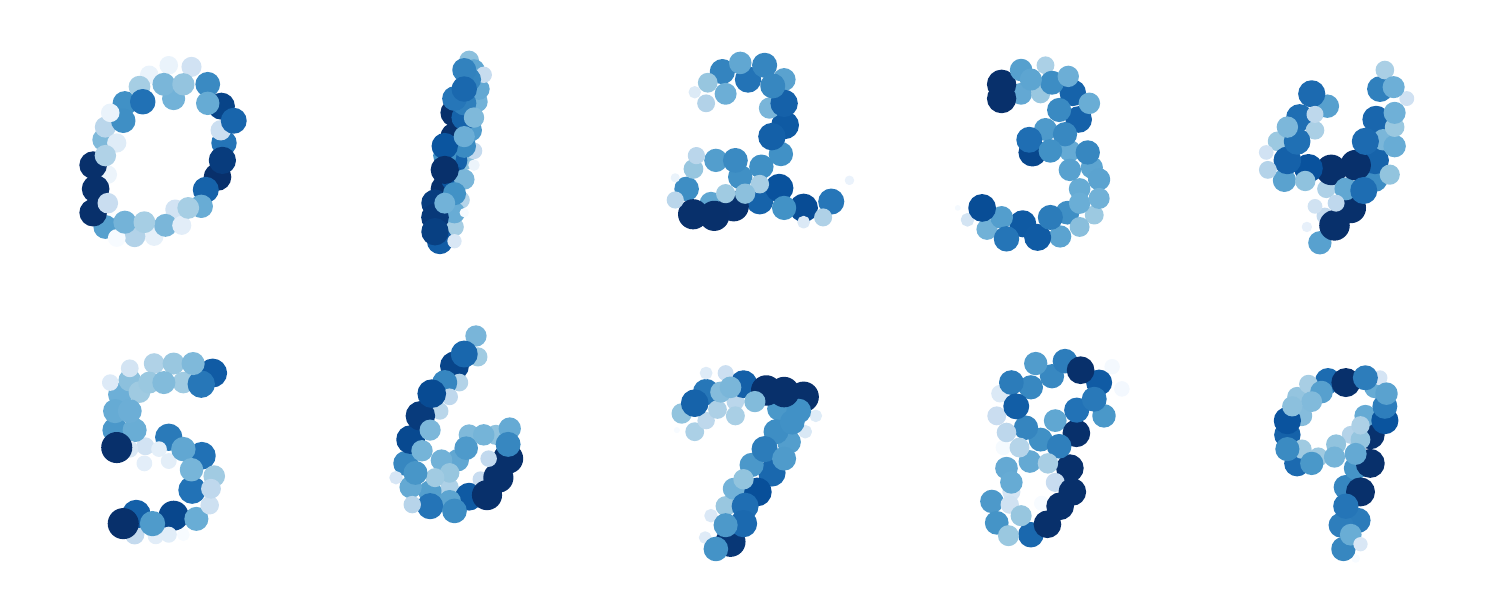}
  \caption{$k$-sparse WB obtained by Our\_$\mathcal{A}$ for $k=40$.}
  \label{fig:mnist_sup}
\end{figure}

\begin{figure}[H]
  \centering
  \includegraphics[width=0.45\textwidth]{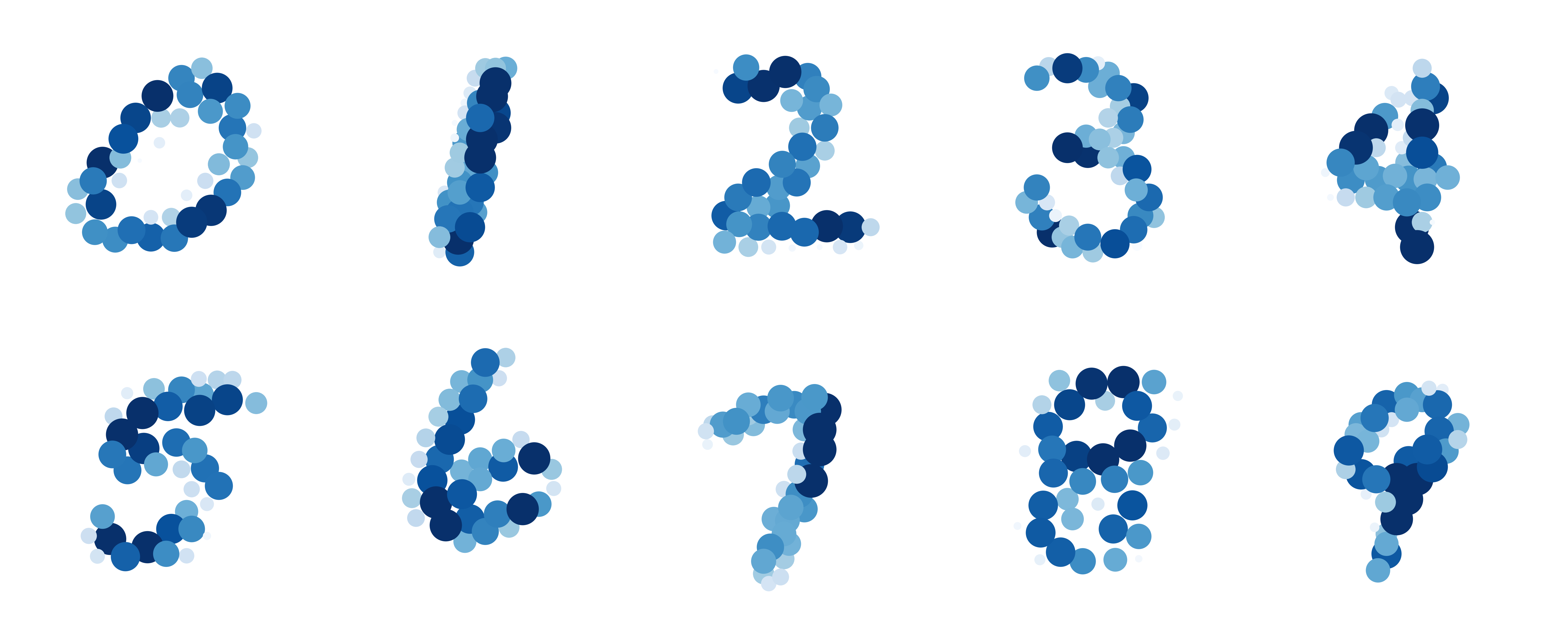}
  \caption{$k$-sparse WB obtained by Our\_$\mathcal{B}$ for $k=40$.}
  \label{fig:mnist_sup2}
\end{figure}

\end{document}